\def\eqref#1{Eqn.~\ref{#1}}
\def\1{\bm{1}}
\def\xi{{x}}
\def\chi{{y}}
\def\vf{{\bm{f}}}
\def\vw{{\bm{w}}}
\def\vx{{\bm{x}}}
\def\vy{{\bm{y}}}
\def\mD{{\bm{D}}}
\def\mF{{\bm{F}}}
\def\mJ{{\bm{J}}}
\def\mK{{\bm{K}}}
\def\mP{{\bm{P}}}
\def\mQ{{\bm{Q}}}
\def\mW{{\bm{W}}}
\def\mX{{\bm{X}}}
\def\mLambda{{\bm{\Lambda}}}
\DeclareMathAlphabet{\mathsfit}{\encodingdefault}{\sfdefault}{m}{sl}
\SetMathAlphabet{\mathsfit}{bold}{\encodingdefault}{\sfdefault}{bx}{n}
\newcommand{\R}{\mathbb{R}}
\newcommand{\sigmoid}{\sigma}
\DeclareMathOperator*{\argmax}{arg\,max}
\newcommand{\lr}{\gamma}
\newcommand{\reg}{\beta}
\def\hatvx{{\hat{\bm{x}}}}
\newcommand{\kernelcnn}{\kappa}
\newcommand{\per}{\mathcal{A}_\rho}
\newcommand{\perstar}{\mathcal{A}^*_\rho}
\newcommand{\elbig}{\mathcal{L}}
\newcommand{\myquad}[1][1]{\hspace*{#1em}\ignorespaces}
\newcommand{\elbigcleancross}{\elbig^{\mathrm{clean}}}
\newcommand{\elbigrobustcross}{\elbig^{\mathrm{robust}}}
\newcommand{\elbigcleanzeroone}
{\elbig^{\mathrm{clean}}_{0-1}}
\newcommand{\elbigrobustzeroone}
{\elbig^{\mathrm{robust}}_{0-1}}
\newcommand{\step}[1]{^{[#1]}}
\newcommand{\lipmax}{C_\mathrm{Lip}}
\newcommand{\lipone}{C_\mathrm{Lip}}
\newcommand{\lipl}{C_\mathrm{Lip}}
\newcommand{\cnnweight}{\mathcal{W}}
\newcommand{\sigmaone}{\sigma_1}
\newcommand{\sigmal}{\sigma_l}
\newcommand{\benchmark}{NAS-RobBench-201}
\providecommand{\norm}[1]{\left\lVert#1\right\rVert}
\theoremstyle{plain}
\newtheorem{theorem}{Theorem}
\newtheorem{corollary}{Corollary}
\newtheorem{assumption}{Assumption}
\newtheorem{lemma}{Lemma}
\theoremstyle{definition}
\newtheorem{definition}{Definition}
\theoremstyle{remark}
\def\ifcomments{\iffalse}                 
\definecolor{mygreen}{RGB}{30, 180, 50}
\newcommand{\grigoris}[1]{\ifcomments\textcolor{teal}{(G: #1)}\fi}
\newcommand{\grig}[1]{\grigoris{#1}}
\providecommand{\rebuttal}[1]{\textcolor{black}{#1}}
\definecolor{ytcolor}{rgb}{1.0, 0.49, 0.0}
\definecolor{fhcolor}{rgb}{0.523, 0.235, 0.625}
\crefname{ineq}{inequ.}{inequ.}
\crefname{equation}{Eq.}{Eqs.}
\crefname{theorem}{Thm.}{Thm.}
\crefname{proposition}{Prop.}{Prop.}
\crefname{claim}{Claim}{Claims}
\crefname{algorithm}{Alg.}{Alg.}
\crefname{definition}{Def.}{Def.}
\crefname{lemma}{Lemma}{Lemmas}
\crefname{appendix}{Appx.}{Appx.}
\crefname{figure}{Fig.}{Fig.}
\crefname{table}{Tab.}{Tab.}
\crefname{section}{Sec.}{Sec.}
\crefname{assumption}{Asm.}{Asm.}
\crefname{ineq}{inequ.}{inequ.}
\crefname{equation}{Eq.}{Eqs.}
\crefname{theorem}{Theorem}{Theorems}
\crefname{proposition}{Proposition}{Propositions}
\crefname{claim}{Claim}{Claims}
\crefname{algorithm}{Algorithm}{Algorithm}
\crefname{definition}{Definition}{Definition}
\crefname{lemma}{Lemma}{Lemmas}
\crefname{appendix}{Appendix}{Appendix}
\crefname{figure}{Figure}{Figures}
\crefname{table}{Table}{Tables}
\crefname{section}{Section}{Section}
\crefname{assumption}{Assumption}{Assumption}
\setlist[enumerate]{leftmargin=10mm, label=\alph*)}
\setlist[itemize]{leftmargin=3mm}
\title{
Robust NAS under adversarial training: benchmark, theory, and beyond
}
\author{Yongtao Wu\\
LIONS, EPFL \\
\texttt{\small yongtao.wu@epfl.ch} 
\And
{Fanghui Liu\thanks{Partially done at LIONS, EPFL.}}\\
{University of Warwick} \\
{\texttt{\small fanghui.liu@warwick.ac.uk} }
\And
Carl-Johann Simon-Gabriel\thanks{Partially done at AWS Lablets.} \\
Mirelo AI\\
\texttt{\small cjsg@mirelo.ai} 
\And
{Grigorios G Chrysos\footnotemark[1]}\\
{University of Wisconsin-Madison} \\
\texttt{\small chrysos@wisc.edu} 
\And
 Volkan Cevher\\
 LIONS, EPFL \\
\texttt{\small volkan.cevher@epfl.ch} 
}
\begin{document}

\maketitle

\begin{abstract}
Recent developments in neural architecture search (NAS) emphasize the significance of considering robust architectures against malicious data. However, there is a notable absence of benchmark evaluations and theoretical guarantees for searching these robust architectures, especially when adversarial training is considered. In this work, we aim to address these two challenges, making twofold contributions. First, we release a comprehensive data set that encompasses both clean accuracy and robust accuracy for a vast array of \emph{adversarially trained} networks from the NAS-Bench-201 search space on image datasets. Then, leveraging the neural tangent kernel (NTK) tool from deep learning theory, we establish a generalization theory for searching architecture in terms of clean accuracy and robust accuracy under multi-objective adversarial training. We firmly believe that our benchmark and theoretical insights will significantly benefit the NAS community through reliable reproducibility, efficient assessment, and theoretical foundation, particularly in the pursuit of robust architectures.
\footnote{Our project page is available at \url{https://tt2408.github.io/nasrobbench201hp}. \grig{This should change to a LIONS' page.}}

\end{abstract}

\section{Introduction}
\label{sec:introduction} 

The success of deep learning can be partly attributed to the expert-designed architectures, e.g., ResNet~\citep{7780459}, Vision Transformer~\citep{dosovitskiy2021an}, and GPT~\citep{brown2020language}, which spurred research in the field of Neural Architecture Search (NAS)~\citep{baker2017designing, zoph2017neural, suganuma2017genetic, real2019regularized, liu2018darts,white2023neural}. The target of NAS is to automate the process of discovering powerful network architectures from a search space using well-designed algorithms. This automated approach holds the promise of unveiling highly effective architectures with promising performance that might have been overlooked in manually crafted designs~\citep{Zela2020Understanding, wangrethinking, ye2022b}.

Nevertheless, merely pursuing architectures with high clean accuracy, as the primary target of NAS, is insufficient due to the vulnerability of neural networks to adversarial attacks, where even small perturbations can have a detrimental impact on performance~\citep{szegedy2013intriguing}. Consequently, there is a growing interest in exploring robust architectures through the lens of NAS~\citep{guo2020meets,mok2021advrush,hosseini2021dsrna,huang2022revisiting}. These approaches aim to discover architectures that exhibit both high performance and robustness under existing adversarial training strategies~\citep{goodfellow2014explaining,madry2018towards}.

When studying the topic of robust neural architecture search, we find that there are some remaining challenges unsolved both empirically and theoretically. From a practical point of view, the accuracy of architecture found by NAS algorithms can be directly evaluated using existing benchmarks through a look-up approach, which significantly facilitates the evolution of the NAS community~\citep{ying2019bench,duan2021transnas,dong2020nasbench201,hirose2021hpo,zela2021surrogate,mehrotra2021nasbenchasr,bansal2022jahs,klyuchnikov2022bench,tu2022bench,bansal2022jahsbench,jung2023neural}.
However, these benchmarks are constructed under standard training, leaving the adversarially trained benchmark missing.
This requirement is urgent within the NAS community because 
1) it would accelerate and standardize the process of delivering robust NAS algorithms~\citep{guo2020meets,mok2021advrush}, since our benchmark can be used as a look-up table; 
2) the ranking of robust architectures shows some inconsistency based on whether adversarial training is involved, as we show in \cref{tab:difference_from_iclrbench}. 
Therefore, one main target of this work is to build a NAS benchmark tailored for adversarial training, which would be beneficial to reliable reproducibility and efficient assessment.

Beyond the need for a benchmark, the theoretical guarantees for architectures obtained through NAS under adversarial training remain elusive. 
Prior literature~\citep{oymak2021generalization,zhu2022generalization} establishes the generalization guarantee of the searched architecture under standard training based on neural tangent kernel (NTK)~\citep{NEURIPS2018_5a4be1fa}.
However, when involving adversarial training, it is unclear
\emph{how to derive NTK under the multi-objective objective with standard training and adversarial training.} 
\emph{Which NTK(s) can be employed to connect and further impact clean accuracy and robust accuracy?} These questions pose an intriguing theoretical challenge to the community as well.

Based on our discussions above, we summarize the contributions and insights as follows:
\begin{itemize}
    \item We release the first adversarially trained NAS benchmark called \emph{\benchmark{}}.
    This benchmark evaluates the robust performance of architectures within the commonly used NAS-Bench-201~\citep{dong2020nasbench201} search space in NAS community, which includes 6,466 unrepeated network architectures.
    107k GPU hours are required to build the benchmark on three datasets (CIFAR-10/100, ImageNet\rebuttal{-16-120}) under adversarial training. The entire results of all architectures are included in the supplementary material and will be public to foster the development of NAS algorithms for robust architecture.
    \item We provide a comprehensive assessment of the benchmark, e.g., the performance of different NAS algorithms, the analysis of selected nodes from top architectures, and the correlation between clean accuracy and robust accuracy. We also test the correlation between various NTK metrics and accuracies, which demonstrates the utility of NTK(s) for architecture search.
    \item 
    We consider a general theoretical analysis framework for the searched architecture under a multi-objective setting, including standard training and adversarial training, from a broad search space, \emph{cf.} \cref{thm:generalizationbound}.
    Our framework allows for fully-connected neural networks (FCNNs) and convolutional neural networks (CNNs) with activation function search, skip connection search, and filter size search (in CNNs).
    Our results indicate that clean accuracy is determined by a joint NTK that includes partly a clean NTK and a robust NTK while the robust accuracy is always influenced by a joint NTK with the robust NTK and its ``twice'' perturbed version\footnote{We call ``clean NTK'' the standard NTK w.r.t clean input while ``robust NTK'' refers to the NTK with adversarial input. The ``twice'' perturbed version refers to the input where adversarial noise is added twice.
    }.
      For a complete theoretical analysis, we provide the estimation of the lower bound of the minimum eigenvalue of such joint NTK, which significantly affects the (robust) generalization performance with guarantees. 
\end{itemize}

By addressing these empirical and theoretical challenges, our benchmark comprehensively evaluates robust architectures under adversarial training for practitioners. Simultaneously, our theory provides a solid foundation for designing robust NAS algorithms. Overall, this work aims to contribute to the advancement of NAS, particularly in the realm of robust architecture search, as well as broader architecture design.

\section{Related work}
In this section, we present a brief summary of the related literature, while a comprehensive overview and discussion of our contributions with respect to prior work are deferred to \cref{sec:appendix_relatedwork}.
\label{sec:related}

{\bf Adversarial example and defense.}
Since the seminal work of~\citet{szegedy2013intriguing} illustrated that neural networks are vulnerable to inputs with small perturbations, several approaches have emerged to defend against such attacks~\citep{goodfellow2014explaining,madry2018towards,xu2017feature,xie2019feature,zhang2019theoretically,Xiao2020Enhancing}. 
Adversarial training~\citep{goodfellow2014explaining,madry2018towards} is one of the most effective defense mechanisms that minimizes the empirical training loss based on the adversarial data, which is obtained by solving a maximum problem.
~\citet{goodfellow2014explaining} use Fast Gradient Sign Method (FGSM), a one-step method, to generate the adversarial data. However, FGSM relies on the linearization of loss around data points and the resulting model is still vulnerable to other more sophisticated adversaries. Multi-step methods are subsequently proposed to further improve the robustness, e.g., multi-step FGSM~\citep{kurakin2018adversarial}, multi-step PGD~\citep{madry2018towards}. 
\rebuttal{
To mitigate the effect of hyper-parameters in PGD and the overestimation of robustness~\citep{athalye2018obfuscated}, ~\citet{croce2020reliable} propose two variants of parameter-free PGD attack, namely, $\text{APGD}_\text{CE}$ and $\text{APGD}_\text{DLR}$, where CE stands for cross-entropy loss and DLR indicates difference of logits ratio (DLR) loss. Both $\text{APGD}_\text{CE}$ and $\text{APGD}_\text{DLR}$ attacks dynamically adapt the step-size of PGD based on the loss at each step. Furthermore, to enhance the diversity of robust evaluation, ~\citet{croce2020reliable} introduce Auto-attack, which is the integration of
$\text{APGD}_\text{CE}$, $\text{APGD}_\text{DLR}$, Adaptive Boundary Attack (FAB)~\citep{croce2020minimally}, and black-box Square Attack~\citep{andriushchenko2020square}.
}
Other methods of generating adversarial examples include L-BFGS~\citep{szegedy2013intriguing},
C$\&$W attack~\citep{carlini2017towards}.

{\bf NAS and benchmarks.}
Over the years, significant strides have been made towards developing NAS algorithms from various perspectives, e.g., differentiable search with weight sharing~\citep{liu2018darts,Zela2020Understanding,ye2022b}, NTK-based methods~\citep{chenneural,xu2021knas,mok2022demystifying,zhu2022generalization}.
Most recent work on NAS for robust architecture belongs to the first category~\citep{guo2020meets,mok2021advrush}.

Along with the evolution of NAS algorithms, the development of NAS benchmarks is also important for an efficient and fair comparison. Regarding clean accuracy, several tabular benchmarks, e.g., NAS-Bench-101~\citep{ying2019bench}, TransNAS-Bench-101~\citep{duan2021transnas}, NAS-Bench-201~\citep{dong2020nasbench201} and NAS-Bench-301~\citep{zela2021surrogate} have been proposed that include the performance under standard training on image classification tasks. More recently, \citet{jung2023neural} extend NAS-Bench-201 towards robustness by evaluating the performance in NAS-Bench-201 space in terms of various attacks. However, all of these benchmarks are under standard training, which motivates us to release the first NAS benchmark that contains the performance under adversarial training.

{\bf NTK.}
 Originally proposed by \citet{NEURIPS2018_5a4be1fa}, NTK connects the training dynamics of over-parameterized neural networks to kernel regression. In theory, NTK provides a tractable analysis for several phenomena in deep learning. For example, the generalization guarantee of over-parameterized FCNN under standard training has been established in~\citet{cao2019generalization,zhu2022generalization}. 
In this work, we extend the scope of NTK-based generalization guarantee to multi-objective training, which covers the case of both standard training and adversarial training.

\begin{figure}[!t]
\centering\includegraphics[width=1\textwidth]{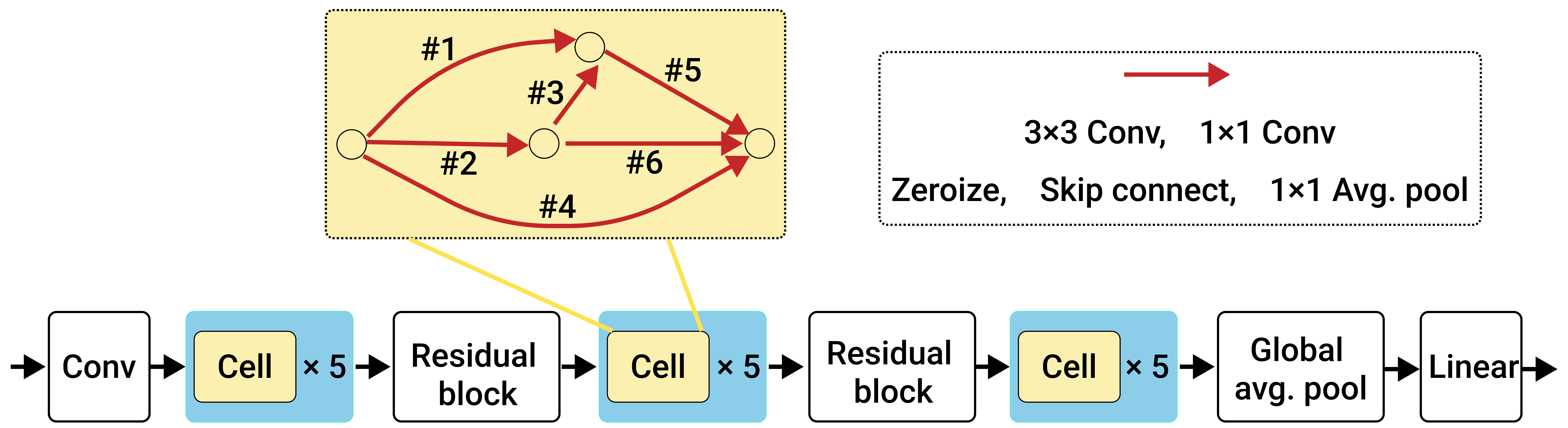}
\vspace{-6mm}
\caption{
Visualization of the NAS-Bench-201 search space. \textbf{Top left}: A neural cell with 4 nodes and 6 edges. \textbf {Top right}: 5 predefined operations
that can be selected as the edge in the cell. \textbf{Bottom}: Macro structure of each candidate architecture in the benchmark.
}
\label{fig:nas201bench}
\end{figure}

 \begin{figure}[!t]
    \vspace{-3mm}
\centering\includegraphics[width=0.9\textwidth]{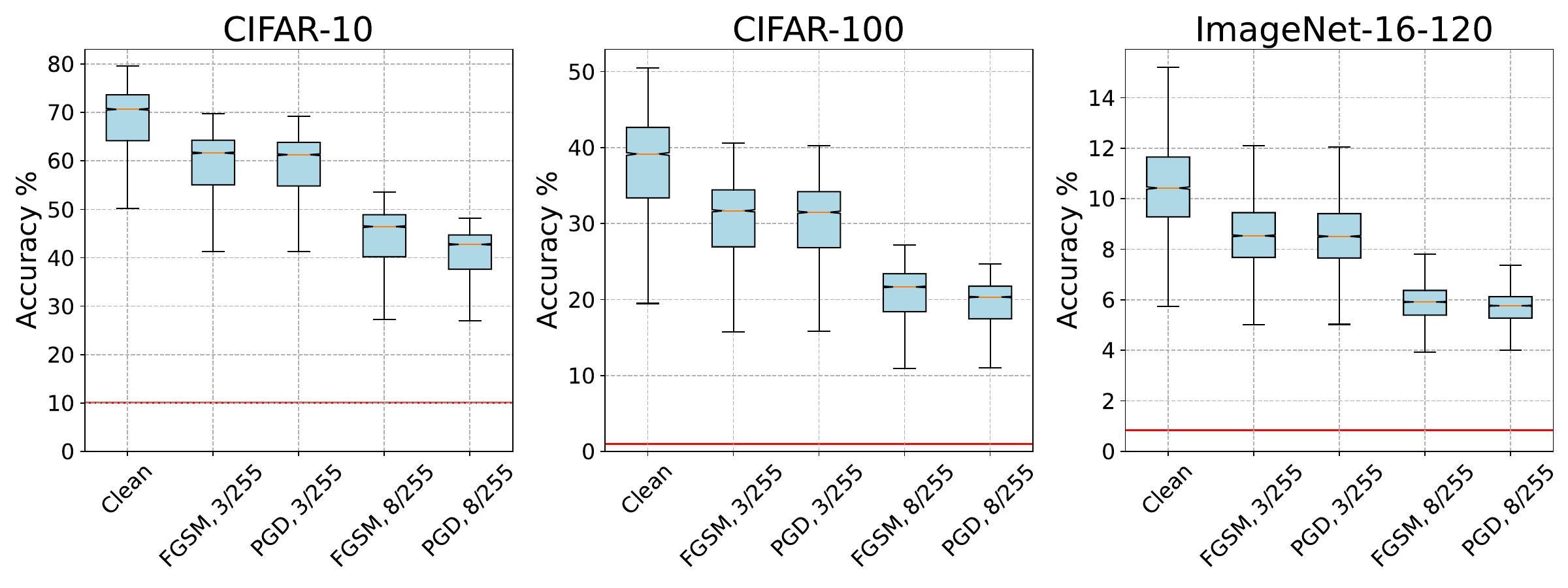}
\caption{
Boxplots for both clean and robust accuracy of all $6466$ non-isomorphic architectures in the considered search space. Red line indicates the accuracy of a random guess.
}
\label{fig:box}
\end{figure}

\section{\benchmark{}}
In this section, we first describe the construction of the benchmark, including details on search space, datasets, training setup, and evaluation metrics. Next, we present a comprehensive statistical analysis of the built benchmark.
More details can be found in the supplementary.
\subsection{Benchmark Construction}
\label{sec:construction}

\textbf{Search space.} We construct our benchmark based on a commonly-used cell-based search space in the NAS community: NAS-Bench-201~\citep{dong2020nasbench201}, which consists of $6$ edges and $5$ operators as depicted in \cref{fig:nas201bench}. Each edge can be selected from the following operator: $\{ \text{3 × 3 Convolution, 1 × 1 Convolution, Zeroize, Skip connect,  1 × 1 Average pooling}\}$, which results in $5^6=15625$ architectures while only $6466$ architectures are non-isomorphic. Therefore, we only train and evaluate $6466$ architectures.

 \begin{figure*}[!tb]
    \subfloat[Distribution of accuracy.]{ \label{fig:cifar10acc}\includegraphics[width=0.33\textwidth]{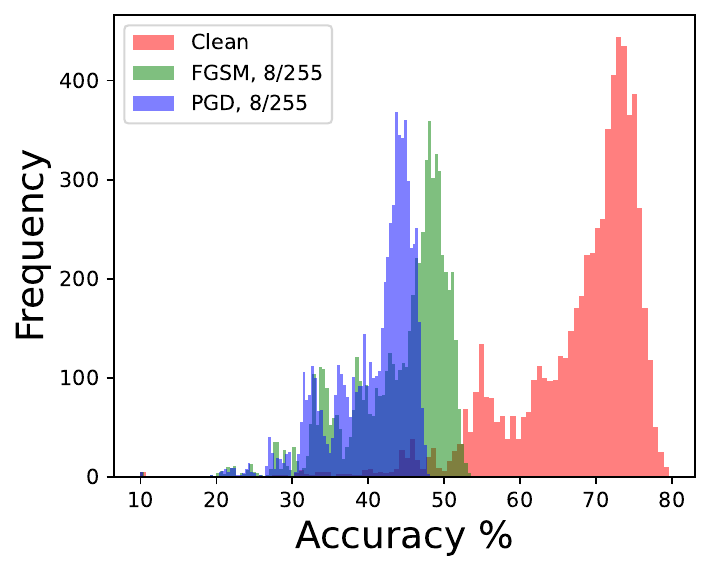}}
    \subfloat[Ranking of different metrics.]{\label{fig:cifar10ranking}\includegraphics[width=0.32\textwidth]{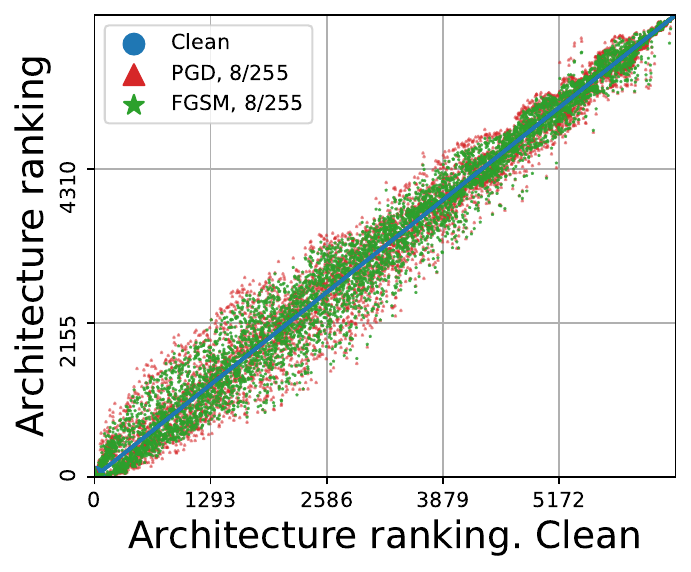}}
    \subfloat[
     Ranking across datasets.
    ]{\label{fig:rankcifar10cifar100}\includegraphics[width=0.32\textwidth]{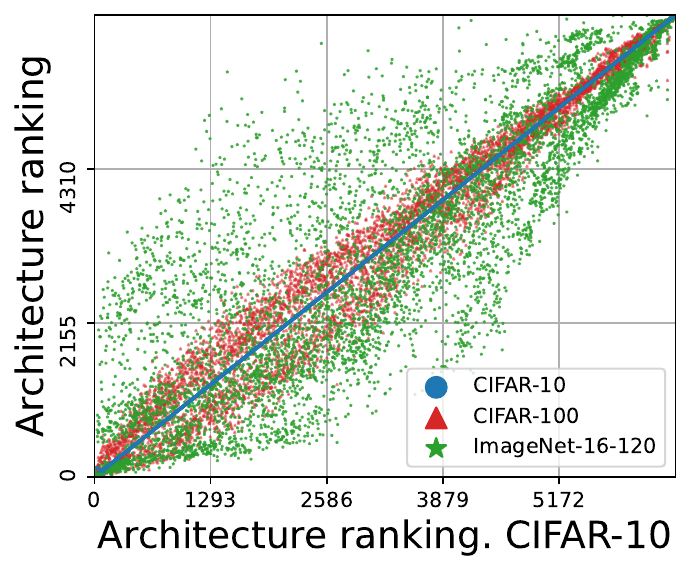}}
    \vspace{-2mm}
\caption{
 (a) Distribution of accuracy on CIFAR-10. \rebuttal{The peak in the distribution of clean accuracy is much more notable than that of FGSM and PGD.} (b) The architecture ranking on CIFAR-10
 \rebuttal{sorted by robust metric and clean metric correlate well for lower ranking (see larger x-axis) but there still exists a difference for higher ranking. Both (a) and (b) motivate the NAS for robust architecture in terms of robust accuracy instead of clean accuracy.}
    (c) Architecture ranking of average robust accuracy on 3 datasets, sorted by the average robust accuracy on CIFAR-10. 
\rebuttal{The architectures present similar performance across different datasets, which motivates transferable NAS under adversarial training.
}
    }
\end{figure*}
%
 \begin{table}[t!]
\caption{Spearman coefficient between various accuracies on CIFAR-10 on \benchmark{} and the benchmark of~\citet{jung2023neural}. Specifically, the \textbf{first three columns/rows} (without *) indicate the clean, PGD (8/255), and FGSM (8/255) accuracies in \benchmark{}, while the \textbf{last three columns/rows} (with *) indicate the corresponding accuracies in \citet{jung2023neural}.
}
\vspace{-1mm}
\centering
{
\begin{tabular}{||c || c c c| c c c ||} 
 \hline
  & Clean & PGD  & FGSM  
  & Clean* & PGD*& FGSM* \\ [0.5ex] 
 \hline\hline
 Clean& 1.000 & 0.985 & 0.989 & 0.977 & 0.313 & 0.898\\
 PGD &0.985 & 1.000 & 0.998 & 0.970 & 0.382 & 0.937\\
FGSM &0.989 & 0.998 & 1.000 & 0.974 & 0.371 & 0.931\\
\hline
Clean*&0.977 & 0.970 & 0.974 & 1.000 & 0.322 & 0.891\\
PGD* &0.313 & 0.382 & 0.371 & 0.322 & 1.000 & 0.487\\
FGSM*&0.898 & 0.937 & 0.931 & 0.891 & 0.487 & 1.000\\
 [1ex] 
 \hline
\end{tabular}
\label{tab:difference_from_iclrbench}
}
\vspace{-4mm}
\end{table}
\textbf{Dataset.}
We evaluate each network architecture on (a) CIFAR-10~\citep{krizhevsky2009learning}, (b) CIFAR-100~\citep{krizhevsky2009learning}, and (c) ImageNet-16-120~\citep{chrabaszcz2017downsampled}. Both CIFAR-10 and CIFAR-100 contain $60,000$ RGB images of resolution $32 \times 32$, where $50,000$ images are in the training set and $10,000$ images are in the test set. Each image in CIFAR-10 is annotated with $1$ out of the $10$ categories while there are $100$ categories in CIFAR-100. ImageNet-16-120 is a variant of ImageNet that down-samples the images to resolution $16 \times 16$ and selects images within the classes $[1, 120] $. 
In total, there are $151,700$ images in the training set and $6,000$ images in the test set.
Data augmentation is applied for the training set. On CIFAR-10 and CIFAR-100, we apply a random crop for the original patch with 4 pixels padding on the border, a random flip with a probability of $0.5$, and standard normalization. On ImageNet-16-120, we apply similar augmentations via random crop with 2 pixels padding.
\rebuttal{The data is normalized before the attack, following the setup in \citet{zhang2019theoretically,rice2020overfitting}.}

\textbf{Training procedure.} 
We adopt a standard adversarial training setup via mini-batch SGD with a step-size of $0.05$, momentum of $0.9$, weight decay of $10^{-4}$, and batch size of $256$. We train each network for $50$ epochs where one-cycle step-size schedule with maximum step-size $0.1$ \citep{smith2019super}, which is proposed for faster convergence. Each run is repeated for $3$ different seeds.
Regarding the adversarial attack during training, we follow a common setting, i.e., $7$ steps of projected gradient descent (PGD) with step-size $2/255$ and perturbation radius $\rho = 8/255$~\citep{madry2018towards}.
 {We apply the same setup for all of the aforementioned datasets.}
In total, we adversarially train and evaluate $6466 \times 3 \times 3 \approx $ \emph{58k architectures} by a number of NVIDIA T4 Tensor Core GPUs.
One seed for one dataset consumes approximately 34 hours on 350 GPUs. 
Consequently, 3 datasets and 3 seeds take around $34 \times 350 \times 3 \times 3  \approx $ \emph{107k GPU hours}.

\textbf{Evaluation metrics.}
We evaluate the clean accuracy and robust accuracy of each architecture after training. Specifically, we measure the robust accuracy based on fast
gradient sign method (FGSM)~\citep{goodfellow2014explaining} and PGD attack with $\ell_{\infty}$ constraint attack with perturbation radius $\rho \in \{3/255, 8/255\}$. For PGD attack, we adopt $20$ steps with step-size $2.5 \times \rho / 20$. Additionally, we evaluate each architecture under AutoAttack with perturbation radius $\rho=8/255$.
\subsection{Statistics of the benchmark}

\begin{figure}[!t]
\centering\includegraphics[width=1\textwidth]{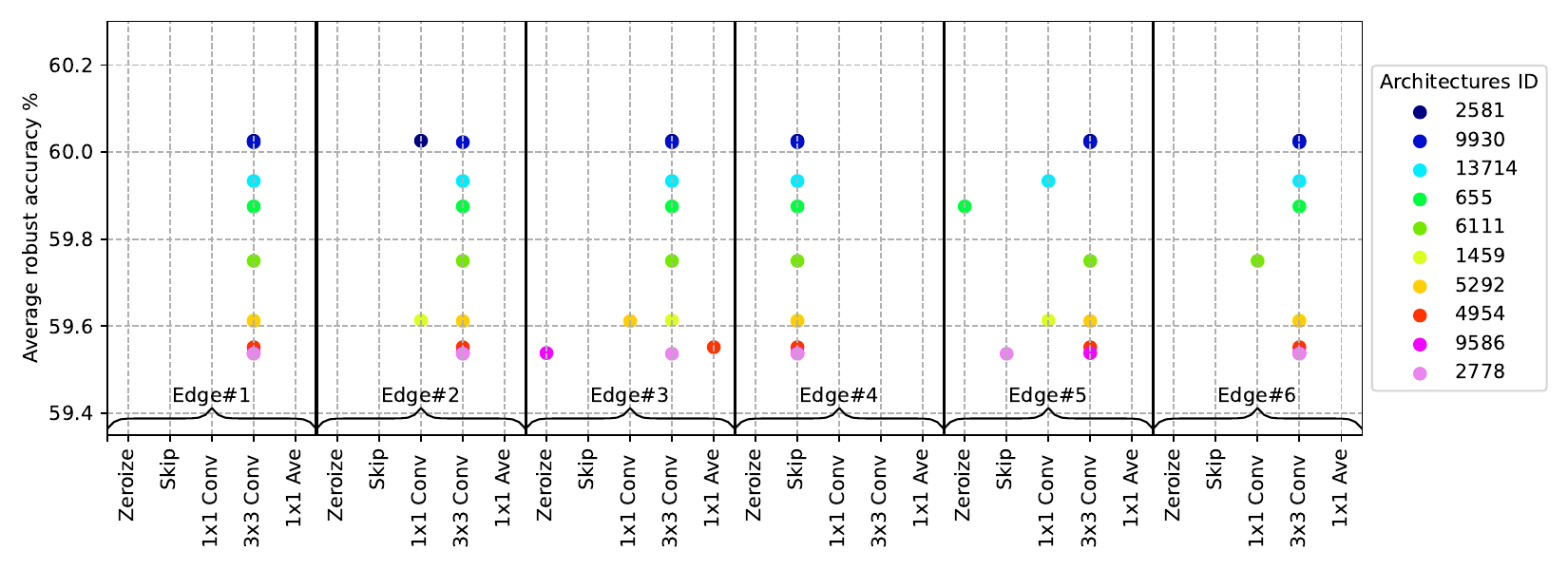}
\caption{
The operators of each edge in the top-10 architectures (average robust accuracy) on \benchmark{}.
 The definition of edge number ($\#1  \sim \#6$) and operators are illustrated in \cref{fig:nas201bench}. 
 }
\label{fig:toparch}
\end{figure}

\begin{table*}[t]
    \caption{
Result of different NAS algorithms on the proposed \benchmark.
\rebuttal{``Optimal" refers to the architecture with the highest average robust accuracy among the benchmark.
``Attack scheme" in the first column indicates how the accuracy is measured during the searching phase of these baseline methods.
}
}
\vspace{-2mm}
\label{tab:nas_mainbody}
    \centering
     \footnotesize
    \begin{tabular}{c|c|ccccc}
    \toprule
      \makecell{Attack scheme} 
            &  
\makecell{Method} 
              &  
   \makecell{Clean}
              &  
 \makecell{FGSM\\(3/255)}
              &  
    \makecell{PGD\\(3/255)}
              &  
 \makecell{FGSM\\(8/255)}
              &  
 \makecell{PGD\\(8/255)}
       \\
\midrule 
 -& Optimal&  0.794&0.698&0.692&0.537&0.482 \\ 
\midrule
\multirow{3}{*}{Clean}
&Regularized Evolution  &0.791&0.693&0.688&0.530&0.476
\\
&Random Search  &0.779&0.682&0.676&0.520&0.470
\\
& Local Search   &0.796&0.697&0.692&0.533&0.478
\\
\midrule
\multirow{3}{*}{ FGSM  ($8/255$)  }
&Regularized Evolution  &0.790&0.693&0.688&0.532&0.478
\\
&Random Search  &0.774&0.679&0.674&0.521&0.471
\\
& Local Search   &0.794&0.695&0.689&0.535&0.481
\\
\midrule
\multirow{3}{*}{ PGD ($8/255$)}
&Regularized Evolution  &0.789&0.692&0.686&0.531&0.478
\\
&Random Search  &0.771&0.676&0.671&0.520&0.471
\\
& Local Search   &0.794&0.695&0.689&0.535&0.481
\\
\bottomrule
    \end{tabular}
    \label{tab:main_label}
\end{table*}
\textbf{Overall preview of the benchmark.} In \cref{fig:box}, we show the boxplots of the clean accuracy and robust accuracy of all architectures in the search space, respectively. Notice that there exists a non-negligible gap between the performance of different architectures, e.g., ranging from $40\% \sim  70 \%$ accuracy under FGSM attack on CIFAR-10. Therefore, designing the architecture holds immense significance given the wide spectrum of achievable accuracy values. In \cref{fig:cifar10acc}, we plot the distribution of the clean accuracy and robust accuracy on CIFAR-10 in the proposed~\benchmark. We observe that the distribution of FGSM accuracy and PGD accuracy is similar while the peak in the distribution of clean accuracy is much more notable than that of FGSM and PGD. Overall, the architecture ranking sorted by robust metric and clean metric correlate well, as shown in \cref{fig:cifar10ranking}.

\textbf{Effect of operators selection on robustness.} To see the impact of operators in robust architecture design, in \cref{fig:toparch}, we present the selected operators at each edge of the top-10 architectures in  CIFAR-10 dataset. The top-10 criterion is the average robust accuracy, which is the mean of all robust metrics mentioned in \cref{sec:construction}. Overall, these top architectures have similar operator selections at each edge. We can see there exists a frequently selected operator for each edge. For instance,  the $3 \times 3$ convolution operation appears to be the best choice for the majority of edges, except for edge 4, where the skip-connection operation demonstrates its optimality.

\textbf{Architecture ranking across different datasets.}
 \cref{fig:rankcifar10cifar100} depicts the architecture ranking based on the average robust accuracy across
  CIFAR-10/100 and ImageNet-16. The result reveals a high correlation across various datasets, thereby inspiring the exploration of 
 searching on a smaller dataset to find a robust architecture for larger datasets. 
 
\textbf{Comparison with the existing benchmark \citep{jung2023neural}.}
The closest benchmark to ours is built by \citet{jung2023neural}, where the robust evaluation of each architecture under standard training is given. 
In \cref{tab:difference_from_iclrbench}, we present the Spearman coefficient among different accuracies. The observation underscores the significance of adversarial training within our benchmark. Specifically, it suggests that employing \citet{jung2023neural} to identify a resilient architecture may not guarantee its robustness under adversarial training. Moreover, in \cref{fig:toparch_iclr} of the appendix, we can see a notable difference between these two benchmarks in terms of top architectures id and selected nodes.

\textbf{NAS algorithms on the benchmark.}
\label{sec:appendix_nasalgo}
Let us illustrate how to use the proposed \benchmark{} for NAS algorithms. As an example, we test several NAS algorithms on CIFAR-10. The search-based NAS algorithms include regularized evolution~\citep{real2019regularized}, local search \citep{white2021exploring} and random search~\citep{li2020random}. We run each algorithm $100$ times with $150$ queries and report the mean accuracy in \cref{tab:nas_mainbody}. We can observe that by searching the robust metrics, e.g., FGSM and PGD, these methods are able to find a more robust architecture than using clean accuracy metrics. Additionally, local search performs better than other methods. Evaluation results on more NAS approaches including differentiable search approaches~\citep{liu2018darts,mok2021advrush} and train-free approaches~\citep{xu2021knas,zhu2022generalization,shu2021nasi,mellor2021neural} are deferred to \cref{tab:main_label_appendix} 
of the appendix.

\section{Robust generalization guarantees under NAS}

Till now, we have built \benchmark{} to search robust architectures under adversarial training. To expedite the search process, NTK-based NAS algorithms allow for a train-free style in which neural architectures can be initially screened based on the minimum eigenvalue of NTK as well as its variants, e.g., \citet{chenneural,xu2021knas,mok2022demystifying,zhu2022generalization}.
Admittedly, it's worth noting that these methods build upon early analysis on FCNNs under standard training within the learning theory community~\citep{du2018gradient,cao2019generalization,lee2019wide}. 
Consequently, our subsequent theoretical results aim to provide a solid groundwork for the development of NTK-based robust NAS algorithms. 
Below, the problem setting is introduced in \cref{sec:problemsetting}. The (robust) generalization guarantees of FCNNs
as well as CNNs are present in \cref{sec:theoreticalresult}. Detailed definitions for the notations can be found in \cref{sec:appendix_symbol}.

\subsection{Problem setting}
\label{sec:problemsetting}

We consider a search space suitable for residual FCNNs and residual CNNs.
To be specific, the class of $L$-layer ($L \ge 3$) residual FCNN with input $\vx \in \R^{d}$ and output $f(\vx, \mW) \in \R $ is defined as follows:
\begin{equation}
\begin{split}
&    
    \bm{f}_1
    = \sigma_1(\mW_1 \vx)\,,
\qquad 
    \bm{f}_l
    = \frac{1}{L} \sigmal( \mW_l \bm{f}_{l-1}
    ) + \alpha_{l-1}\bm{f}_{l-1}
    ,\quad 2\leq l \leq L-1,
\\&
    f_L
    = \left \langle \vw_L, \bm{f}_{L-1}
    \right \rangle\,,
\quad 
    f(\vx, \mW) = f_L\,,
\end{split}
\label{eq:network}
\end{equation}
where $\mW_1 \in \R^{m \times d} $, $\mW_l \in \R^{m\times m} $, $l = 2,\dots ,L-1$ and $\vw_L \in \R^{m}$ are learnable weights under Gaussian initialization i.e., 
$\bm{W}_l^{(i,j)} \sim \mathcal{N}( 0, 1/m)$,
for $l \in [L]$. This is the typical NTK initialization to ensure the convergence of the NTK~\citep{allen2019convergence,zhu2022generalization}.
We use $\mW := (
\mW_1, \dots , \vw_L) \in \mathcal{W}
$ to represent the collection of all weights. The binary parameter $\alpha_l \in \{0,1 \}$ determines whether a skip connection is used
and $\sigmal(\cdot)$ represents an activation function in the $l^{\text{th}}$ layer. Similarly, we define $L$-layer ($L \ge 3$) residual CNNs with input $\mX \in \R^{d \times p}$, where $d$ denotes the input channels and $p$ represents the pixels, and output $f(\mX, \mW) \in \R $ as follows:
\begin{equation}
\begin{split}
&  \quad  
    \mF_1
    = \sigmaone(\bm {\cnnweight}_1 * \mX)\,,
\qquad 
    \mF_l
    = \frac{1}{L} \sigmal( \bm {\cnnweight}_l * \mF_{l-1}
    ) + \alpha_{l-1}\mF_{l-1}
    ,\quad 2\leq l \leq L-1, \quad
\\&
    \quad  
    \mF_L
    = \left \langle \mW_L,\mF_{L-1}
    \right \rangle\,, \quad  \quad \,  
   f(\mX, \mW) =  \mF_L\,,
\end{split}
\label{eq:networkcnn}
\end{equation}
where $\bm{\cnnweight}_1 \in \R^{\kernelcnn\times m  \times d }, \bm{\cnnweight}_l \in \R^{\kernelcnn\times m  \times m }$, $l = 2,\dots ,L-1$,  and $\mW_L \in \mathbb{R}^{m \times p}$ are learnable weights with $m$ channels  and $\kernelcnn$ filter size. We define the convolutional operator between $\mX$ and ${\bm{\cnnweight}}_1$ as:
$$(\bm {\cnnweight}_1 * \mX)^{(i,j)} = \sum_{u=1}^\kernelcnn \sum_{v=1}^d \bm \cnnweight_1^{(u, i, v)} \bm X^{(v, j+u-\frac{\kernelcnn+1}{2})}, \text{for } i \in [m], j \in [p]\,,
$$
where we use zero-padding, i.e., $\bm X^{(v,c)} = 0$ for $c <1$ or $c>p$.

Firstly, we introduce the following two standard assumptions to establish the theoretical result.
\begin{assumption}[Normalization of inputs]
\label{assumption:distribution_1}
We assume the input space of FCNN is: $\mathcal{X} \subseteq 
\{\vx\in \R^{d}: \norm{\vx}_2 = 1\} \,.$ Similarly, the input space of CNN is: $\mathcal{X} \subseteq 
\{\mX \in \R^{d\times p}: \norm{\mX}_\mathrm{F} = 1\} $.
\end{assumption}

\begin{assumption}[Lipschitz of activation functions]
\label{assumption:activation}
We assume there exist two positive constants $C_{l,\mathrm{Lip}}$ and  $\lipmax$ such that	$|\sigmal(0)|\le C_{l,\mathrm{Lip}} \leq \lipmax$ with $l \in [L]$ and for any $z,z' \in \R$:
$$
|\sigmal(z) -\sigmal(z')|\le C_{l,\mathrm{Lip}}|z-z'| \le \lipmax |z-z'| \,,
$$
where $\lipmax$ is the maximum value of the Lipschitz constants of all activation functions in the networks.
\end{assumption}
\textbf{Remarks:}
\textit{a)}
The first assumption is standard in deep learning theory and attainable in practice as we can always scale the input~\citep{zhang2020over}.
\textit{b)}  The second assumption covers a range of commonly used activation functions, e.g., ReLU, LeakyReLU, or sigmoid~\citep{du2019gradient}.

Based on our description, the {\bf search space} for the class of FCNNs and CNNs includes: \textit{a)} Activation function search: any activation function $\sigmoid_l$ that satisfies \cref{assumption:activation}.
\textit{b)} Skip connection search: whether $\alpha_l$ is zero or one.
\textit{c)} Convolutional filter size search: the value of $\kappa$.

Below we describe the adversarial training of FCNN while the corresponding one for CNN can be defined in the same way by changing the input as $\mX$.
\begin{definition}[$\rho$-Bounded adversary]
\label{definition:adversary}
An adversary $\per: \mathcal{X}\times \R \times \mathcal{W} \rightarrow \mathcal{X}$ is $\rho$-bounded for $\rho>0$ if it satisfies:
$\per(\vx, y, \mW)\in\hat{\mathcal{B}}(\vx,\rho)$.
We denote by $\perstar$  the worst-case $\rho$-bounded adversary given a loss function $\ell$:
$    \perstar(\vx, y, \mW):=\argmax_{ \hatvx\in
\hat{\mathcal{B}}(\vx,\rho)} \ell(y f(\hatvx,\mW))$.
\end{definition}

For notational simplicity, we simplify the above notations as $\per(\vx, \mW)$ and $\perstar(\vx, \mW)$ by omitting the label $y$.
Without loss of generality, we restrict our input $\vx$ as well as 
its perturbation set $\hat{\mathcal{B}}(\vx,\rho)$ within the surface of the unit ball $\mathcal{S}: = \{\vx\in \R^{d}: \norm{\vx}_2 = 1\}$ ~\citep{gao2019convergence}.

We employ the cross-entropy loss $\ell (z):= \log[1+\exp(-z)]$ for training. 
We consider the following multi-objective function involving with the standard training loss $\elbigcleancross(\mW)$ under empirical risk minimization and adversarial training loss $ \elbigrobustcross(\mW)$ with adversary in \cref{definition:adversary}:
\begin{equation}
    \label{eq:ermboth}
 \min_{\mW}~(1-\beta) \underbrace{\frac{1}{N} \sum_{i=1}^{N}
\ell [y_i f(\vx_i,\mW)]}_{:= \elbigcleancross(\mW)}
 + \beta \underbrace{ \frac{1}{N} \sum_{i=1}^{N}
 \ell [y_i f(\per{(\vx_i,\mW)},\mW)]}_{ := \elbigrobustcross(\mW)}\,,
 \end{equation}
 where the regularization parameter $\beta \in [0,1]$ is for a trade-off between the standard training and adversarial training.
The $\beta = 0$ case refers to the standard training and $\beta = 1$ corresponds to the adversarial training.
Based on our description, the neural network training by stochastic gradient descent (SGD) with a step-size $\gamma$ is shown in \cref{alg:algorithm_DARTS}.

\subsection{Generalization bounds}
\label{sec:theoreticalresult}

NTK plays an important role in understanding the generalization of neural networks~\citep{NEURIPS2018_5a4be1fa}. Specifically, the NTK is defined as the inner product of the network Jacobian w.r.t the weights at initialization with infinite width, i.e.,
$
k (\vx_i, \vx_j) = 
\lim _{m \rightarrow \infty} 
\left\langle
\frac{\partial f(\vx_i, \mW\step{1})}{\partial \text{vec}(\mW)}
,
\frac{\partial
f(\vx_j,\mW\step{1})}{\partial \text{vec}(\mW)}
\right\rangle \,.
$
To present the generalization theory, let us denote the NTK matrix for clean accuracy as:
\begin{equation}
 \label{eq:ntkc}
    \mK_\mathrm{all} :=  (1-\beta)^2 \mK +\beta (1-\beta) (\bar{\mK}_\rho + \bar{\mK}_\rho^\top ) +  \beta^2 \hat{\mK}_\rho\,,
\end{equation}
where
    $\bm K^{(i,j)} = k(\vx_i,\vx_j)$ is called the clean NTK, 
    $\bar{\bm K}_\rho^{(i,j)} = k(\vx_i,\perstar(\vx_j,\mW\step{1}))$ is the \emph{cross} NTK,
    and
    $\hat{\bm K}_\rho^{(i,j)} = k(\perstar(\vx_i,\mW\step{1}),\perstar(\vx_j,\mW\step{1}))$ is the robust NTK, for any $i,j \in [N]$. 
Similarly, denote the NTK for robust accuracy as:
\begin{equation}\label{eq:ntkrobust}
\widetilde{\mK}_\mathrm{all} =  (1-\beta)^2 \hat{\mK}_{\rho} +\beta (1-\beta)
(\bar{\mK}_{2\rho} + \bar{\mK}^\top_{2\rho}) +  \beta^2
\hat{\mK}_{2\rho}\,,
\end{equation}
   where the robust NTK $\hat{\bm K}_\rho$ has been defined in \cref{eq:ntkc}, $\bar{\bm K}_{2\rho}^{(i,j)} = k(
    \perstar(\vx_i,\mW\step{1}), 
    \hat{\vx}_j
    )$,
    and
    $\hat{\bm K}_{2\rho}^{(i,j)} = k(
    \hat{\vx}_i 
    ,
    \hat{\vx}_j
    )$, for $i,j \in [N]$, with $\hat{\vx}_j =\perstar(\perstar(\vx_j,\mW\step{1}),\mW\step{1}) $ under ``twice'' perturbation. Note that the formulation of  ``twice'' perturbation allows seeking a perturbed point outside the radius of $\rho$, indicating 
    stronger adversarial data is used. Such perturbation is essentially different from doubling the step size under a single $\rho$.
    Interestingly, such a scheme has the benefit of avoiding catastrophic overfitting~\citep{jorge2022make}.
Accordingly, we have the following theorem on generalization bounds for clean/robust accuracy with the proof deferred to \cref{sec:proofmaintheorem}.
\begin{theorem}[Generalization bound of FCNN by NAS]
\label{thm:generalizationbound}
Denote the expected clean $0$-$1$ loss as
$ \elbigcleanzeroone(\mW) := 
\mathbb{E}_{(\vx,y)
}[\mathbb{1}\left \{ y f(\vx, \bm W)<0 \right \} ]\,,$
and expected robust $0$-$1$ loss as $\elbigrobustzeroone(\mW) := 
\mathbb{E}_{(\vx,y)
}[
\mathbb{1}\left \{ y f(\per(\vx,\mW), \bm W)<0 \right \} ]\,.$
Consider the residual FCNNs in \cref{eq:network} by activation function search and skip connection search, under \cref{assumption:distribution_1,assumption:activation} with $\lipmax$. If one runs~\cref{alg:algorithm_DARTS} with a step-size $\gamma = \nu \sqrt{\min \{ \bm{y}^{\top}(\mK_\mathrm{all})^{-1}\bm{y}, \bm{y}^{\top}(\widetilde{\mK}_\mathrm{all})^{-1}\bm{y} \} }/ (\sqrt{
    \lipmax 
    }
    e^{
    \lipmax 
    } m \sqrt{N})$ for small enough absolute constant $\nu$ and width $m \geq m^\star$, where $m^\star$ depends on $(N,L,\lipmax,\lambda_{\min}(\mK_\mathrm{all}), \lambda_{\min}(\widetilde{\mK}_\mathrm{all}),
    \reg
    , \rho
    ,\delta)$, then for any 
    $\rho \le 1$, with probability at least $1-\delta$ 
    over the randomness of $\bm{W}\step{1}$, we obtain:
\begin{equation}
\label{equ:generalizationbound}
\begin{split}
&     
    \mathbb{E}_{\bar{\mW}}
    \left( \elbigcleanzeroone(\bar{\mW})
    \right)
    \lesssim
          \tilde{\mathcal{O}} \left(
    \sqrt{
    \frac{
        L^2 \vy^\top
        \mK_\mathrm{all}^{-1}
        \vy}{N}}
        \right)
        + \mathcal{O} \left( \sqrt{\frac{\log(1/\delta )}{N} } \right) \,,  \\
    &       \mathbb{E}_{\bar{\mW}}
    \left( \elbigrobustzeroone(\bar{\mW})
    \right)
    \lesssim
          \tilde{\mathcal{O}} \left(
    \sqrt{
    \frac{
        L^2 \vy^\top
        \widetilde{\mK}_\mathrm{all}^{-1}
        \vy}{N}}
        \right)
        + \mathcal{O} \left( \sqrt{\frac{\log(1/\delta )}{N} } \right)
    \,,
\end{split}
\end{equation}
where $\lambda_{\min}(\cdot)$ indicates the minimum eigenvalue of the NTK matrix, the expectation in the LHS is taken over the uniform sample of $\bar \mW$ from $\{\mW\step{1},\ldots, \mW\step{N}\}$, as illustrated in \cref{alg:algorithm_DARTS}.
\end{theorem}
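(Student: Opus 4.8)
I would follow the standard recipe for NTK-based generalization bounds — online optimization near initialization followed by an online-to-batch conversion — adapted to the multi-objective loss of \cref{eq:ermboth} and to the two test metrics. The first ingredient is a \emph{linearization} step: I would show that for width $m\ge m^\star$ the SGD iterates $\mW\step{1},\dots,\mW\step{N}$ produced by \cref{alg:algorithm_DARTS} stay inside a Euclidean ball around $\mW\step{1}$ of radius governed by $\sqrt{\vy^\top\mK_\mathrm{all}^{-1}\vy}$ (resp.\ $\sqrt{\vy^\top\widetilde{\mK}_\mathrm{all}^{-1}\vy}$), inside which the network $f(\cdot,\mW)$ and its first-order Taylor expansion at $\mW\step{1}$ agree up to an error negligible against the target rate; this uses \cref{assumption:distribution_1,assumption:activation} (through $\lipmax$) and the $1/m$ initialization scale, and is where the dependence of $m^\star$ on $\lambda_{\min}(\mK_\mathrm{all})$, $\lambda_{\min}(\widetilde{\mK}_\mathrm{all})$, $L$, $\rho$ and $\delta$ comes from. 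A second preliminary step is to \emph{freeze the adversary at initialization}: since the weights barely move, $\perstar(\vx_i,\mW\step{k})$ stays close to $\perstar(\vx_i,\mW\step{1})$ throughout training, so the adversarial term in \cref{eq:ermboth} can be replaced, up to controlled error, by a loss evaluated at the fixed perturbed points $\hat{\vx}_i:=\perstar(\vx_i,\mW\step{1})$.

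\textbf{Clean bound.} With the adversary frozen, the per-step stochastic gradient of \cref{eq:ermboth} at sample $i$ is, to leading order, $\phi_i:=(1-\beta)\,\partial_{\mW}f(\vx_i,\mW\step{1})+\beta\,\partial_{\mW}f(\hat{\vx}_i,\mW\step{1})$, whose Gram matrix converges as $m\to\infty$ exactly to $\mK_\mathrm{all}=(1-\beta)^2\mK+\beta(1-\beta)(\bar{\mK}_\rho+\bar{\mK}_\rho^\top)+\beta^2\hat{\mK}_\rho$ of \cref{eq:ntkc}, each cross term being the inner product of a clean Jacobian with a perturbed one. Hence I would exhibit a reference weight $\mW^{\mathrm{c}}$ with $\|\mW^{\mathrm{c}}-\mW\step{1}\|^2\lesssim\vy^\top\mK_\mathrm{all}^{-1}\vy$ — obtained by solving the associated linear system and using the min-eigenvalue bound to control its norm — on which the convex surrogate in \cref{eq:ermboth} is $o(1/\sqrt N)$ per sample. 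Plugging $\mW^{\mathrm{c}}$ into a standard online-SGD regret inequality (convexity of $\ell$, near-linearity of $f$, one pass over the data) bounds $\tfrac1N\sum_k\big[(1-\beta)\elbigcleancross(\mW\step{k})+\beta\elbigrobustcross(\mW\step{k})\big]$ by $\tfrac{\|\mW^{\mathrm{c}}-\mW\step{1}\|^2}{\gamma N}+\gamma\cdot(\text{gradient-norm bound})+o(1/\sqrt N)$; the prescribed step-size $\gamma$ balances the first two terms and gives the $\tilde{\mathcal{O}}(\sqrt{L^2\vy^\top\mK_\mathrm{all}^{-1}\vy/N})$ rate. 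Since the logistic surrogate dominates the $0$-$1$ loss and SGD processes each $(\vx_k,y_k)$ once, an Azuma--Hoeffding martingale argument turns the cumulative clean surrogate loss into $\E_{\bar\mW}\elbigcleanzeroone(\bar\mW)$ plus the $\mathcal{O}(\sqrt{\log(1/\delta)/N})$ deviation term, which is the first line of \cref{equ:generalizationbound}.

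\textbf{Robust bound.} For the robust metric I would repeat the construction, but now the object that must be controlled is the class $\vx\mapsto f(\perstar(\vx,\mW),\mW)$ over the ball around $\mW\step{1}$: because the robust test loss perturbs a point that, in the training objective, was \emph{already} a perturbed training point, the relevant features become $\psi_i:=(1-\beta)\,\partial_{\mW}f(\hat{\vx}_i,\mW\step{1})+\beta\,\partial_{\mW}f(\perstar(\hat{\vx}_i,\mW\step{1}),\mW\step{1})$ — one extra perturbation on top — whose limiting Gram matrix is exactly $\widetilde{\mK}_\mathrm{all}$ of \cref{eq:ntkrobust}, with the ``twice perturbed'' points of \cref{eq:ntkrobust} playing the role $\hat{\vx}_i$ played before. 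Building a reference solution with $\|\mW^{\mathrm{r}}-\mW\step{1}\|^2\lesssim\vy^\top\widetilde{\mK}_\mathrm{all}^{-1}\vy$, rerunning the regret bound (now extracting the robust surrogate $\elbigrobustcross$ along the trajectory) and closing with the same online-to-batch/Azuma step — after bounding the Rademacher complexity of the robust loss class by that of its linearization — gives the second line of \cref{equ:generalizationbound}.

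\textbf{Main obstacle.} The crux is everything involving the adversary: (i) a quantitative proof that $\perstar(\cdot,\mW\step{k})$ is stable under the SGD displacement of the weights, so that the frozen-adversary approximation is legitimate and the Gram matrices are exactly $\mK_\mathrm{all}$ and $\widetilde{\mK}_\mathrm{all}$; and (ii) handling the inner maximization when bounding the complexity of the robust class $\{\vx\mapsto\ell(yf(\perstar(\vx,\mW),\mW))\}$, which is not linear and must be passed through its linearization uniformly over the ball — this is precisely what forces the ``twice perturbed'' NTK to appear and what drives the width requirement through $\lambda_{\min}(\widetilde{\mK}_\mathrm{all})$. The CNN case of \cref{eq:networkcnn} then follows the same route, replacing FCNN-layer Jacobians by convolutional ones and tracking the filter size $\kappa$ in the constants.
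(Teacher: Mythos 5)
Your overall architecture is the same as the paper's: linearize the network around $\mW\step{1}$ for $m\ge m^\star$, run an online regret bound for the iterates of \cref{alg:algorithm_DARTS} against a reference point $\mW^\star$ in a ball of radius $O(Rm^{-1/2})$, construct that reference by pseudo-inverting the joint Jacobian $(1-\reg)\mJ+\reg\hat\mJ$ (whose Gram matrix is exactly $\mK_\mathrm{all}$, resp.\ $\widetilde{\mJ}_\mathrm{all}$ with Gram $\widetilde{\mK}_\mathrm{all}$) so that $\|\mW^\star-\mW\step{1}\|^2\lesssim \vy^\top\mK_\mathrm{all}^{-1}\vy$, and finish with an online-to-batch conversion via Hoeffding and the bound $\mathbb{1}\{z\le 0\}\le 4\ell(z)$. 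Your identification of the feature maps $\phi_i$ and $\psi_i$ and of why the twice-perturbed kernel appears is exactly the mechanism in the paper (the Rademacher-complexity remark is superfluous: the online-to-batch step already handles generalization, since each sample is used once).

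The one step that would fail as you state it is ``freezing the adversary'': you propose to show that $\perstar(\vx_i,\mW\step{k})$ \emph{as a point} stays close to $\perstar(\vx_i,\mW\step{1})$ because the weights barely move. The maximizer of the inner problem is an $\argmax$ and need not be stable under small perturbations of the objective --- it can jump discontinuously --- so this claim is not provable without extra structure, and the paper never proves it. Instead the paper (i) only ever needs the \emph{value} of the adversarial loss, which it upper-bounds by the worst-case adversary, $\ell(yf(\per(\vx,\mW),\mW))\le\ell(yf(\perstar(\vx,\mW),\mW))$, and max-values (unlike argmaxes) are stable; and (ii) uses the equal-perturbation device of \cref{lemma:equal_perturbation}, which realizes \emph{any} input perturbation $\hatvx\in\hat{\mathcal B}(\vx,\rho)$ as an equivalent perturbation of the first-layer weights, $\mW_1\mapsto\mW_1+\mW_1(\hatvx-\vx)\vx^\top/\|\vx\|_2^2$, landing in $\mathcal B(\mW\step{1},\tau+3\rho+\tau\rho)$. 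All adversarial evaluations are thereby absorbed into the weight-space linearization uniformly over the perturbation ball, which is why the semi-smoothness and regret lemmas (\cref{lemma:lemma_4.1_in_GuQuanquan,lemma:lemma_4.3_in_GuQuanquan}) can treat clean and adversarial inputs on the same footing and why $\rho$ enters $m^\star$ only through this enlarged radius. You would need to replace your argmax-stability step with this value-level argument for the proof to go through.
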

\begin{algorithm}[!t]
\caption{Multi-objective adversarial training with stochastic gradient descent}
\label{alg:algorithm_DARTS}
\begin{algorithmic}
\STATE {\bfseries Input:} data distribution $\mathcal{D}$, adversary $\per$, step size $\gamma$, iteration $N$, initialized weight $\mW\step{1}$.\\
\FOR{$i=1$ {\bfseries to} $N$}
\STATE{
Sample $(\bm {x}_i, y_i )$ from $\mathcal{D}$.\\
$\mW\step{i+1} = \mW\step{i}- \gamma \cdot (1-\reg) \nabla_{\mW} \ell \big(  y_i f(\bm x_i, \mW\step{i}) \big)
-  \gamma
\cdot \reg \nabla_{\mW} \ell \big(  y_i f(\per(\bm x_i, \mW\step{i}), \mW\step{i}) \big)$
}
.
\ENDFOR \\
Randomly choose $\bar{\mW}$ uniformly from $\left \{ \mW\step{1}, \dots , \mW\step{N} \right \}. $
\\
\textbf{Output}:  $\bar{\mW}$.
\end{algorithmic}
\end{algorithm}
\textbf{Remarks:} 
\textit{a)} 
By courant minimax principle~\citep{10.5555/248979},
we can further have 
$\bm{y}^{\top}{\bm{K}_\mathrm{all}}^{-1}\bm{y} \leq \frac{\bm{y}^{\top}\bm{y}}{\lambda_{\min}(\mK_\mathrm{all})}$,  and $\bm{y}^{\top}{\bm{\hat{K}}_\mathrm{all}}^{-1}\bm{y} \leq \frac{\bm{y}^{\top}\bm{y}}{\lambda_{\min}(\hat{\mK}_\mathrm{all})}$.
\citet{cao2019generalization,zhu2022robustness} prove for FCNN under standard training that the standard generalization error is affected by only $\lambda_{\min}{(\mK)}$. As a comparison, our result exhibits that under multi-objective training, the generalization error is controlled by a mixed kernel $\mK_{\mathrm{all}}$ with regularization factor $\reg$.
We can recover their result by taking $\beta = 0$, i.e., standard training under the clean NTK.
Taking $\beta = 1$ falls in the case of adversarial training with robust NTK. \textit{c)} Our theory demonstrates that the clean accuracy is affected by both the clean NTK and robust NTK, but the robust generalization bound is influenced by the robust NTK and its ``twice'' perturbation version.

Here we extend our generalization bound to CNN with proof postponed to~\cref{proof:generalizationbound_cnn}.
\begin{corollary}
\label{thm:generalizationbound_cnn}
Consider the residual CNN defined in \cref{eq:networkcnn}.
     If one applies~\cref{alg:algorithm_DARTS} with a step-size $\gamma = \nu 
     \min \{ \bm{y}^{\top}(\mK_\mathrm{all})^{-1}\bm{y}, \bm{y}^{\top}(\widetilde{\mK}_\mathrm{all})^{-1}\bm{y} \}
     / (\sqrt{
    \lipmax 
    p \kernelcnn
    }
    e^{
    \lipmax \sqrt{\kernelcnn}
    } m \sqrt{N})$ for some small enough absolute constant $\nu$ and the width $m \geq m^\star$, where $m^\star$ depends on $(N,L,p,\kernelcnn,\lipmax,\lambda_{\min}(\mK_\mathrm{all}),\lambda_{\min}(\widetilde{\mK}_\mathrm{all}),\reg, \rho, \delta)$, then under~\cref{assumption:distribution_1,assumption:activation}, for any 
    $\rho \le 1$, with probability at least $1-\delta$, one has the generalization result as in \cref{equ:generalizationbound}.
\end{corollary}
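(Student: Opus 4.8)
The plan is to re-run the argument behind \cref{thm:generalizationbound} essentially verbatim, replacing every matrix-multiplication estimate by its convolutional analogue so that the pixel count $p$ and the filter size $\kernelcnn$ surface explicitly in the constants. Recall that the FCNN proof rests on three pillars: (a) \emph{initialization}: for width $m\ge m^\star$ the empirical NTK concentrates around its infinite-width limit $k$, so the joint matrices $\mK_\mathrm{all}$ and $\widetilde{\mK}_\mathrm{all}$ — being fixed convex/quadratic combinations of the clean kernel $\mK$, the cross kernels $\bar\mK_\rho,\bar\mK_{2\rho}$ and the robust kernels $\hat\mK_\rho,\hat\mK_{2\rho}$ evaluated at inputs frozen at $\mW\step1$ — have minimum eigenvalues essentially equal to $\lambda_{\min}(\mK_\mathrm{all})$ and $\lambda_{\min}(\widetilde{\mK}_\mathrm{all})$; (b) \emph{trajectory}: the SGD iterates of \cref{alg:algorithm_DARTS} stay in a ball of radius $O(\mathrm{poly}(N,L)/\sqrt m)$ around $\mW\step1$, inside which $f(\cdot,\mW)$ is uniformly close to its linearization and the per-step effective gradient $(1-\reg)\nabla_{\mW}\ell(y_if(\vx_i,\cdot))+\reg\,\nabla_{\mW}\ell(y_if(\per(\vx_i,\cdot),\cdot))$ behaves like a fixed random feature whose Gram matrix is $\mK_\mathrm{all}$ (and, when the prediction is read off at $\per(\vx,\mW)$, $\widetilde{\mK}_\mathrm{all}$); (c) \emph{conversion}: an online regret bound with a comparator in the NTK RKHS of norm $\sqrt{\vy^\top\mK_\mathrm{all}^{-1}\vy}$ (clean) or $\sqrt{\vy^\top\widetilde{\mK}_\mathrm{all}^{-1}\vy}$ (robust) achieving near-zero surrogate loss, turned into an expected $0$-$1$ bound by a standard martingale/Rademacher argument. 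Pillars (a) and (c) are input-agnostic and carry over unchanged once the CNN norm bounds are in place; only the auxiliary estimates feeding (b) must be redone.

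\textbf{Step 1: CNN forward/backward and Jacobian bounds.} First I would show that, with high probability over $\mW\step1$ and for every unit-Frobenius-norm input — in particular for $\perstar(\vx_i,\mW\step1)$ and the twice-perturbed point $\hat{\vx}_i$, which are still unit-norm by the problem setup — one has $\norm{\mF_l}_\mathrm{F}=\Theta(1)$ for all $l$ and $\norm{\partial f(\mX,\mW)/\partial\mathrm{vec}(\mW)} = O\!\big(\sqrt{p\,\kernelcnn}\,e^{\lipmax\sqrt{\kernelcnn}}\big)$ for every $\mW$ in the trajectory ball. Compared with the FCNN case the only new ingredients are: a single convolution layer has operator norm at most $\lipmax\sqrt{\kernelcnn}\,\norm{\cnnweight_l}$ because each output pixel aggregates $\kernelcnn$ shifted copies of its input (the zero-padding convention keeps this clean); the $1/L$ prefactor keeps the layer-wise product bounded but compounds the $\sqrt{\kernelcnn}$ into the factor $e^{\lipmax\sqrt{\kernelcnn}}$; and the readout $\langle\mW_L,\mF_{L-1}\rangle$ with $\mW_L\in\R^{m\times p}$ contributes the $\sqrt{p}$. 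These are exactly the constants that appear in the step size $\gamma$ and in $m^\star$. I would then establish the matching local-Lipschitz (``pseudo-smoothness'') bound for the Jacobian over the trajectory ball with the same $(p,\kernelcnn)$-dependence, which yields the near-affinity of $f$ needed in pillar (b).

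\textbf{Step 2: NTK for CNN and assembly.} Existence and strict positive-definiteness of the limiting CNN-NTK follows as in the standard CNN-NTK argument; since $\mK_\mathrm{all}$ and $\widetilde{\mK}_\mathrm{all}$ are built from $k$ evaluated at the frozen clean/perturbed/twice-perturbed inputs exactly as in \cref{eq:ntkrobust}, their least eigenvalues transfer from the FCNN analysis. Plugging the constants from Step~1 into pillars (b)-(c): the iterates stay within $O(\mathrm{poly}(N,L,p,\kernelcnn)/\sqrt m)$ of $\mW\step1$ (forcing the stated form of $m^\star$), the surrogate loss is almost convex along the path, the comparator exists with the stated RKHS norm, and online-to-batch plus concentration delivers \cref{equ:generalizationbound} for both the clean and the robust $0$-$1$ loss, giving the corollary.

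\textbf{Main obstacle.} The hard part is the \emph{uniform control of the adversary over the trajectory ball}: the kernels $\mK_\mathrm{all},\widetilde{\mK}_\mathrm{all}$ are defined with the perturbations pinned at $\mW\step1$, yet the SGD update at step $i$ uses $\per(\vx_i,\mW\step i)$ and the robust readout uses $\per(\vx,\bar\mW)$, so one must show that moving the weights by $O(\mathrm{poly}/\sqrt m)$ moves the worst-case perturbed point — and, for the robust bound, the twice-perturbed point — by an amount negligible relative to the affine approximation error. For CNNs this is where the convolutional operator-norm bounds of Step~1 must be balanced against the width threshold, and it is precisely what dictates the $p$- and $\kernelcnn$-dependence of $m^\star$; the remainder is a term-by-term transcription of the FCNN proof.
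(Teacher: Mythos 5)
Your proposal is correct and follows essentially the same route as the paper: the appendix reformulates the convolution via a patch operator $\phi_l$ (each output pixel aggregating $\kernelcnn$ shifted input slices, which is exactly your operator-norm observation), re-derives the initialization, perturbation, and linearization lemmas with the extra $\sqrt{\kernelcnn}$ per layer and $\sqrt{p}$ from the readout $\mW_L\in\R^{m\times p}$ — yielding precisely the $\sqrt{p\kernelcnn}\,e^{\lipmax\sqrt{\kernelcnn}}$ factors in $\gamma$ and $m^\star$ — and then transcribes the FCNN online-to-batch argument verbatim. Your identified obstacle (pinning the adversary at $\mW\step{1}$ while the trajectory moves) is handled in the paper exactly as you suggest, by the lemma converting an input perturbation into an equivalent weight perturbation of radius $\tau+3\rho+\tau\rho$.
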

\vspace{-1mm}
{\bf Remarks:} 
Notice that the filter size $\kappa$ affects the step-size and the required over-parameterization condition. 
 A larger filter size enables larger step-sizes to achieve a faster convergence rate, which aligns with practical insights~\citep{ding2022scaling}.

 \begin{wrapfigure}{r}{0.55\textwidth}
    \begin{center}
    \vspace{-8.5mm}
    \includegraphics[width=0.6\textwidth]{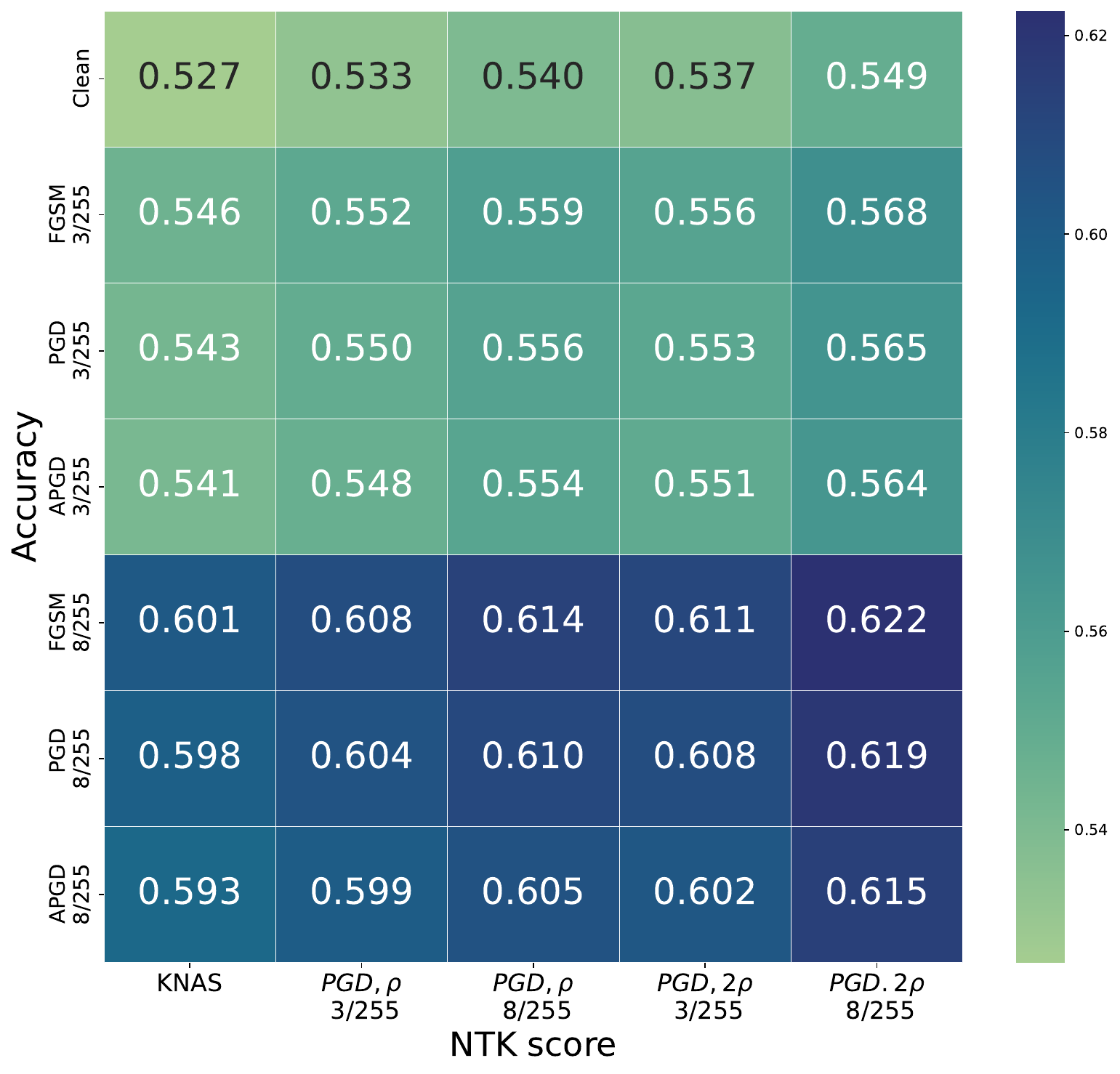}
            \vspace{-8mm}
        \caption{Spearman coefficient between NTK-scores and various metrics. 
         Labels with $2\rho$ in the x-axis indicate the scores w.r.t the robust twice NTK while labels with $\rho$ indicates the score w.r.t the robust NTK.
            }
    \label{fig:ntkscore}
  \end{center}
\end{wrapfigure}
\subsection{Correlation between NTK and accuracy on \benchmark{}}
In \cref{fig:ntkscore}, we plot the Spearman correlation between different NTK scores and various accuracy metrics. Specifically, we use adversarial data with PGD attack to construct the kernel $\hat{\bm K}_\rho$ defined in \cref{eq:ntkc}. Then, to efficiently compute $\lambda_{\min}$, we estimate it by its Frobenius norm as in~\citet{xu2021knas}, and we simply name it NTK-score. Specifically, we focus on 5 distinct NTK variants, including clean NTK, robust NTKs with $\rho \in \{3/255,8/255\}$, and the corresponding robust twice NTKs. 
{Regarding the robust twice NTK, we first perform one PGD attack on the raw image data to generate the adversarial data, and then we perform the same attack on this adversarial data again and use it to construct the NTK.}
Our findings reveal that robust NTKs
exhibit a notably stronger correlation when compared to standard NTK under adversarial training. More interestingly, we observe that the correlation is increasing for metrics with larger perturbation. Such findings persist when examining NTKs with FGSM attack, as we elaborate in \cref{sec:appendix_experiment}.

\section{Conclusion}
\label{sec:conclusion}
In this paper, to facilitate the research of the NAS community, we release \benchmark{} that includes the robust evaluation result of $6466$ architectures on $3$ image datasets under adversarial training. Furthermore, we study the robust generalization guarantee of both FCNN and CNN for NAS.
Our result reveals how various NTKs can affect the robustness of neural networks, which can motivate new designs of robust NAS approaches.
The first limitation is that our benchmark and theoretical results do not explore the cutting-edge (vision) Transformers. Additionally, our NTK-based analysis studies the neural network in the linear regime,
which can not fully explain its success in practice~\citep{allen2019convergence,yang2020feature}.
Nevertheless, we believe both our benchmark and theoretical analysis can be useful for practitioners in exploring robust architectures.

\section*{Acknowledgements}
We thank the reviewers for their constructive feedback.
We are grateful to Kailash Budhathoki from Amazon for the assistance.
This work has received funding from the Swiss National Science Foundation (SNSF) under grant number 200021\_205011.
This work was supported by Hasler Foundation Program: Hasler Responsible AI (project number 21043). Corresponding author: Fanghui Liu.
\looseness=-1

\section*{ETHICS STATEMENT}
On one hand, in the pursuit of facilitating innovation in robust neural architecture design, we build a benchmark by using around 107k GPU hours. We acknowledge that such a substantial computational footprint has environmental consequences, e.g., energy consumption and carbon emissions. On the other hand,  the core target of robust NAS is to guarantee that AI models excel not only in accuracy but also in robustness towards malicious data. Therefore, we anticipate positive societal consequences stemming from our research efforts.

\section*{REPRODUCIBILITY STATEMENT}
Details on the construction of the benchmark can be found in \cref{sec:construction}, including the description of search space, dataset, and hyperparameters setting. Furthermore, in the supplementary material, we provide the evaluation results of all architectures in the proposed benchmark. The code and the pre-trained weight are publicly released. Regarding the theoretical result, all the proofs can be found in~\cref{sec:proofmaintheorem,sec:appendix_cnn,sec:lambdamin}.
\bibliography{bibliography}
\bibliographystyle{iclr2024_conference}

\appendix
\newpage
\section*{Contents of the Appendix}
The Appendix is organized as follows:
\begin{itemize}
  \vspace{-0.5em}
\item
    In \cref{sec:appendix_symbol}, we summarize the symbols and notation used in this paper.
\item
     A comprehensive overview of related literatures can be found in \cref{sec:appendix_complete_relatedwork}. A detailed clarification of the distinctions between our work and prior research is present in \cref{sec:appendix_differ_priorwork}.
\item In \cref{sec:proofmaintheorem}, we include the proof for the generalization bound of FCNNs.
\item  The proof for the lower bound on the minimum eigenvalue of the NTK is present in \cref{sec:lambdamin}.
\item  The extension of theoretical results to CNN can be found at \cref{sec:appendix_cnn}.
\item Further details on the experiment are developed in \cref{sec:appendix_experiment}. 
\item Limitations and societal impact of this work are discussed in \cref{sec:appendix_limitation} and \cref{sec:appendix_societalimpact}, respectively.
\end{itemize}

\section{Symbols and Notation}
\label{sec:appendix_symbol}
\vspace{-1mm}
Vectors/matrices/tensors are symbolized by lowercase/uppercase/calligraphic boldface letters, e.g., $\vw$, $\mW$, $\bm {\cnnweight}$. We use $\|\cdot\|_\mathrm{F}$ and $\|\cdot\|_2$ to represent the Frobenius norm and the spectral norm of a matrix, respectively. The Euclidean norm of a vector is symbolized by $\|\cdot\|_2$. The superscript with brackets is used to represent a particular element of a vector/matrix/tensor, e.g., $\bm w^{(i)}$ is the $i$-th element of $\vw$. The superscript with brackets symbolizes the variables at different training step, e.g., $\mW\step{t}$. Regarding the subscript, we denote by $\mW_l$ the learnable weights at $l$-th layer, $\vx_i$ the $i$-th input data. $[L]$ is defined as a shorthand of $\left \{ 1,2,\dots ,L \right \}$ for any positive integer $L$. We denote by $\mathcal{X}\subset \R^d$ and  $\mathcal{Y}\subset \R$ the input space and the output space, respectively. The training data $(\vx_i, y_i)$ is drawn from a probability measure $\mathcal{D}$ on $\mathcal{X} \times \mathcal{Y}$.
For an input $\vx \in \mathcal{X}$, we symbolize its  $\rho$-neighborhood by
$$\hat{\mathcal B}(\vx,\rho) := \{\hatvx \in \mathcal{X} : \| \hatvx - \vx \|_2 \leq \rho \}\cap \mathcal{X}\,.$$
For any $\mW \in \mathcal{W}$, we define its  $\tau$-neighborhood as follows:
\begin{equation*}
\mathcal{B} (\mW,\tau ) :=\left \{ \mW'\in \mathcal{W}: \left \| \mW'_l - \mW_l \right \|_{\mathrm{F}} \leq \tau 
, l \in [L]   \right \}\,.
\end{equation*}

We summarize the core symbols and notation in \cref{table:symbols_and_notations}.
\begin{table}[!h]
\caption{Core symbols and notations used in this paper.
}
\label{table:symbols_and_notations}
\setlength{\tabcolsep}{0.15\tabcolsep}
\centering
\fontsize{9}{8}\selectfont
\resizebox{\textwidth}{!}{%
\begin{tabular}{c | c | c}
\toprule
Symbol & Dimension(s) & Definition \\
\hline 
$\mathcal{N}(\mu,\sigma) $ & - & Gaussian distribution with mean $\mu$ and variance $\sigma$ \\
$[L]$ & - & Shorthand of $\left \{ 1,2,\dots ,L \right \}$\\
$\mathcal{O}$, $o$, $\Omega$ and $\Theta$ & - & Standard Bachmann–Landau order notation\\
\hline 
$\left \| \vw \right \|_2$ & - & Euclidean norms of vector $\vw$ \\
$\left \| \mW \right \|_2$ & - & Spectral norms of matrix $\mW$ \\
$\left \| \mW \right \|_{\mathrm{F}}$ & - & Frobenius norms of matrix $\mW$ \\
$\lambda_{\min}(\mW)$, $\lambda_{\max}(\mW)$ & - & Minimum and maximum eigenvalues of matrix $\mW$
\\
$\sigma_{\min}(\mW), \sigma_{\max}(\mW)$ & - & Minimum and Maximum singular values of matrix $\mW$
\\
    $\bm w^{(i)}$ & - & $i$-th element of vector $\vw$
\\
$\bm W^{(i,j)}$ & - & $(i, j)$-th element of matrix $\mW$
\\
$\mW\step{t}$ & - & matrix $\mW$ at time step $t$
\\
\hline 
$N$ & - & Number of training data points \\
$d$ & - & Dimension (channel) of the input in FCNN (CNN)\\
$m$ & - & Width (Channel) of intermediate layers in FCNN (CNN)\\
$p$ & - & Pixel of the input in CNN\\
$\kernelcnn$ & - & Filter size in CNN\\
\hline 
$\vx_i$ & $\mathbb{R}^{ d}$ & The $i$-th data point \\
$y_i$ & $\mathbb{R}$ & The $i$-th target \\
$\mathcal{D}_{X}$ & - & Input data distribution \\
$\mathcal{D}_{Y}$ & - & Target data distribution \\
\hline 
$\sigmal(\cdot)$ & - &  Element-wise activation function at $l$-th layer in FCNN
\\
$\mW_1 $, $\mW_l$,  $\vw_L$
& $\mathbb{R}^{m \times d}$,
$\mathbb{R}^{m\times m} $, $\R^{m}$
& Learnable weights in FCNN, $l = 2,\dots ,L-1$
\\ $\mW := (
\mW_1, \dots , \vw_L)$ && Collection of all learnable weights in FCNN
\\ $\mD_l$ & $\R^{m \times m}$
&Diagonal matrix, $\mD_l = \text{Diag}({\sigmal}'(\mW_l \vf_{l-1}))$, for $l \in [L-1]$ in FCNN
\\
\hline 
\end{tabular}
}
\end{table}

\section{Complete related work and our contribution}
\label{sec:appendix_relatedwork}
\subsection{Complete related work}
\label{sec:appendix_complete_relatedwork}
{\bf Adversarial example and defense.} Since the seminal work of \citep{szegedy2013intriguing} illustrated that neural networks are vulnerable to inputs with small perturbations, several approaches have emerged to defend against such attacks, e.g., adversarial training~\citep{goodfellow2014explaining,madry2018towards}, feature denoising~\citep{xu2017feature,xie2019feature}, modifying network architecture~\citep{Xiao2020Enhancing}, regularizing Lipschitz constant~\citep{cisse2017parseval} or input gradient~\citep{ross2018improving}. 
Adversarial training is one of the most effective defense mechanisms that minimizes the empirical training loss based on the adversarial data, which is obtained by solving a maximum problem.
~\citet{goodfellow2014explaining} use Fast Gradient Sign Method (FGSM), a one-step method, to generate the adversarial data. However, FGSM relies on the linearization of loss around data points and the resulting model is still vulnerable to other more sophisticated adversaries. Multi-step methods are subsequently proposed to further improve the robustness, e.g., multi-step FGSM~\citep{kurakin2018adversarial}, multi-step PGD~\citep{madry2018towards}. 
Other methods of generating adversarial examples include L-BFGS~\citep{szegedy2013intriguing},
C$\&$W attack~\citep{carlini2017towards}.
\\
\\
{\bf NAS and benchmarks.}
Over the years, significant strides have been made towards developing NAS algorithms, which have been explored from various angles, e.g., differentiable search with weight-sharing~\citep{liu2018darts,Zela2020Understanding,ye2022b}, reinforcement learning~\citep{baker2017designing,zoph2017neural,zoph2018learning}, evolutionary algorithm~\citep{suganuma2017genetic,real2017large,real2019regularized}, NTK-based methods~\citep{chenneural,xu2021knas,mok2022demystifying,zhu2022generalization}.
Most recent work on NAS for robust architecture belongs to the first category. \citet{guo2020meets} first trains a huge network that contains every edge in the cell of the search space and then obtains a specific robust architecture based on this trained network. \citet{mok2021advrush}
propose a robust NAS method by imposing a regularized term in the DARTS objective to ensure a smoother loss landscape. These algorithms are different from the NTK-based methods, where we can select a robust architecture based on the NTK metrics without training a huge network.

Along with the evolution of NAS algorithms, the development of NAS benchmarks is also important for an efficient and fair comparison. Regarding clean accuracy, several tabular benchmarks, e.g., NAS-Bench-101~\citep{ying2019bench}, TransNAS-Bench-101~\citep{duan2021transnas}, NAS-Bench-201~\citep{dong2020nasbench201} and NAS-Bench-301~\citep{zela2021surrogate}, have been proposed that include the evaluation performance of architectures under standard training on image classification tasks. More recently, \citet{jung2023neural} extend the scope of the benchmark towards robustness by evaluating the performance of the pre-trained architecture in NAS-Bench-201 in terms of various adversarial attacks. However, all of these benchmarks are under standard training, which motivates us to release the first NAS benchmark that contains the performance of each architecture under adversarial training.

{\bf Neural tangent kernel.}
  Neural tangent kernel (NTK) was first proposed by \citet{NEURIPS2018_5a4be1fa} that connects the training dynamics of over-parameterized neural network, where the number of parameters is greater than the number of training data, to kernel regression. In theory, NTK provides a tractable analysis for several phenomena in deep learning, e.g., convergence \citep{allen2019convergence}, generalization \citep{huang2020deep}, and spectral bias \citep{cao2019towards,choraria2022the}. The study on the NTK has been expanded to a range of neural networks including, fully-connected networks \citep{NEURIPS2018_5a4be1fa}, convolutional networks \citep{NEURIPS2019_c4ef9c39}, and graph networks \citep{du2019graph}. 
  The analysis has also been extended from standard training to adversarial training. In particular, \citet{gao2019convergence} prove the convergence of FCNN with quadratic ReLU activation function, relying on the NTK theory, which suggests that the weights of the network are close to their initialization~\citep{NEURIPS2018_5a4be1fa}. \citet{zhang2020over} provide a refined analysis for FCNN with ReLU activation functions and prove that polynomial width suffices instead of exponential width. More recently, \citet{wang2022on} present a convergence analysis convergence of 2-layer FCNN under certified robust training. From the generalization perspective, the guarantee of over-parameterized FCNN under standard training has been established in~\citet{cao2019generalization,zhu2022generalization}. In this work, we extend the scope of generalization guarantee of over-parameterized networks to multi-objective training, which covers the case of both standard training and adversarial training. 
  Meanwhile, \citet{huang2021exploring} studies the impact of network width and depth on the robustness of adversarially trained DNNs. From the theoretical side, \citet{huang2021exploring} is based on the Lipschitz constant for robustness evaluation while our analysis builds a general framework for the generalization guarantees on both clean and robust accuracy

  NTK has brought insights into several practical applications. For example, in NAS, \citep{xu2021knas,shu2021nasi,zhu2022generalization} show that one can use NTK as a guideline to select architectures without training. As a kernel, NTK can be used in supervised learning and has demonstrated remarkable performance over neural networks on small datasets \citep{arora2019harnessing}. Lastly, NTK has also shown its potential in virtual drug screening, image impainting \citep{radhakrishnan2022simple}, and MRI reconstruction \citep{tancik2020fourier}.
  
Let us now provide further intuition on why training neural networks via gradient descent can be connected to the NTK. Let us denote the loss as 
$\ell(\boldsymbol{W})= \frac{1}{2} \sum_{n=1}^N (f(\boldsymbol{x}_n;\boldsymbol{W}) - y_n )^2$.
By choosing an infinitesimally small learning rate, one has the following
gradient flow: $\frac{d \boldsymbol{W}\step{t}}{dt} = -\nabla \ell(\boldsymbol{W}\step{t})$. By simple chain rule, we can observe that the network outputs admit the following dynamics: $\frac{d f(\boldsymbol{W}\step{t}) }{dt} = -\boldsymbol{K}\step{t} (f(\boldsymbol{W}\step{t}) - \boldsymbol{y}),$ where $\boldsymbol{K}\step{t}
=\left(\frac{\partial f(\boldsymbol{W}\step{t})}{\partial \boldsymbol{W}}\right)
\left(\frac{\partial f(\boldsymbol{W}\step{t})}{\partial \boldsymbol{W}}\right)^\top \in \mathbb{R}^{N \times N}$ is the NTK at $t$ step. Hence, NTK can be used to characterize the training process of neural networks.

\subsection{Contributions and relationship to prior work}
\label{sec:appendix_differ_priorwork}
In this section, we clarify the contribution of our work when compared to the prior work on robust NAS. Specifically, our work makes contributions both theoretically and empirically.

From the theoretical side, our work differs from  previous theoretical work \citep{zhu2022generalization,cao2019generalization} in the following aspects:
\begin{itemize}
\setlength\itemsep{-0.1em}
    \item Problem setting: \citet{cao2019generalization} build the generalization bound of FCNN via NTK and \citep{zhu2022generalization} extend this result under the activation function search and skip connection search. Instead, our work studies the robust generalization bound of both FCNN/CNN under multi-objective adversarial training. How to handle the multi-objective, clean/robust accuracy, more general architecture search (CNN) is still unknown in prior theoretical work.
    \item Results: Differently from \citet{zhu2022generalization,cao2019generalization} that are based on the sole NTK for generalization guarantees, our result demonstrates that, under adversarial training, the generalization performance (clean accuracy and robust accuracy) is affected by several NTKs. Concretely, the clean accuracy is determined by one clean NTK and robust NTK; while robust accuracy is determined by robust NTK and its “twice” perturbation version.
    \item The technical difficulties lie in a) how to build the proof framework under multi-objective training by the well-designed joint of NTKs and b) how to tackle the coupling relationship among several NTKs and derive the lower bound of the minimum eigenvalue of NTKs. Accordingly, our work builds the generalization guarantees of the searched architecture under multi-objective adversarial training. Our results demonstrate the effect of different search schemes, perturbation radius, and the balance parameter, which doesn’t exist in previous literature. 
\end{itemize}
From the application side, we release an adversarially trained benchmark on the NAS-bench201 search space, which differs from previous benchmark in the following aspects:
\begin{itemize}
\setlength\itemsep{-0.1em}
    \item Motivation: 
     As a comparison, existing benchmarks on the NAS-bench201 search space are built on standard training~\citep{dong2020nasbench201,jung2023neural}.
    Since robust NAS algorithms usually include the evaluation results under adversarial training~\citep{guo2020meets,mok2021advrush}, our benchmark facilitates a direct retrieval of
these results for reliable reproducibility and efficient assessment.
\looseness-1 
  \item Statistical result: 
  In \cref{tab:difference_from_iclrbench}, we present the rank correlation among different accuracies within our proposed benchmark as well as the benchmark in \citep{jung2023neural}. The observation underscores the significance of adversarial training within our benchmark. Specifically, it suggests that employing \citet{jung2023neural} to identify a resilient architecture may not guarantee its robustness under the context of adversarial training. Additionally, in \cref{fig:toparch_iclr}, we can see a notable difference between these two benchmarks in terms of top architectures id and selected nodes.
  \item Lastly, we investigate the correlation between various NTK metrics and the robust accuracy of the architecture in the benchmark. By integrating this result with our theoretical generalization bound, we can inspire practitioners to craft more robust and potent algorithms.

\end{itemize}

\section{Proof of the generalization of residual FCNN}
\label{sec:proofmaintheorem}
\subsection{Some auxiliary lemmas}
\label{sec:appendix_usefullemma}

\begin{lemma}[Corollary 5.35 in \citet{vershynin_2012}]
\label{lemma:inequality_initialization}
Given a matrix $\mW \in \R^{m_1 \times m_2}$ where each element is sampled independently from $\mathcal{N}(0, 1)$, for every $\zeta \geq 0$, with probability at least $1-2\mathrm{exp}(-\zeta^2/2)$ one has:
\begin{align*}
\sqrt{m_1} - \sqrt{m_2} - \zeta \leq \sigma_{\min}(\mW) \leq \sigma_{\max}(\mW) \leq \sqrt{m_1} + \sqrt{m_2} + \zeta,
\end{align*}
where $\sigma_{\max}(\mW) $ 
and $\sigma_{\min}(\mW)$ represents the maximum singular value and the minimum singular value of $\mW$, respectively.
\end{lemma}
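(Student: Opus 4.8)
The plan is to view $\mW$ as a standard Gaussian vector in $\R^{m_1 m_2}$ and combine two ingredients: a deterministic Lipschitz property of the extreme singular values, which yields sub-Gaussian tails through Gaussian concentration of measure; and sharp estimates of $\E\,\sigma_{\max}(\mW)$ and $\E\,\sigma_{\min}(\mW)$ obtained from Gaussian comparison inequalities. Throughout I assume without loss of generality that $m_1 \ge m_2$, so that $\sqrt{m_1}-\sqrt{m_2}\ge 0$; if $m_1 < m_2$ the claimed lower bound on $\sigma_{\min}(\mW)$ is nonpositive and hence vacuous, while the upper bound is symmetric under exchanging the two dimensions.

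First I would establish the Lipschitz step. Using the variational formulas $\sigma_{\max}(\mW)=\sup_{\norm{\vu}_2=\norm{\vv}_2=1}\vu^\top\mW\vv$ and $\sigma_{\min}(\mW)=\inf_{\norm{\vv}_2=1}\sup_{\norm{\vu}_2=1}\vu^\top\mW\vv$, both quantities are suprema/infima of the linear functionals $\mW\mapsto\vu^\top\mW\vv$, whose gradient $\vu\vv^\top$ has unit Frobenius norm; hence $\mW\mapsto\sigma_{\max}(\mW)$ and $\mW\mapsto\sigma_{\min}(\mW)$ are both $1$-Lipschitz with respect to $\norm{\cdot}_\mathrm{F}$. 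The Gaussian concentration inequality for $1$-Lipschitz functions of a standard Gaussian vector then gives, for each $\zeta\ge 0$, the one-sided tail bounds $\pr[\sigma_{\max}(\mW)>\E\,\sigma_{\max}(\mW)+\zeta]\le \exp(-\zeta^2/2)$ and $\pr[\sigma_{\min}(\mW)<\E\,\sigma_{\min}(\mW)-\zeta]\le\exp(-\zeta^2/2)$.

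Next I would bound the two expectations, targeting $\E\,\sigma_{\max}(\mW)\le\sqrt{m_1}+\sqrt{m_2}$ and $\E\,\sigma_{\min}(\mW)\ge\sqrt{m_1}-\sqrt{m_2}$. Writing $\sigma_{\max}(\mW)=\max_{\vu,\vv}\vu^\top\mW\vv$ and $\sigma_{\min}(\mW)=\min_{\vv}\max_{\vu}\vu^\top\mW\vv$ over the two unit spheres, I treat $X_{\vv,\vu}:=\vu^\top\mW\vv$ as a centered Gaussian process and compare it with the decoupled process $Y_{\vv,\vu}:=\vg^\top\vu+\vh^\top\vv$, where $\vg\in\R^{m_1}$ and $\vh\in\R^{m_2}$ are independent standard Gaussian vectors. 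A direct covariance computation shows the Sudakov--Fernique increment condition holds for the pure maximum, giving $\E\,\sigma_{\max}(\mW)\le\E\norm{\vg}_2+\E\norm{\vh}_2$, while Gordon's min--max comparison inequality applied to the nested $\min_\vv\max_\vu$ gives $\E\,\sigma_{\min}(\mW)\ge\E\norm{\vg}_2-\E\norm{\vh}_2$. Jensen's inequality $\E\norm{\vg}_2\le\sqrt{m_1}$, $\E\norm{\vh}_2\le\sqrt{m_2}$ closes the maximum side, and the monotonicity of the dimensional deficit $k\mapsto\sqrt{k}-\E\norm{\vz}_2$ for $\vz\sim\mathcal N(\vzero,\mathbf I_k)$ (decreasing in $k$, so the deficit at $m_1$ is at most that at $m_2$) gives $\E\norm{\vg}_2-\E\norm{\vh}_2\ge\sqrt{m_1}-\sqrt{m_2}$ and closes the minimum side. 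Substituting these expectation bounds into the tail bounds above and taking a union bound over the two failure events yields the claimed two-sided inequality with probability at least $1-2\exp(-\zeta^2/2)$.

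The genuinely delicate ingredient is the Gaussian comparison step for the expectations: verifying the covariance/increment hypotheses of Gordon's and Sudakov--Fernique's inequalities with the correct directions, and extracting the clean constant $\sqrt{m_1}-\sqrt{m_2}$ for the lower bound (which hinges on the deficit monotonicity rather than on the crude Jensen bound). Everything else — the Lipschitz property, the Gaussian concentration bound, Jensen's inequality, and the final union bound — is routine. Since the statement is exactly Corollary 5.35 of \citet{vershynin_2012}, the cleanest write-up would cite the expectation estimates (Theorem 5.32 there) and the Gaussian Lipschitz concentration bound as established results and simply assemble them as above, with attention to the dimension convention $m_1\ge m_2$ and to the factor $2$ coming from the two one-sided tails.
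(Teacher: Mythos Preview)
The paper does not prove this lemma at all; it is stated as a direct citation of Corollary~5.35 in \citet{vershynin_2012} and used as a black box in the subsequent lemmas. Your proposal is a correct sketch of the standard proof of that corollary (Gaussian Lipschitz concentration for the tails, Sudakov--Fernique and Gordon comparison inequalities for the expectations, and the deficit-monotonicity trick to extract the clean constant $\sqrt{m_1}-\sqrt{m_2}$ on the lower side), so there is nothing to compare against and no gap to flag.
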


\begin{lemma}[Upper bound of spectral norms of initial weight]
\label{lemma:gaussl2}
Given a weight matrix $\mW\step{1}_l \in \R^{m_1 \times m_2}$ where $ m_1\ge m_2$ and each element is sampled independently from $\mathcal{N}(0, \frac{1}{m_1})$, then with probability at least $1-2\exp(-{m_1}/{2})$, one has:
$$
    \|\mW\step{1}_l\|_2 \leq 3.
$$
\end{lemma}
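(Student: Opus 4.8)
The plan is to reduce the claim to the standard Gaussian random-matrix bound of \cref{lemma:inequality_initialization} (Corollary 5.35 in \citet{vershynin_2012}) by a simple rescaling. First I would write $\mW\step{1}_l = \frac{1}{\sqrt{m_1}}\, \mG$, where $\mG \in \R^{m_1 \times m_2}$ has i.i.d.\ $\mathcal{N}(0,1)$ entries; this is precisely the distribution obtained by factoring out the variance $1/m_1$. Since the spectral norm is absolutely homogeneous and equals the largest singular value, $\|\mW\step{1}_l\|_2 = \sigma_{\max}(\mW\step{1}_l) = \tfrac{1}{\sqrt{m_1}}\,\sigma_{\max}(\mG)$, so it suffices to control $\sigma_{\max}(\mG)$.

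Next I would invoke \cref{lemma:inequality_initialization} applied to $\mG$ with the choice $\zeta = \sqrt{m_1}$. This gives, with probability at least $1 - 2\exp(-\zeta^2/2) = 1 - 2\exp(-m_1/2)$, the upper bound $\sigma_{\max}(\mG) \le \sqrt{m_1} + \sqrt{m_2} + \sqrt{m_1} = 2\sqrt{m_1} + \sqrt{m_2}$. Using the hypothesis $m_1 \ge m_2$, hence $\sqrt{m_2} \le \sqrt{m_1}$, we obtain $\sigma_{\max}(\mG) \le 3\sqrt{m_1}$, and dividing by $\sqrt{m_1}$ yields $\|\mW\step{1}_l\|_2 \le 3$ on the same event. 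Note that only the upper inequality of \cref{lemma:inequality_initialization} is needed, so the $\sigma_{\min}$ bound plays no role.

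There is essentially no obstacle here: the only genuine choice is the value of $\zeta$, and taking $\zeta = \sqrt{m_1}$ is what simultaneously produces the desired exponentially small failure probability $2\exp(-m_1/2)$ and a clean constant; any fixed multiple of $\sqrt{m_1}$ would work and merely change the constant $3$. The lemma will later be combined with a union bound over the $L$ layers (and over the other high-probability events in the proof), but that bookkeeping is outside the scope of this statement.
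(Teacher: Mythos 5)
Your proof is correct and is essentially identical to the paper's: both rescale $\mW\step{1}_l$ to a standard Gaussian matrix, apply \cref{lemma:inequality_initialization} with $\zeta=\sqrt{m_1}$, and use $m_1\ge m_2$ to bound the spectral norm by $3$. No gaps.
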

\begin{proof}[Proof of \cref{lemma:gaussl2}]
Following \cref{lemma:inequality_initialization}, w.p. at least $1 - 2 \exp (-\zeta^2/2)$, one has:
$$
    \|\mW\step{1}_l\|_2 \le
    \sqrt {\frac{1}{m_1}}(
    \sqrt{m_1}+\sqrt{m_2}+\zeta).
$$
Setting $\zeta =\sqrt{m_1}$ and using the fact that $m_1 \ge m_2$ completes the proof.
\end{proof}

\begin{lemma}[The order of the network output at initialization]
\label{lemma:networkputput}
Fix any $l \in [1,L-1] $ and $\vx$, when the width satisfies $m = \Omega{(N/\delta)}$, with probability at least $1- 2l\exp{(-m/2)}
-\delta$ over the randomness of $\{\mW_i\step{1}\}_{i=1}^l$, 
we have:
\begin{equation*}
C_\mathrm{fmin} \le  \left \| \vf_l(\vx,\mW\step{1}) \right \|_2  \le C_\mathrm{fmax}
\,,
\end{equation*}
where $C_\mathrm{fmax}$ and $C_\mathrm{fmax}$ are some $\Theta (1)$ constant.

\end{lemma}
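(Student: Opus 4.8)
The plan is to prove the two-sided bound by induction on the layer index $l$, carrying the stronger loop invariant that $\|\vf_l(\vx,\mW\step{1})\|_2 \in [c_l, C_l]$ for layer-dependent constants $0 < c_l \le C_l$ that depend only on $L$, $\lipmax$ and the activations, and in particular are independent of $m$ and $N$; the claimed constants are then $C_\mathrm{fmin} := \min_{l \le L-1} c_l$ and $C_\mathrm{fmax} := \max_{l \le L-1} C_l$. Since $\vf_l$ is a deterministic function of $\mW_1\step{1},\dots,\mW_l\step{1}$ and each layer introduces a fresh weight matrix, at step $l$ I would condition on $\mW_1\step{1},\dots,\mW_{l-1}\step{1}$ — so that $\vf_{l-1}$ is a fixed vector of norm $\Theta(1)$ by the inductive hypothesis — and use only the randomness of $\mW_l\step{1}$, with a final union bound over the $l$ layers collecting all failure probabilities into the stated $2l\exp(-m/2)+\delta$.

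\textbf{Base case.} Because $\|\vx\|_2 = 1$ by \cref{assumption:distribution_1}, the entries of $\mW_1\step{1}\vx$ are i.i.d.\ $\mathcal{N}(0,1/m)$, so $\|\vf_1\|_2^2 = \sum_{i=1}^{m}\sigma_1\big((\mW_1\step{1}\vx)^{(i)}\big)^2$ is a sum of $m$ i.i.d.\ sub-exponential random variables with mean $\mu := m\,\mathbb{E}_{Z\sim\mathcal{N}(0,1/m)}[\sigma_1(Z)^2]$. The estimate $|\sigma_1(z)| \le |\sigma_1(0)| + \lipmax|z| \le \lipmax(1+|z|)$ from \cref{assumption:activation} gives $\mu = \mathcal{O}(\lipmax^2)$, while a matching lower bound $\mu = \Omega(1)$ follows from the non-degeneracy of $\sigma_1$ near the origin; hence $\mu = \Theta(1)$. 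Bernstein's inequality for i.i.d.\ sub-exponentials then yields $\|\vf_1\|_2^2 \in [\tfrac12\mu,\tfrac32\mu]$ with failure probability $\exp(-\Omega(m))$, which is at most $\delta$ under the width assumption $m = \Omega(N/\delta)$.

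\textbf{Inductive step.} Given $\|\vf_{l-1}\|_2 \in [c_{l-1},C_{l-1}]$, write $\vf_l = \alpha_{l-1}\vf_{l-1} + \tfrac1L\sigmal(\mW_l\step{1}\vf_{l-1})$. For the upper bound I would use the elementwise Lipschitz estimate $\|\sigmal(\mW_l\step{1}\vf_{l-1})\|_2 \le \sqrt{m}\,|\sigmal(0)| + \lipmax\,\|\mW_l\step{1}\|_2\,\|\vf_{l-1}\|_2$ together with $\|\mW_l\step{1}\|_2 \le 3$ from \cref{lemma:gaussl2} (which costs $2\exp(-m/2)$) and $|\alpha_{l-1}| \le 1$; assuming the activations satisfy $\sigmal(0) = 0$ (as ReLU and LeakyReLU do; otherwise the $\sqrt{m}\,|\sigmal(0)|$ term requires separate handling), this gives $\|\vf_l\|_2 \le (1 + 3\lipmax/L)\,\|\vf_{l-1}\|_2$, and composing the at most $L-2$ such steps with the base case keeps the upper constant $\Theta(1)$ for the fixed depth $L$. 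For the lower bound I would split on $\alpha_{l-1}$. If $\alpha_{l-1} = 1$, then conditionally on $\vf_{l-1}$ the entries of $\mW_l\step{1}\vf_{l-1}$ are i.i.d.\ $\mathcal{N}(0,\|\vf_{l-1}\|_2^2/m)$, and expanding $\|\vf_l\|_2^2 = \|\vf_{l-1}\|_2^2 + \tfrac2L\langle\vf_{l-1},\sigmal(\mW_l\step{1}\vf_{l-1})\rangle + \tfrac1{L^2}\|\sigmal(\mW_l\step{1}\vf_{l-1})\|_2^2$, the cross term has conditional mean $\mathcal{O}(1/\sqrt{m})$ (exactly zero for an odd activation) and concentrates around it, so $\|\vf_l\|_2^2 \ge \tfrac12\|\vf_{l-1}\|_2^2$ once $m$ is large. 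If $\alpha_{l-1} = 0$, then $\|\vf_l\|_2^2 = \tfrac1{L^2}\sum_{i=1}^{m}\sigmal\big((\mW_l\step{1}\vf_{l-1})^{(i)}\big)^2$ is once more an i.i.d.\ sub-exponential sum with mean $\tfrac{m}{L^2}\,\mathbb{E}_{Z\sim\mathcal{N}(0,\|\vf_{l-1}\|_2^2/m)}[\sigmal(Z)^2] = \Theta(1)$, so Bernstein again gives the two-sided bound. Union-bounding the at most $l$ spectral-norm events (total $2l\exp(-m/2)$) and the at most $l$ concentration events (total $\le \delta$) gives the probability in the statement.

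\textbf{Expected main obstacle.} The delicate point is the \emph{lower} bound when no skip connection is present: one needs a lower bound on $\mathbb{E}_{Z\sim\mathcal{N}(0,\Theta(1/m))}[\sigmal(Z)^2]$ that is constant in $m$, which is precisely where a quantitative non-degeneracy of the activation near $0$ (implicit in \cref{assumption:activation} for the usual choices) must be invoked — a Lipschitz activation vanishing on a neighbourhood of $0$ would let $\|\vf_l\|_2$ shrink faster than any constant. The complementary bookkeeping obstacle is making sure the layerwise constants $c_l, C_l$, which degrade geometrically in the fixed depth $L$ and in $\lipmax$, never pick up dependence on $m$ or $N$ (this is also where the $\sqrt{m}\,|\sigmal(0)|$ bias term has to be controlled), so that $\min_l c_l$ and $\max_l C_l$ are honest $\Theta(1)$ quantities.
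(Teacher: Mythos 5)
Your induction-plus-concentration argument is essentially the content of what the paper invokes: its entire proof is a one-line citation of the spectral-norm bound (\cref{lemma:gaussl2}) together with Lemma C.1 of \citet{du2019gradient}, whose proof is the same layer-wise conditioning on $\mW_1\step{1},\dots,\mW_{l-1}\step{1}$ followed by Bernstein-type concentration over the fresh layer that you carry out, so your route and the paper's coincide in substance. The two caveats you flag are genuine and not artifacts of your approach: \cref{assumption:activation} alone permits $\sigmal \equiv 0$ (which kills the lower bound whenever $\alpha_{l-1}=0$) and permits $\sigmal(0)\neq 0$ (under this initialization the preactivations are $\Theta(1/\sqrt{m})$ per coordinate, so $\|\sigmal(\mW_l\step{1}\vf_{l-1})\|_2$ then grows like $\sqrt{m}\,|\sigmal(0)|$ and the upper bound fails), so the lemma implicitly inherits the centering and non-degeneracy hypotheses of the cited result.
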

\begin{proof}
    The result can be obtained by simply applying \cref{lemma:gaussl2} for the initial weights and Lemma C.1 in \citet{du2019gradient}.
\end{proof}

The following lemma shows that the perturbation of $\vx$ can be considered as an equivalent perturbation of the weights.
\begin{lemma}
\label{lemma:equal_perturbation}
Given any fixed input $\vx \in\mathcal{X}$, with probability at least $1-2e^{-m/2}$ over random initialization, for any $\hatvx\in \mathcal{X}$ satisfying $\norm{\vx - \hatvx}_2\le\rho$, and any $\mW\in \hat{\mathcal{B}}(\mW\step{1},\tau)$, there exists $\widetilde{\mW}\in B(\mW\step{1},\tau+3\rho+\tau\rho)$ such that:
\begin{equation*}
    \begin{split}
     &   \vf_l(\hatvx,\mW) =\vf_l(\vx,\widetilde{\mW}),\quad  1\leq l \leq L-1, 
    \\& 
    \mD_l(\hatvx,\mW) = \mD_l(\vx,\widetilde{\mW}),\quad  1\leq l \leq L-1, 
    \\&   f(\hatvx,\mW) = f(\vx,\widetilde{\mW}).
    \end{split}
\end{equation*}
\label{lem:proposition1}
\end{lemma}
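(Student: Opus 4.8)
`The plan is to prove Lemma~\ref{lem:proposition1} by induction on the layer index $l$, constructing the modified weights $\widetilde{\mW}$ layer by layer. The key observation driving the construction is that the perturbation of the input only enters the network through the first layer: $\vf_1(\hatvx,\mW) = \sigma_1(\mW_1\hatvx)$, so if we set $\widetilde{\mW}_1 := \mW_1(\mI + (\hatvx-\vx)\vx^\top)$ (using $\norm{\vx}_2 = 1$, so that $\widetilde{\mW}_1\vx = \mW_1\hatvx$), then $\vf_1(\vx,\widetilde{\mW}) = \vf_1(\hatvx,\mW)$ exactly. For all deeper layers $l \ge 2$ we simply take $\widetilde{\mW}_l := \mW_l$; since the recursions in \cref{eq:network} for $\vf_l$, $\mD_l$, and $f$ depend only on $\vf_{l-1}$ and $\mW_l,\dots$, matching $\vf_1$ propagates forward to give $\vf_l(\hatvx,\mW) = \vf_l(\vx,\widetilde{\mW})$, $\mD_l(\hatvx,\mW) = \mD_l(\vx,\widetilde{\mW})$ for all $l \in [L-1]$, and finally $f(\hatvx,\mW) = f(\vx,\widetilde{\mW})$.

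\textbf{Verifying the norm bound.} It remains to check $\widetilde{\mW} \in B(\mW\step{1},\tau+3\rho+\tau\rho)$, i.e., $\norm{\widetilde{\mW}_l - \mW\step{1}_l}_\mathrm{F} \le \tau+3\rho+\tau\rho$ for every $l$. For $l \ge 2$ this is immediate since $\widetilde{\mW}_l = \mW_l$ and $\mW \in \hat{\mathcal{B}}(\mW\step{1},\tau)$ gives $\norm{\mW_l - \mW\step{1}_l}_\mathrm{F} \le \tau$. For $l=1$ I would write
\[
\widetilde{\mW}_1 - \mW\step{1}_1 = (\mW_1 - \mW\step{1}_1) + \mW_1(\hatvx - \vx)\vx^\top
\]
and bound the two terms separately by the triangle inequality. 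The first term is $\le \tau$. For the second term, $\norm{\mW_1(\hatvx-\vx)\vx^\top}_\mathrm{F} = \norm{\mW_1(\hatvx-\vx)}_2 \le \norm{\mW_1}_2\,\norm{\hatvx-\vx}_2 \le \norm{\mW_1}_2\,\rho$, using $\norm{\vx}_2 = 1$ so that $(\hatvx-\vx)\vx^\top$ is rank one with Frobenius norm equal to $\norm{\hatvx-\vx}_2$. Now $\norm{\mW_1}_2 \le \norm{\mW\step{1}_1}_2 + \norm{\mW_1 - \mW\step{1}_1}_\mathrm{F} \le 3 + \tau$, where $\norm{\mW\step{1}_1}_2 \le 3$ holds with probability at least $1 - 2e^{-m/2}$ by \cref{lemma:gaussl2}. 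Combining, $\norm{\widetilde{\mW}_1 - \mW\step{1}_1}_\mathrm{F} \le \tau + (3+\tau)\rho = \tau + 3\rho + \tau\rho$, as claimed; this is the only place the probabilistic event is needed, and it accounts for the failure probability $2e^{-m/2}$ in the statement.

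\textbf{Main obstacle.} The argument is essentially a bookkeeping exercise once the rank-one update trick for the first layer is in hand, so there is no deep difficulty. The one point requiring care is that the construction must preserve \emph{both} the pre-activations (hence $\mD_l$, the diagonal matrices of activation derivatives) and the post-activations $\vf_l$ simultaneously; this is automatic here because $\widetilde{\mW}_1\vx = \mW_1\hatvx$ as vectors, so $\mD_1(\hatvx,\mW) = \mathrm{Diag}(\sigma_1'(\mW_1\hatvx)) = \mathrm{Diag}(\sigma_1'(\widetilde{\mW}_1\vx)) = \mD_1(\vx,\widetilde{\mW})$, and an induction then carries it through the remaining layers. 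I would also note explicitly that the construction does not require $\hatvx$ to lie on the unit sphere — only $\norm{\vx}_2=1$ is used — and that the same proof applies verbatim to the CNN case in \cref{eq:networkcnn} by replacing the first-layer matrix multiplication with the convolution and adjusting the norm accounting by the appropriate factors of $p$ and $\kappa$.
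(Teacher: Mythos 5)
Your proposal is correct and follows essentially the same route as the paper: the identical rank-one first-layer update $\widetilde{\mW}_1 = \mW_1 + \mW_1(\hatvx-\vx)\vx^\top$ (the paper writes the denominator $\norm{\vx}_2^2$, which equals $1$ under Assumption~\ref{assumption:distribution_1}), keeping all deeper layers unchanged, and the same triangle-inequality accounting with $\norm{\mW\step{1}_1}_2 \le 3$ from \cref{lemma:gaussl2} to get the $\tau+3\rho+\tau\rho$ radius. Your explicit remarks about $\mD_l$ and about only $\norm{\vx}_2=1$ being needed are consistent with, and slightly more detailed than, the paper's argument.
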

\begin{proof}
Let us define:
\begin{align*}
    \widetilde{\mW}_{1}={\mW}_{1}+\frac{{\mW}_{1}\left(\hatvx-\vx\right)\vx^\top}{\norm{\vx}_2^2}.
\end{align*}
Obviously, $\widetilde{\mW}_{1}$  satisfies $\widetilde{\mW}_{1}\vx={\mW}_{1}\hatvx$. Then setting $\widetilde{\mW}_{2},\ldots,\widetilde{\mW}_{L}$ equal to ${\mW}_{2},\ldots,{\mW}_{L}$ 
will make the output of the following layers
equal.
Besides, by \cref{lemma:gaussl2}, with probability $1-2e^{-m/2}$, one has $\norm{\mW_1\step{1}}_2\le 3$ and thus

\begin{equation*}
\begin{split}
&    \norm{\widetilde{\mW}_{1}-{\mW}_{1}}_F
=
\norm{
\frac{{\mW}_{1}\left(\hatvx-\vx\right)\vx^\top}{\norm{\vx}_2^2}
}_F
\le
\frac{\norm{
{\mW}_{1}\left(\hatvx-\vx\right)
}_2
\norm{\vx}_2}
{\norm{\vx}_2^2}
=
\norm{{\mW}_1}_2
\norm{\hatvx-\vx}_2
\\&\le
    \left(
    \norm{
    {\mW}_1\step{1}
    }_2+\tau \right) \rho
    \le 3\rho+\rho \tau\,,
\end{split}
\end{equation*}
which indicates that 
$\norm{\widetilde{\mW} -{\mW}}_\mathrm{F} \leq 3\rho+\rho \tau$ w.p.
Lastly, by triangle inequality, with probability $1-2e^{-m/2}$, we have:
$$
\norm{\widetilde{\mW} -\widetilde{\mW}\step{1}}_\mathrm{F}
\leq
\norm{\widetilde{\mW} -\mW}_\mathrm{F}
+
\norm{\mW -\widetilde{\mW}\step{1}}_\mathrm{F}
\leq
\tau+3\rho+\tau\rho,
$$
which completes the proof.
\end{proof}

The following lemma shows that the network output does not change too much if the weights are close to that in initialization.
\begin{lemma}
\label{lemma:bound_for_perturbation}
For $\widetilde{\mW} \in \mathcal{B} (\mW\step{1},\tau)$ with $\tau \le 3$, if the width satisfies $m = \Omega{(N/\delta)}$, with probability at least $1-2(L-1)\exp(-m/2)- \delta
$, one has:
\begin{equation*}
    \begin{split}
&
   \norm{ 
    \vf_{L-1}(\vx, \widetilde{\mW}) - \vf_{L-1}(\vx, \mW\step{1})}_2
\leq
    e^{6 \lipmax } ( \lipmax +C_\mathrm{fmax} ) \tau\,.
    \end{split}
\end{equation*}

\end{lemma}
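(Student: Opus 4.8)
The plan is to telescope the difference $\vf_{L-1}(\vx,\widetilde\mW) - \vf_{L-1}(\vx,\mW\step{1})$ layer by layer and set up a recursion on the per-layer error $e_l := \norm{\vf_l(\vx,\widetilde\mW) - \vf_l(\vx,\mW\step{1})}_2$. For the base case $l=1$, I would write $\vf_1(\vx,\widetilde\mW) - \vf_1(\vx,\mW\step{1}) = \sigma_1(\widetilde\mW_1\vx) - \sigma_1(\mW_1\step{1}\vx)$ and use the Lipschitz bound from \cref{assumption:activation} together with $\norm{\vx}_2=1$ (\cref{assumption:distribution_1}) to get $e_1 \le \lipmax \norm{\widetilde\mW_1 - \mW_1\step{1}}_2 \le \lipmax \tau$. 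For the inductive step, I would add and subtract a hybrid term:
\begin{equation*}
\vf_l(\vx,\widetilde\mW) - \vf_l(\vx,\mW\step{1}) = \tfrac{1}{L}\big(\sigmal(\widetilde\mW_l \vf_{l-1}(\vx,\widetilde\mW)) - \sigmal(\mW_l\step{1}\vf_{l-1}(\vx,\mW\step{1}))\big) + \alpha_{l-1}\big(\vf_{l-1}(\vx,\widetilde\mW) - \vf_{l-1}(\vx,\mW\step{1})\big),
\end{equation*}
then split the $\sigmal$-difference into the part where only the weight changes and the part where only the input changes, bounding the former by $\lipmax \norm{\widetilde\mW_l - \mW_l\step{1}}_2 \norm{\vf_{l-1}(\vx,\widetilde\mW)}_2$ and the latter by $\lipmax \norm{\mW_l\step{1}}_2 e_{l-1}$.

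The inputs to this recursion are three facts that I would invoke from earlier lemmas, each holding on a high-probability event: (i) $\norm{\mW_l\step{1}}_2 \le 3$ for all $l\in[L-1]$ from \cref{lemma:gaussl2} (union bound over $L-1$ layers, absorbing the $2(L-1)\exp(-m/2)$ loss); (ii) $\norm{\vf_{l-1}(\vx,\mW\step{1})}_2 \le C_\mathrm{fmax}$ from \cref{lemma:networkputput} (this is where the $\delta$ and the $m=\Omega(N/\delta)$ width requirement enter); and (iii) $\norm{\vf_{l-1}(\vx,\widetilde\mW)}_2 \le C_\mathrm{fmax} + e_{l-1}$, obtained by combining (ii) with the definition of $e_{l-1}$. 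Also I would need $\norm{\widetilde\mW_l - \mW_l\step{1}}_2 \le \norm{\widetilde\mW_l - \mW_l\step{1}}_\mathrm{F} \le \tau$ from the hypothesis $\widetilde\mW\in\mathcal B(\mW\step{1},\tau)$. Putting these together and using $\alpha_{l-1}\in\{0,1\}$ gives a recursion of the form $e_l \le (1 + \tfrac{3\lipmax}{L}) e_{l-1} + \tfrac{\lipmax}{L}(\lipmax + C_\mathrm{fmax} + e_{l-1})$, which I would simplify to $e_l \le (1 + \tfrac{C\lipmax}{L}) e_{l-1} + \tfrac{\lipmax}{L}(\lipmax + C_\mathrm{fmax})$ for a small constant $C$ (using $\tau\le 3$ to control the $e_{l-1}$ term that comes with the weight-difference factor).

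Unrolling this linear recursion over the $L-2$ intermediate layers, the multiplicative factor accumulates to at most $(1+\tfrac{C\lipmax}{L})^{L-1} \le e^{C\lipmax}$, and the additive contributions sum geometrically; with the base case $e_1 \le \lipmax\tau$ folded in, this yields $e_{L-1} \le e^{O(\lipmax)}(\lipmax + C_\mathrm{fmax})\tau$, matching the claimed bound $e^{6\lipmax}(\lipmax + C_\mathrm{fmax})\tau$ after pinning down the absolute constant (the factor $6$ presumably comes from tracking $1 + \tfrac{6\lipmax}{L}$ through the $\norm{\mW_l\step{1}}_2\le 3$ bound times the Lipschitz constant, doubled). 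The main obstacle I anticipate is bookkeeping the constants precisely enough to land exactly on $e^{6\lipmax}$ rather than some larger exponent — in particular, making sure the $e_{l-1}$-dependent term coming from the weight perturbation (which carries the factor $\norm{\widetilde\mW_l-\mW_l\step{1}}_2 \le \tau \le 3$) is correctly merged into the homogeneous part of the recursion without inflating the exponential rate, and verifying that the probability bookkeeping (union over layers for \cref{lemma:gaussl2} plus the single $\delta$ budget from \cref{lemma:networkputput}) indeed collapses to the stated $1 - 2(L-1)\exp(-m/2) - \delta$.
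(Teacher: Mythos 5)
Your proposal follows essentially the same route as the paper's proof: the identical base case $e_1 \le \lipmax\tau$, the same decomposition $\widetilde{\mW}_l\widetilde{\vf}_{l-1} - \mW_l\vf_{l-1} = \mW_l(\widetilde{\vf}_{l-1}-\vf_{l-1}) + (\widetilde{\mW}_l-\mW_l)\widetilde{\vf}_{l-1}$ fed by \cref{lemma:gaussl2} and \cref{lemma:networkputput}, and the same unrolling of the linear recursion with rate $1+\tfrac{(3+\tau)\lipmax}{L}\le 1+\tfrac{6\lipmax}{L}$ to reach $e^{6\lipmax}(\lipmax+C_\mathrm{fmax})\tau$. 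The only blemish is a sketch-level slip in your displayed recursion (the additive term should be $\tfrac{\lipmax}{L}\tau C_\mathrm{fmax}$, with the $\lipmax$ entering only via the base case), which you already flag as bookkeeping and which does not affect the argument.
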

\begin{proof}
To simplify the notation, in the following proof, the variable with $\widetilde{\cdot}$ is related to $\widetilde{\mW}$, and without $\widetilde{\cdot}$ is related to $\mW\step{1}$.
For the output of the first layer, we have:
\begin{equation*}
\begin{split}
& \left \| \widetilde{\vf}_1 - \vf_1 \right \| _2 = \left \| \sigmaone(\widetilde{\mW}_1\vx) - \sigmaone(\mW_1\vx) \right \| _2
\leq \lipone\left \| \widetilde{\mW}_1 - \mW_1 \right \| _2 \left \|\vx \right \| _2 \leq \tau \lipone.
\end{split}
\end{equation*}

For the $l$-th layer ($l \in \{2,3,\dots,L-1\}$), we have:
\begin{equation*}
\begin{split}
&
    \left \| \widetilde{\vf}_l - \vf_l \right \| _2
\\& 
    \left \| 
    \frac{1}{L}
    \sigmal(\widetilde{\mW}_l\widetilde{\vf}_{l-1}) + \alpha_{l-1}\widetilde{\vf}_{l-1} - 
        \frac{1}{L}
        \sigmal(\mW_l\vf_{l-1})-\alpha_{l-1}\vf_{l-1} \right \| _2\\
& \leq 
        \frac{1}{L}
    \left \| \sigmal(\widetilde{\mW}_l\widetilde{\vf}_{l-1}) - \sigmal(\mW_l\vf_{l-1}) \right \| _2 + \alpha_{l-1}\left \| \widetilde{\vf}_{l-1} - \vf_{l-1} \right \| _2 
\quad\text{(By triangle inequality)}\\
& \leq
        \frac{\lipmax}{L}  \left \| \widetilde{\mW}_l\widetilde{\vf}_{l-1} - \mW_l\vf_{l-1} \right \| _2 + \left \| \widetilde{\vf}_{l-1} - \vf_{l-1} \right \| _2 \quad\quad\text{(By the Lipschitz continuity of $\sigmal$)}
\\& =
    \frac{\lipmax}{L} \left \| \mW_l(\widetilde{\vf}_{l-1} - \vf_{l-1})+(\widetilde{\mW}_l-\mW_l)\widetilde{\vf}_{l-1} \right \| _2 + \left \| \widetilde{\vf}_{l-1} - \vf_{l-1} \right \| _2\\
& \leq
    \frac{\lipmax}{L}\left\{\left \| \mW_l\right \| _2 \left \|\widetilde{\vf}_{l-1} - \vf_{l-1}\right \|_2+\left \| \widetilde{\mW}_l-\mW_l\right \| _2 \left \|\widetilde{\vf}_{l-1} \right \| _2\right\} + \left \| \widetilde{\vf}_{l-1} - \vf_{l-1} \right \| _2\\
& \leq
    (\frac{\lipmax}{L} \left \| \mW_l\right \| _2+1) \left \|\widetilde{\vf}_{l-1} - \vf_{l-1}\right \|_2+\frac{\lipmax}{L}\tau\bigg  (\left \|\widetilde{\vf}_{l-1} - \vf_{l-1}\right \| _2 + \left \| \vf_{l-1}\right \| _2\bigg)\\
& = \left\{\frac{\lipmax}{L}(\left \| \mW_l\right \| _2+\tau)+1\right\} \left \|\widetilde{\vf}_{l-1} - \vf_{l-1}\right \|_2+\frac{\lipmax}{L}\tau \left \| \vf_{l-1}\right \| _2\,.
\end{split}
\label{eq:bound_for_perturbation_2}
\end{equation*}
Therefore, by the inequality recursively and~\cref{lemma:gaussl2,lemma:networkputput}, with probability at least $1-2(L-1)\exp(-m/2)- \delta
$, we have:
\begin{equation*}
   \begin{split}
&
    \left \| \widetilde{\vf}_{L-1} - \vf_{L-1} \right \|_2
\\&\leq 
    [\frac{\lipmax}{L}(3+\tau)+1] \left \|\widetilde{\vf}_{L-2} - \vf_{L-2}\right \|_2+\frac{\lipmax}{L} \tau C_\mathrm{fmax}, \quad \mbox{(By \cref{lemma:gaussl2,lemma:networkputput})}
\\&\leq 
    [\frac{\lipmax}{L} (3+\tau)+1]^{L-2} \norm{\widetilde{\vf}_1 - \vf_1
    }_2
    + \sum_{i=0}^{L-3}  [\frac{\lipmax}{L} (3+\tau)+1]^i \frac{\lipmax}{L}  \tau C_\mathrm{fmax}
    \quad \mbox{(By recursion)}
\\&\leq 
    \left(\frac{6\lipmax}{L} +1\right)^{L-2} 
    \tau  \lipmax
    + \sum_{i=0}^{L-3}  [ \frac{6 \lipl}{L} +1]^i \frac{\lipmax}{L}   \tau C_\mathrm{fmax}
\\& = 
    \left(\frac{6\lipmax}{L} +1\right)^{L-2} 
    \tau  \lipmax
    + 
    \frac{1- (6 \lipl / L + 1)^{L-2}}{1- 6 \lipl / L - 1}
    \frac{\lipmax}{L}   \tau C_\mathrm{fmax}
\\& \le
    e^{6 \lipmax } ( \lipmax +C_\mathrm{fmax} ) \tau \,.
   \end{split}
\end{equation*}
\end{proof}
\begin{lemma}
\label{lemma:lemma_4.1_in_GuQuanquan}
For $ \widetilde{\mW},\mW
\in \mathcal{B} (\mW\step{1},\tau )$ with $\tau \le 3$, when the width satisfies $m = \Omega{(N/\delta)}$ and $\rho \le 1$, with probability at least $1-2L\exp(-m/2)- \delta$, one has:
\begin{equation*}
\begin{split}
&
    \left | 
    f(\vx,\widetilde{\mW}) 
    - 
    f (\vx,\mW) 
    - \left \langle (1-\reg) \nabla_\mW  f(\vx,\mW)
    +\reg \nabla_\mW f(\per(\vx,\mW),\mW)
    ,
    \widetilde{\mW} -\mW  \right \rangle \right |
\\& \myquad[20] \le 
           18 e^{6 \lipmax}  (1+\reg)
        ( \lipmax +C_\mathrm{fmax} ) \tau\,.
\\& 
    \left | 
    f(\per(\vx,\mW),\widetilde{\mW}) 
    - 
    f (\per(\vx,\mW),\mW) 
    - \left \langle (1-\reg) \nabla_\mW  f(\vx,\mW)
    +\reg \nabla_\mW f(\per(\vx,\mW),\mW)
    ,
    \widetilde{\mW} -\mW  \right \rangle \right |
\\& \myquad[20] \le 
    18 e^{6 \lipmax}  (2-\reg)
        ( \lipmax +C_\mathrm{fmax} ) \tau\,.
\end{split}
\end{equation*}
\end{lemma}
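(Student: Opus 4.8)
The plan is to reduce both inequalities to a single ``linearization at one input'' estimate and then recombine everything by an elementary algebraic manipulation of the mixed gradient. Concretely, I would first establish the auxiliary bound
\[
\left| f(\vx',\widetilde{\mW}) - f(\vx',\mW) - \big\langle \nabla_\mW f(\vx',\mW),\, \widetilde{\mW}-\mW\big\rangle \right| \;\le\; C_\ell\,\tau, \qquad \vx'\in\{\vx,\ \per(\vx,\mW)\},
\]
with $C_\ell = \Theta\!\big(e^{6\lipmax}(\lipmax + C_\mathrm{fmax})\big)$, on the same high-probability event as Lemmas~\ref{lemma:gaussl2}--\ref{lemma:bound_for_perturbation}, together with the companion gradient-norm bound $\|\nabla_\mW f(\vx',\mW)\|_\mathrm{F} \le C_g = \Theta\!\big(e^{6\lipmax}(\lipmax + C_\mathrm{fmax})\big)$ for the same two inputs. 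For $\vx'=\vx$ these are the usual near-initialization facts; for $\vx'=\per(\vx,\mW)$ I would either redo the layer recursion below directly at that input (legitimate since $\mW$ and $\widetilde{\mW}$ still lie near $\mW\step{1}$) or invoke \cref{lem:proposition1} to move the input perturbation onto the weights, at the cost of inflating the radius from $\tau$ to $\tau + 3\rho + \rho\tau$ (still $\mathcal{O}(\tau+\rho)$, hence $\mathcal{O}(1)$ under $\rho\le1$ and $\tau$ small).

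\textbf{Proving the one-input estimate.} Peel off the last layer: with $\vf := \vf_{L-1}(\vx',\mW)$ and $\widetilde{\vf} := \vf_{L-1}(\vx',\widetilde{\mW})$, and using $\nabla_{\vw_L}f = \vf$, the quantity to bound equals
\[
\big\langle \widetilde{\vw}_L - \vw_L,\ \widetilde{\vf}-\vf\big\rangle \;+\; \big\langle \vw_L,\ (\widetilde{\vf}-\vf) - J\,(\widetilde{\mW}_{1:L-1}-\mW_{1:L-1})\big\rangle,
\]
where $J$ is a (sub)Jacobian of the feature map $\vf_{L-1}$ in the hidden weights. The first inner product is at most $\|\widetilde{\vw}_L-\vw_L\|_2\,\|\widetilde{\vf}-\vf\|_2 \le 2\tau\cdot e^{6\lipmax}(\lipmax+C_\mathrm{fmax})\tau = \mathcal{O}(\tau)$ by \cref{lemma:bound_for_perturbation} (absorbing a factor $\tau\le3$). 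In the second term, $\|\vw_L\|_2 = \mathcal{O}(1)$ by \cref{lemma:gaussl2}, so it remains to control the feature-map linearization error $\|(\widetilde{\vf}-\vf) - J(\widetilde{\mW}_{1:L-1}-\mW_{1:L-1})\|_2$. I would do this by induction on the layer index, mirroring the recursion in the proof of \cref{lemma:bound_for_perturbation}: at each layer the error introduced by replacing the nonlinear update $\tfrac{1}{L}\sigma_l(\mW_l\,\cdot) + \alpha_{l-1}(\cdot)$ by its subgradient-linearization is at most $\tfrac{2\lipmax}{L}$ times the perturbation of its pre-activation argument, which is $\mathcal{O}(\tau)$ by \cref{lemma:gaussl2,lemma:networkputput,lemma:bound_for_perturbation}; these per-layer contributions accumulate with amplification at most $e^{6\lipmax}$, the geometric blow-up being tamed by the $1/L$ normalization exactly as in that proof. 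The gradient-norm bound $\|\nabla_\mW f\|_\mathrm{F} \le C_g$ comes from the same ingredients: $\|\nabla_{\vw_L}f\|_2 = \|\vf_{L-1}\|_2 \le C_\mathrm{fmax}$, and each $\|\nabla_{\mW_l}f\|_\mathrm{F}$ is bounded by a $1/L$-weighted product of the spectral norms $\|\mW_{l'}\step{1}\|_2\le3$, the Lipschitz constant $\lipmax$, and $\|\vf_{l-1}\|_2 = \Theta(1)$.

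\textbf{Recombining.} Writing $(1-\reg)\nabla_\mW f(\vx,\mW) + \reg\nabla_\mW f(\per(\vx,\mW),\mW) = \nabla_\mW f(\vx,\mW) + \reg\big(\nabla_\mW f(\per(\vx,\mW),\mW) - \nabla_\mW f(\vx,\mW)\big)$, the first quantity in the lemma equals
\[
\Big[ f(\vx,\widetilde{\mW}) - f(\vx,\mW) - \big\langle \nabla_\mW f(\vx,\mW),\ \widetilde{\mW}-\mW\big\rangle\Big] \;-\; \reg\,\big\langle \nabla_\mW f(\per(\vx,\mW),\mW) - \nabla_\mW f(\vx,\mW),\ \widetilde{\mW}-\mW\big\rangle .
\]
The bracket is $\le C_\ell\tau$ by the one-input estimate, and the second term is $\le \reg\big(\|\nabla_\mW f(\per(\vx,\mW),\mW)\|_\mathrm{F} + \|\nabla_\mW f(\vx,\mW)\|_\mathrm{F}\big)\|\widetilde{\mW}-\mW\|_\mathrm{F} \le 2\reg C_g\tau$, so the total is $\le (1+\reg)\max\{C_\ell,2C_g\}\,\tau$; choosing the absolute constants so that $\max\{C_\ell,2C_g\}\le 18\,e^{6\lipmax}(\lipmax+C_\mathrm{fmax})$ yields the first bound. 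The second bound is identical except that one groups the mixed gradient around $\nabla_\mW f(\per(\vx,\mW),\mW)$, applies the one-input estimate at the point $\per(\vx,\mW)$, and is left with a cross term carrying the prefactor $(1-\reg)$, which turns $(1+\reg)$ into $(2-\reg)$.

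\textbf{Main obstacle.} The delicate step is the feature-map linearization inside the second paragraph. Because the activations are only assumed Lipschitz and not $C^1$, no mean-value/Taylor estimate is available; the recursion must be arranged so that the per-layer error is genuinely of order $\tfrac{1}{L}\big(\tau + \|\widetilde{\vf}_{l-1}-\vf_{l-1}\|_2\big) = \mathcal{O}(\tau/L)$, and the depth-$L$ product of amplification factors must collapse to a clean closed-form constant by virtue of the $1/L$ scaling — and all of this must hold uniformly over the search space, that is, for every Lipschitz activation $\sigma_l$ and every skip pattern $\alpha_l$. A secondary bookkeeping point is checking that the radius inflation from \cref{lem:proposition1} (when used for the adversarial input) keeps all the weights inside the $\mathcal{O}(1)$ regime required by \cref{lemma:gaussl2,lemma:networkputput,lemma:bound_for_perturbation}, which is exactly where the hypotheses $\rho\le1$ and $\tau$ sufficiently small are used.
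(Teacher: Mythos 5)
Your proposal is correct and rests on the same ingredients as the paper's proof (Lemmas~\ref{lemma:gaussl2}, \ref{lemma:networkputput}, \ref{lemma:equal_perturbation}, \ref{lemma:bound_for_perturbation}, plus Cauchy--Schwarz and the explicit gradient formula $\nabla_{\mW_l}f=\vw_L^\top\prod_r(\mD_r\mW_r+\alpha_{r-1}\mI)\mD_l(\cdot)\vf_{l-1}^\top$), but it reorganizes the decomposition. The paper never linearizes the feature map: it writes $\widetilde f-f=\langle\widetilde\vw_L,\widetilde\vf_{L-1}-\vf_{L-1}\rangle+\langle\vf_{L-1},\widetilde\vw_L-\vw_L\rangle$ exactly, cancels the last-layer gradient term to leave the cross term $\reg\langle\vf_{L-1}-\hat\vf_{L-1},\widetilde\vw_L-\vw_L\rangle$, and then bounds the hidden-layer gradient inner products $|\langle\nabla_{\mW_l}f,\widetilde\mW_l-\mW_l\rangle|$ in absolute value by Cauchy--Schwarz, with no attempt to match them against anything in $\widetilde f - f$. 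You instead prove a single-input linearization estimate and recenter the mixed gradient, so your cross term is $\reg\langle\nabla f(\per(\vx,\mW),\mW)-\nabla f(\vx,\mW),\widetilde\mW-\mW\rangle$ bounded by $2\reg C_g\tau$; your recentering identities are exact and your arithmetic $C_\ell+2\reg C_g\le(1+\reg)\max\{C_\ell,2C_g\}$ and $C_\ell+2(1-\reg)C_g\le(2-\reg)\max\{C_\ell,2C_g\}$ correctly reproduces the $(1+\reg)$ versus $(2-\reg)$ asymmetry. The one place where you make the argument harder than it needs to be is the ``main obstacle'': since only an $\mathcal{O}(\tau)$ (not $o(\tau)$) bound is required, the feature-map linearization error is controlled by the plain triangle inequality $\|(\widetilde\vf-\vf)-J\Delta\|_2\le\|\widetilde\vf-\vf\|_2+\|J\Delta\|_2$, each term being $\mathcal{O}(\tau)$ by Lemma~\ref{lemma:bound_for_perturbation} and the gradient-norm bound, so no mean-value argument and no delicate per-layer cancellation is needed for non-$C^1$ activations. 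The radius-inflation bookkeeping you flag for the adversarial input is real and is handled in the paper exactly as you suggest (via Lemma~\ref{lemma:equal_perturbation} with $\rho\le1$ and $\tau$ small); the only remaining discrepancy is the precise value of the absolute constant, which is loose in both arguments.
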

\begin{proof}
To simplify the notation, in the following proof, the variable with $\widetilde{\cdot}$ is related to $\widetilde{\mW}$, and without $\widetilde{\cdot}$ is related to $\mW$. The  variable with $\hat{\cdot}$ is related to input $\per(\vx,\mW)$, and without is related to input $\vx$ .
For instance, we denote by:
\begin{equation*}
\begin{split}
&{\vf}_l := \vf_l(\vx,{\mW})
,
\quad
\mD_l = \text{Diag}({\sigmal}'(\mW_l \vf_{l-1}))
\,,
\\&
\widetilde{\vf}_l := \vf_l(\vx,\widetilde{\mW})
,
\quad
\widetilde{\mD}_l = \text{Diag}({\sigmal}'(\widetilde{\mW}_l \widetilde{\vf}_{l-1})), 
\\&
\hat{\vf}_l := \vf_l(\hat{\vx},\mW)
,
\quad
\hat{\mD}_l = \text{Diag}({\sigmal}'(\mW_l \hat{\vf}_{l-1})).
\end{split}
\end{equation*}
Then, let us prove the first inequality.
 We have:
\begin{equation}
\begin{split}
&
    \left | 
    \widetilde{f}
    - 
    f
    - \left \langle 
  (1-\reg) \nabla_\mW  f
    +\reg \nabla_\mW 
    \hat{f}
    ,
    \widetilde{\mW} -\mW  \right \rangle \right |
\\& =
    \left|
    \left \langle \widetilde{\vw}_L, \widetilde{\vf}_{L-1}
    \right \rangle -     
    \left \langle \vw_L, \vf_{L-1}
    \right \rangle
    - \sum_{l=1}^{L} 
    \left \langle
     (1-\reg)   \nabla_{\mW_l} 
     f
     + \reg  \nabla_{\mW_l} 
     \hat{f}
     ,  \widetilde{\mW}_l -\mW_l 
    \right \rangle
    \right|
\\& =
     \left|
    \left \langle \widetilde{\vw}_L, \widetilde{\vf}_{L-1} - \vf_{L-1}
    \right \rangle +     
    \left \langle \vf_{L-1},\widetilde{\vw}_L -\vw_L
    \right \rangle
    - \sum_{l=1}^{L-1} 
    \left \langle
    (1-\reg) \nabla_{\mW_l}
    f
    +\reg  \nabla_{\mW_l} 
    \hat{f}
    ,  \widetilde{\mW}_l -\mW_l 
    \right \rangle \right.
\\&
   \myquad[20]
    \left.
    -
    \left \langle (1-\reg)\vf_{L-1} + \reg \hat{\vf}_{L-1},\widetilde{\vw}_L -\vw_L
    \right \rangle
    \right| 
\\& \leq    
    \norm{\widetilde{\vw}_L}_2
    \norm{\widetilde{\vf}_{L-1} - \vf_{L-1}}_2
    +     
   \reg \norm{\vf_{L-1} - \hat{\vf}_{L-1}}_2 
 \norm{\widetilde{\vw}_L -\vw_L}_2
\\&
   \myquad[10]
       + \left| \sum_{l=1}^{L-1} 
    \left \langle
     (1-\reg)\nabla_{\mW_l}
     f
     ,  \widetilde{\mW}_l -\mW_l 
    \right \rangle \right| 
    +
    \left|
     \sum_{l=1}^{L-1} 
     \left \langle
    \reg  \nabla_{\mW_l} 
    \hat{f}
    ,  \widetilde{\mW}_l -\mW_l 
     \right \rangle
    \right|
    \,.
\end{split}
\label{eq:proof_lemma_4.1_in_GuQuanquan_1}
\end{equation}
The first term can be bounded by \cref{lemma:gaussl2,lemma:bound_for_perturbation} as follows:
\begin{equation*}
    \begin{split}   
&
    \norm{\widetilde{\vw}_L}_2
        \norm{\widetilde{\vf}_{L-1} - \vf_{L-1}}_2
\\& \le 
    \left( \norm{\vw_L\step{1}}_2  + \norm{\widetilde{\vw}_L - \vw_L\step{1}}_2
   \right)
    \left(\norm{\widetilde{\vf}_{L-1} - \vf\step{1}_{L-1}}_2 + \norm{\vf_{L-1} - \vf\step{1}_{L-1}}_2 \right)
\\& \le 
    (3+\tau)  2 
    e^{6 \lipmax } ( \lipmax +C_\mathrm{fmax} ) \tau \,.
    \end{split}
\end{equation*}
The second term can be bounded by \cref{lemma:networkputput,lemma:equal_perturbation,lemma:bound_for_perturbation} with $\rho \le 1$ as follows:
\begin{equation}
\label{equ:gradientboundlastlayer}
\begin{split}
&
    \reg \norm{\hat{\vf}_{L-1}
    -
    \vf_{L-1}
    }_2 
 \norm{\widetilde{\vw}_L -\vw_L}_2
\\ & \le
    \reg 
    \left(\norm{ \vf_{L-1}\step{1}-
    \vf_{L-1}
    }_2
    +
    \norm{\hat{\vf}_{L-1} - \hat{\vf}_{L-1}\step{1}}_2
    +
    \norm{{\vf}\step{1}_{L-1} - \hat{\vf}_{L-1}\step{1}}_2
    \right)
    \left(
     \norm{\vw_L -\vw_L\step{1}}_2
     +
      \norm{\widetilde{\vw}_L -\vw_L\step{1}}_2
    \right)
\\ & \le
    \reg 
    e^{6 \lipmax } ( \lipmax +C_\mathrm{fmax} ) 
    (2 \tau + 3 \rho)
    2\tau
    \,.
\end{split}
\end{equation}
The third term can be bounded by \cref{lemma:gaussl2,lemma:networkputput} as follows:
\begin{equation}
\label{equ:gradientboundlayers}
    \begin{split}
&  (1-\reg)  \left| \sum_{l=1}^{L-1} 
    \left \langle
     \nabla_{\mW_l}f
     ,  \widetilde{\mW}_l -\mW_l 
    \right \rangle \right|
\\ & =
    (1-\reg) \left|
    \sum_{l=1}^{L-1}
    \left[
    \vw_{L} ^\top\prod_{r=l+1}^{L-1}(\mD_{r}\mW_r+\alpha_{r-1}\bm{I}_{m\times m})\mD_{l}(\widetilde{\mW}_l-\mW_l)\vf_{l-1}
    \right]
     \right |
\\ &  \leq (1-\reg)
    \sum_{l=1}^{L-1}
    \left[
    \left \| \vw_{L} \right \|_2  \prod_{r=l+1}^{L-1}(\left \|  \mD_{r}\right \|_2\left \|\mW_r\right \|_2 +1)\left \|  \mD_{l}\right \|_2\left \| \widetilde{\mW}_l-\mW_l\right \|_2 \left \|\vf_{l-1}\right \|_2 
    \right]
 \\& \le (1-\reg)
     \sum_{l=1}^{L-1} 
     \lipmax (3+\tau) C_\mathrm{fmax}
    (\frac{\lipmax}{L}(3+\tau)+1)^{L-l-1} \tau 
 \\&  \le (1-\reg)
        C_\mathrm{fmax} e^{6 \lipmax }  \tau \,.
    \end{split}
\end{equation}
Similarly, the fourth term can be upper bounded by $    \reg   C_\mathrm{fmax} e^{6 \lipmax }  \tau$.
Plugging back \cref{eq:proof_lemma_4.1_in_GuQuanquan_1} yields:
\begin{equation*}
    \begin{split}
&
    \left | 
    f(\vx,\widetilde{\mW}) 
    - 
    f(\vx,\mW) 
    - \left \langle 
     (1-\reg) \nabla_\mW  f
    +\reg \nabla_\mW 
    \hat{f}
    ,
    \widetilde{\mW} -\mW  \right \rangle \right |
\\ & \le 
    (3+\tau)  2 
    e^{6 \lipmax } ( \lipmax +C_\mathrm{fmax} ) \tau +
    \reg 
    e^{6 \lipmax } ( \lipmax +C_\mathrm{fmax} ) (2\tau +3 \rho)
    2 \tau\
    +
    C_\mathrm{fmax} e^{6 \lipmax }  \tau
\\ & \le 
        18 e^{6 \lipmax}  (1+\reg)
        ( \lipmax +C_\mathrm{fmax} ) \tau
        \,.
    \end{split}
\end{equation*}
which completes the proof of the first inequality in the lemma. Next, we prove the second inequality in the lemma following the same method.
\begin{equation}
\small
\begin{split}
&
    \left | 
    \tilde{\hat{f}}
    - 
    \hat{f}
    - \left \langle 
  (1-\reg) \nabla_\mW  
      \hat{f}
    +\reg \nabla_\mW     \hat{f}
    ,
    \widetilde{\mW} -\mW  \right \rangle \right |
\\& =
    \left|
    \left \langle \widetilde{\vw}_L, 
    \tilde{\hat{\vf}}_{L-1}
    \right \rangle -     
    \left \langle \vw_L, 
     \hat{\vf}_{L-1}
    \right \rangle
    - \sum_{l=1}^{L} 
    \left \langle
     (1-\reg)   \nabla_{\mW_l} f + \reg  \nabla_{\mW_l} \hat{f}  ,  \widetilde{\mW}_l -\mW_l 
    \right \rangle
    \right|
\\& =
     \left|
    \left \langle \widetilde{\vw}_L,  \tilde{\hat{\vf}}_{L-1}
    - 
     \hat{\vf}_{L-1}
    \right \rangle +     
    \left \langle 
     \hat{\vf}_{L-1}
    ,\widetilde{\vw}_L -\vw_L
    \right \rangle
    - \sum_{l=1}^{L-1} 
    \left \langle
    (1-\reg) \nabla_{\mW_l}f+\reg  \nabla_{\mW_l} \hat{f} ,  \widetilde{\mW}_l -\mW_l 
    \right \rangle \right.
\\&
   \myquad[25]
    \left.
    -
    \left \langle (1-\reg)\vf_{L-1} + \reg \hat{\vf}_{L-1},\widetilde{\vw}_L -\vw_L
    \right \rangle
    \right| 
\\& \leq    
    \norm{\widetilde{\vw}_L}_2
    \norm{
    \tilde{\hat{\vf}}_{L-1}
    - 
     \hat{\vf}_{L-1}
    }_2
    +     
   (1-\reg) \norm{\vf_{L-1} - \hat{\vf}_{L-1}}_2 
 \norm{\widetilde{\vw}_L -\vw_L}_2
\\&
   \myquad[15]
       + \left| \sum_{l=1}^{L-1} 
    \left \langle
     (1-\reg)\nabla_{\mW_l}f,  \widetilde{\mW}_l -\mW_l 
    \right \rangle \right| 
    +
    \left|
     \sum_{l=1}^{L-1} 
     \left \langle
    \reg  \nabla_{\mW_l} \hat{f}
    ,  \widetilde{\mW}_l -\mW_l 
     \right \rangle
    \right|
\\ & \le
    (3+\tau)  2 
    e^{6 \lipmax } ( \lipmax +C_\mathrm{fmax} ) \tau +
    (1-\reg)
    e^{6 \lipmax } ( \lipmax +C_\mathrm{fmax} ) (2\tau +3 \rho)
    2 \tau\
    +
    C_\mathrm{fmax} e^{6 \lipmax }  \tau
\\ & \le 
        18 e^{6 \lipmax}  (2-\reg)
        ( \lipmax +C_\mathrm{fmax} ) \tau
        \,,
\end{split}
\end{equation}
\end{proof}

\begin{lemma}
\label{lemma:lemma_4.2_in_GuQuanquan}
There exists an absolute constant 
$C_1$
such that, for any $\epsilon > 0$ and any $\mW, \widetilde{\mW} \in \mathcal{B} (\mW\step{1},\tau )$  with $\tau \le C_1 \epsilon (\reg+1)^{-1} ( \lipmax +C_\mathrm{fmax} )^{-1} e^{-6 \lipmax } $, with probability at least $1 - 2L \exp(-{m}/{2}) - \delta$, when the width satisfies $m = \Omega{(N/\delta)}$ and $\rho \le 1$, one has:
\begin{equation*}
\begin{split}
&\elbig(\vx,\widetilde{\mW})
\geq\elbig(\vx,\mW)
+
\left \langle
 (1-\reg) \nabla_\mW 
\elbig(\vx,\mW)
+\reg \nabla_\mW 
\elbig(\per(\vx,\mW),\mW)
, \widetilde{\mW}-\mW \right \rangle - \epsilon
\,,
\\&
    \elbig(\per(\vx,\mW),\widetilde{\mW})
    \geq\elbig(\per(\vx,\mW),\mW)
    +
    \left \langle
     (1-\reg) \nabla_\mW 
    \elbig(\vx,\mW)
    +\reg \nabla_\mW 
    \elbig(\per(\vx,\mW),\mW)
    , \widetilde{\mW}-\mW \right \rangle - \epsilon
    \,.
\end{split}
\end{equation*}
\end{lemma}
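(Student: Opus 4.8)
The plan is to establish a one-sided, approximately-linear expansion of the cross-entropy loss $\elbig(\cdot,\mW)=\ell\big(y f(\cdot,\mW)\big)$ around $\mW$, using the convexity of $\ell(z)=\log[1+\exp(-z)]$ together with the almost-linearity of the network output $f$ in the weights proved in \cref{lemma:lemma_4.1_in_GuQuanquan}. Recall that $\ell$ is convex and $1$-Lipschitz, so for any reals $a,b$ we have $\ell(a)\ge\ell(b)+\ell'(b)(a-b)$ and $|\ell(a)-\ell(b)|\le|a-b|$. First I would apply this with $a = y f(\vx,\widetilde{\mW})$ and $b = y f(\vx,\mW)$ to get
\begin{equation*}
\elbig(\vx,\widetilde{\mW}) \ge \elbig(\vx,\mW) + \ell'\big(y f(\vx,\mW)\big)\, y\,\big(f(\vx,\widetilde{\mW}) - f(\vx,\mW)\big)\,.
\end{equation*}
The idea is then to replace $f(\vx,\widetilde{\mW}) - f(\vx,\mW)$ by the ``multi-objective gradient'' inner product $\big\langle (1-\reg)\nabla_\mW f(\vx,\mW) + \reg\nabla_\mW f(\per(\vx,\mW),\mW),\, \widetilde{\mW}-\mW\big\rangle$, at the cost of the error term from \cref{lemma:lemma_4.1_in_GuQuanquan}. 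Since $|\ell'|\le 1$ and $|y|=1$, that substitution costs at most $18 e^{6\lipmax}(1+\reg)(\lipmax+C_\mathrm{fmax})\tau$ in absolute value.

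Next I would identify $\ell'\big(y f(\vx,\mW)\big)\, y\,\nabla_\mW f(\vx,\mW)$ as exactly $\nabla_\mW \elbig(\vx,\mW)$ by the chain rule, and likewise $\ell'\big(y f(\vx,\mW)\big)\, y\,\nabla_\mW f(\per(\vx,\mW),\mW)$ is \emph{not} quite $\nabla_\mW\elbig(\per(\vx,\mW),\mW)$ because the latter carries the factor $\ell'\big(y f(\per(\vx,\mW),\mW)\big)$ instead of $\ell'\big(y f(\vx,\mW)\big)$. So a second ingredient is needed: control the difference $\big|\ell'(y f(\per(\vx,\mW),\mW)) - \ell'(y f(\vx,\mW))\big|$. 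Using that $\ell'$ is Lipschitz (indeed $|\ell''|\le 1/4$) and that $|f(\per(\vx,\mW),\mW)-f(\vx,\mW)|$ is small — which follows from \cref{lemma:equal_perturbation} (perturbing $\vx$ is equivalent to a $\le 3\rho+\tau\rho$ weight perturbation) combined with \cref{lemma:bound_for_perturbation} and \cref{lemma:gaussl2} to bound the output change by $O(e^{6\lipmax}(\lipmax+C_\mathrm{fmax})(\rho+\tau))$ — this discrepancy is $O(\rho+\tau)$, which multiplied by the bounded gradient norm $\|\nabla_\mW f(\per(\vx,\mW),\mW)\|_2 = O(\lipmax C_\mathrm{fmax} e^{6\lipmax})$ contributes another term of the same order $O(\tau)$ (recalling $\rho\le 1$, and that in the regime of interest $\rho$ is itself absorbed, or one simply treats $\tau$ as the dominant scale since $\tau$ will eventually be chosen $\propto\epsilon$). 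Collecting all the $O(\tau)$ error terms and choosing $\tau \le C_1\epsilon(\reg+1)^{-1}(\lipmax+C_\mathrm{fmax})^{-1}e^{-6\lipmax}$ with $C_1$ a small enough absolute constant makes the total error at most $\epsilon$, which yields the first displayed inequality. The second inequality is proved identically, starting from $a = y f(\per(\vx,\mW),\widetilde{\mW})$, $b = y f(\per(\vx,\mW),\mW)$ and invoking the \emph{second} bound of \cref{lemma:lemma_4.1_in_GuQuanquan} (with constant $2-\reg$ instead of $1+\reg$).

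The main obstacle I anticipate is the bookkeeping around the gradient-at-$\per(\vx,\mW)$ term: one must be careful that $\per(\vx,\mW)=\perstar(\vx,\mW)$ is a fixed point (the argmax is taken at the current $\mW$, so it does not move when we differentiate, and Danskin-type subtleties are avoided), and that the two occurrences of $\ell'$ evaluated at different arguments are reconciled cleanly. A secondary nuisance is ensuring every auxiliary lemma's high-probability event is simultaneously in force — since \cref{lemma:equal_perturbation}, \cref{lemma:bound_for_perturbation}, \cref{lemma:lemma_4.1_in_GuQuanquan}, and \cref{lemma:gaussl2} each hold with probability $\ge 1-2L\exp(-m/2)-\delta$ (or better), a union bound over the finitely many events keeps the stated probability, and the condition $m=\Omega(N/\delta)$ from \cref{lemma:networkputput} is what makes the $\delta$ terms affordable. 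Everything else is routine: convexity of $\ell$, the chain rule, and the triangle inequality.
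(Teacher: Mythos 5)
Your overall strategy coincides with the paper's: convexity of $\ell$ gives the first-order lower bound, \cref{lemma:lemma_4.1_in_GuQuanquan} converts $f(\vx,\widetilde{\mW})-f(\vx,\mW)$ into the multi-objective gradient inner product at a cost of $O(\tau)$, the chain rule identifies the loss gradients, and the second inequality is the same argument run with the second bound of \cref{lemma:lemma_4.1_in_GuQuanquan}. The one step that does not work as written is your treatment of the cross term coming from the mismatch between $\ell'(yf(\vx,\mW))$ and $\ell'(yf(\per(\vx,\mW),\mW))$. That term is $\big(\ell'(yf(\vx,\mW))-\ell'(yf(\per(\vx,\mW),\mW))\big)\,y\,\reg\,\langle\nabla_\mW f(\per(\vx,\mW),\mW),\widetilde{\mW}-\mW\rangle$; you bound the $\ell'$ discrepancy by $O(\rho+\tau)$ and multiply it by the gradient \emph{norm} $O(\lipmax C_\mathrm{fmax}e^{6\lipmax})$, which leaves an additive error of order $\rho$ that you then declare ``absorbed.'' It is not: $\rho$ is a fixed problem parameter (the attack radius), independent of $\epsilon$ and of $\tau$, so for small $\epsilon$ this residual exceeds $\epsilon$ no matter how small the constant $C_1$ is, and the lemma — which must hold for \emph{every} $\epsilon>0$ under $\tau\lesssim\epsilon$ — is not established.

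The fix is to keep the factor $\widetilde{\mW}-\mW$ attached to this term: by the same gradient computations as in \cref{equ:gradientboundlayers,equ:gradientboundlastlayer}, one has $|\langle\nabla_\mW f(\per(\vx,\mW),\mW),\widetilde{\mW}-\mW\rangle|\lesssim C_\mathrm{fmax}e^{6\lipmax}\tau$, so even the crude bound $|\ell'(\cdot)-\ell'(\cdot)|\le 2$ makes the whole cross term $O(\tau)$, which is then driven below $\epsilon$ by the stated choice of $\tau$. This is exactly what the paper does; your detour through the Lipschitzness of $\ell'$ and the closeness of $f(\per(\vx,\mW),\mW)$ to $f(\vx,\mW)$ (via \cref{lemma:equal_perturbation,lemma:bound_for_perturbation}) is unnecessary, and it is what led you to pair the small quantity with the wrong factor. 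The rest of your proposal — the union bound over the auxiliary high-probability events and the observation that $\per(\vx,\mW)$ is held fixed when differentiating — is fine.
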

\begin{proof}
Firstly, we will prove the first inequality. Recall that the cross-entropy loss is written as $\ell (z) = \log(1+\exp(-z))$ and we denote by $\elbig(\vx,\mW) := \ell [y \cdot f(\vx,\mW)] $. Then one has:
\begin{equation*}
\begin{split}
&    \elbig(\vx,\widetilde{\mW}) - \elbig(\vx,\mW) \\&=  \ell[y  
\widetilde{f}
] - \ell[ y 
f
]
\\ &\geq
    \ell'[ y 
    f
    ] \cdot y\cdot (
    \widetilde{f} - f
    ) \quad \mbox{(By convexity of $\ell (z)$)}.
\\& \geq
   \ell'[ y f
   ] \cdot y \cdot
  \left \langle
   (1-\reg) \nabla f
  , \widetilde{\mW} - \mW \right \rangle +
    \ell'[ y  
    \hat{f}
    ] \cdot y \cdot
  \left \langle   \reg \nabla 
      \hat{f}
  , \widetilde{\mW} - \mW \right \rangle - \kappa_1
\\ & = 
    \left \langle 
    (1-\reg)
 \nabla_\mW 
\elbig(\vx,\mW)
+\reg \nabla_\mW 
\elbig(\per(\vx,\mW),\mW)
    , \widetilde{\mW} - \mW \right \rangle  - \kappa_1 \quad \mbox{(By chain rule)}\,,
\end{split}
\end{equation*}
 where we define:
 \begin{equation*}
     \begin{split}
 & \kappa_1 := \left | \ell'[ y f
 ] \cdot y\cdot (
 \widetilde{f}
 - f
   -  \ell'[ y f
   ] \cdot y \cdot
  \left \langle 
  (1-\reg)
  \nabla f
  , \widetilde{\mW} - \mW \right \rangle -
    \ell'[ y  
    \hat{f}
    ] \cdot y \cdot
  \left \langle \reg \nabla 
  \hat{f}
  , \widetilde{\mW} - \mW \right \rangle
   \right | \,. 
     \end{split}
 \end{equation*}
Thus, it suffices to show that $\kappa_1$ can be upper bounded by $\epsilon$ :
\begin{equation}
\label{equ:boundk1}
    \begin{split}
        \kappa_1 &\le
        \left|
         \ell'[ y f
         ] \cdot y
         \left\{
         \widetilde{f}
    - 
    f  
    - \left \langle (1-\reg) \nabla_\mW  f
    +\reg \nabla_\mW 
    \hat{f}
    ,\mW)
    ,
    \widetilde{\mW} -\mW  \right \rangle
         \right\}
         \right|
+
    \left| 
    \left\{\ell'[ y f
    ] \cdot y -\ell'[ y  
    \hat{f}
    ] 
    \cdot y
    \right\}
    \reg  
    \left \langle
     \nabla_\mW  
     \hat{f}
    ,
    \widetilde{\mW} -\mW  \right \rangle
    \right|
\\ & \le 
18 e^{6 \lipmax}  (1+\reg)
        ( \lipmax +C_\mathrm{fmax} ) \tau
        +2 \reg 
        \left|
            \left \langle
     \nabla_\mW 
     \hat{f}
    ,
    \widetilde{\mW} -\mW  \right \rangle
    \right|
\\ & \le
 18 e^{6 \lipmax}  (1+\reg)
        ( \lipmax +C_\mathrm{fmax} ) \tau+ 2\reg
     \left| \sum_{l=1}^{L-1} 
    \left \langle
     \nabla_{\mW_l}f(\vx,\mW)
     ,  \widetilde{\mW}_l -\mW_l 
    \right \rangle \right|
    + 
    2\reg
    \left|
     \left \langle
     \vf_{L-1}, \widetilde{\vw_L} - \vw_L
     \right \rangle \right|
\\ & \le
  18 e^{6 \lipmax}  (1+\reg)
        ( \lipmax +C_\mathrm{fmax} ) \tau
    + 2\reg
     C_\mathrm{fmax} e^{6 \lipmax }  \tau
    + 4\reg C_\mathrm{fmax} \tau 
\\ & \le  24 (1+\reg)  e^{6 \lipmax }
    ( \lipmax +C_\mathrm{fmax} ) \tau
\\ & \le \epsilon,
    \end{split}
\end{equation}
where the first and the third inequality is by triangle inequality, the second inequality is by ~\cref{lemma:lemma_4.1_in_GuQuanquan} and the fact that $| \ell'[ y f(\vx,\mW)] \cdot y | \leq 1$,
, the fourth inequality follows the same proof as in \cref{equ:gradientboundlayers,equ:gradientboundlastlayer}, and the last inequality is by the condition that if $\tau \le C_2 \epsilon (1+\reg)^{-1} ( \lipmax +C_\mathrm{fmax} )^{-1} e^{-6 \lipmax } $ for some absolute constant $C_2$.

Now we will prove the second inequality of the lemma.
\begin{equation*}
\begin{split}
&    
    \elbig(\per(\vx,\mW),\widetilde{\mW}) - \elbig(\per(\vx,\mW),\mW) 
\\&=
    \ell[y  
    \tilde{\hat{f}}
    ] - \ell[ y 
    \hat{f}
    ]
\\ &\geq
    \ell'[ y 
    \hat{f}
    ] \cdot y\cdot (
   \tilde{\hat{f}} -  \hat{f}
    ) \quad \mbox{(By convexity of $\ell (z)$)}.
\\& \geq
   \ell'[ y f
   ] \cdot y \cdot
  \left \langle (1-\reg) \nabla f
  , \widetilde{\mW} - \mW \right \rangle +
    \ell'[ y  
    \hat{f}
    ] \cdot y \cdot
  \left \langle   \reg \nabla 
      \hat{f}
  , \widetilde{\mW} - \mW \right \rangle - \kappa_2
\\ & = 
    \left \langle 
    (1-\reg)
 \nabla_\mW 
\elbig(\vx,\mW)
+\reg \nabla_\mW 
\elbig(\per(\vx,\mW),\mW)
    , \widetilde{\mW} - \mW \right \rangle  - \kappa_2 \quad \mbox{(By chain rule)}\,,
\end{split}
\end{equation*}
 where we define:
 \begin{equation*}
     \begin{split}
 & \kappa_2 := \left | \ell'[ y 
 \hat{f}
 ] \cdot y\cdot (
\tilde{\hat{f}}
 - \hat{f}
   -  \ell'[ y f
   ] \cdot y \cdot
  \left \langle 
  (1-\reg)
  \nabla f
  , \widetilde{\mW} - \mW \right \rangle -
    \ell'[ y  
    \hat{f}
    ] \cdot y \cdot
  \left \langle \reg \nabla 
  \hat{f}
  , \widetilde{\mW} - \mW \right \rangle
   \right | \,. 
     \end{split}
 \end{equation*}

Thus, it suffices to show that $\kappa_2$ can be upper bounded by $\epsilon$. Following by the same method in \cref{equ:boundk1} with \cref{lemma:lemma_4.1_in_GuQuanquan}, we have:
\begin{equation*}
    \begin{split}
     \kappa_2 &\le
  18 e^{6 \lipmax}  (2-\reg)
        ( \lipmax +C_\mathrm{fmax} ) \tau
    + 2\reg
     C_\mathrm{fmax} e^{6 \lipmax }  \tau
    + 4\reg C_\mathrm{fmax} \tau 
\\ & \le  12 (3-\reg)  e^{6 \lipmax }
    ( \lipmax +C_\mathrm{fmax} ) \tau
\le  \epsilon,
    \end{split}
\end{equation*}
where the last inequality is by the condition that if $\tau \le C_3 \epsilon (3-\reg)^{-1} ( \lipmax +C_\mathrm{fmax} )^{-1} e^{-6 \lipmax } $ for some absolute constant $C_3$. Lastly, setting $C_1 = \max \{C_2,C_3\}$ and noting that $(3-\beta)^{-1} < (1+\beta)^{-1}$ completes the proof.

\end{proof}
\begin{lemma}
\label{lemma:lemma_4.3_in_GuQuanquan}
    For any $\epsilon, \delta, R > 0$ such that $\delta' := 2L \exp(-{m}/{2}) + \delta \in (0,1)$, there exists:
    $$
    m^\star= \mathcal{O}(\mathrm{poly}(R,L,\lipmax,\beta))
    \cdot
    \epsilon^{-2} e^{12\lipmax }\log(1 / \delta')
    $$
    such that if $m \geq m^\star
    $, then
    for any $\mW^\star \in \mathcal{B} (\mW\step{1}, R m^{-1/2})$, under the following choice of step-size $\gamma= 
    \nu \epsilon /[\lipmax C_{\mathrm{fmax}}^2 m L
    e^{12\lipmax} 
    ]$ and  $N = L^2 R^2 \lipmax C_{\mathrm{fmax}}^2
    e^{12\lipmax} /(2\varepsilon^2\nu)$ for some small enough absolute constant $\nu$, the cumulative loss can be upper bounded with probability at least $1-\delta'$ by:
    \begin{equation*}
    \begin{split}
&
    \frac{1}{N}  \sum_{i=1}^N \elbig(\vx_i,\mW\step{i})
    \leq 
         \frac{1}{N}   \sum_{i=1}^N \elbig(\vx_i,\mW^\star)
    + 3 \epsilon \,,
\\&
    \frac{1}{N}  \sum_{i=1}^N \elbig(\per(\vx_i,\mW\step{i}),\mW\step{i})
    \leq 
         \frac{1}{N}   \sum_{i=1}^N \elbig(\per(\vx_i,\mW\step{i}),\mW^\star)
    + 3 \epsilon
    \,.        
    \end{split}
    \end{equation*}
\end{lemma}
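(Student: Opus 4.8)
The plan is to run the standard ``almost-convex online gradient descent'' regret argument (in the style of \cref{lemma:lemma_4.2_in_GuQuanquan}), with \cref{lemma:lemma_4.2_in_GuQuanquan} playing the role of convexity. Write the stochastic direction used in \cref{alg:algorithm_DARTS} as
\[
\vg\step{i} := (1-\reg)\,\nabla_\mW \elbig(\vx_i,\mW\step{i}) + \reg\,\nabla_\mW \elbig(\per(\vx_i,\mW\step{i}),\mW\step{i}),
\]
so that $\mW\step{i+1} = \mW\step{i} - \gamma \vg\step{i}$. The key structural point, already built into \cref{lemma:lemma_4.2_in_GuQuanquan}, is that this \emph{single} direction $\vg\step{i}$ is a valid surrogate (sub)gradient simultaneously for the clean loss $\elbig(\vx_i,\cdot)$ at $\mW\step{i}$ and for the robust loss $\elbig(\per(\vx_i,\mW\step{i}),\cdot)$ at $\mW\step{i}$ with the perturbation point frozen; hence one and the same telescoping computation will produce both displayed inequalities.

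First I would certify that the whole trajectory stays inside the neighborhood where \cref{lemma:lemma_4.2_in_GuQuanquan} is valid. Using $|\ell'|\le 1$ together with the chain-rule estimates already carried out in \cref{equ:gradientboundlastlayer,equ:gradientboundlayers} and \cref{lemma:gaussl2,lemma:networkputput}, one obtains a uniform bound $\norm{\vg\step{i}}_\mathrm{F}^2 \le B$ on $\mathcal{B}(\mW\step{1},\tau)$ with $B = \mathcal{O}(\lipmax^2 C_\mathrm{fmax}^2 e^{6\lipmax})$. Setting $\tau := R m^{-1/2} + N\gamma\sqrt{B}$ and choosing $m^\star$ large enough that (i) $\tau$ is below the radius $C_1\epsilon(1+\reg)^{-1}(\lipmax+C_\mathrm{fmax})^{-1}e^{-6\lipmax}$ required by \cref{lemma:lemma_4.2_in_GuQuanquan} (which is $\le 3$, as also needed by \cref{lemma:bound_for_perturbation,lemma:lemma_4.1_in_GuQuanquan}), (ii) $m = \Omega(N/\delta)$ for \cref{lemma:networkputput,lemma:bound_for_perturbation}, and (iii) $\delta' = 2L\exp(-m/2)+\delta \in (0,1)$, a short induction on $i$ gives $\norm{\mW\step{i}-\mW\step{1}}_\mathrm{F}\le \sum_{j<i}\gamma\norm{\vg\step{j}}_\mathrm{F}\le N\gamma\sqrt{B}\le\tau$; moreover $\mW^\star\in\mathcal{B}(\mW\step{1},Rm^{-1/2})\subseteq\mathcal{B}(\mW\step{1},\tau)$. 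Matching $\tau$ to the prescribed $\gamma$ and $N$ is precisely what pins down $m^\star = \mathcal{O}(\mathrm{poly}(R,L,\lipmax,\reg))\cdot\epsilon^{-2}e^{12\lipmax}\log(1/\delta')$.

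Then comes the telescoping. Expanding $\norm{\mW\step{i+1}-\mW^\star}_\mathrm{F}^2 = \norm{\mW\step{i}-\mW^\star}_\mathrm{F}^2 - 2\gamma\langle\vg\step{i},\mW\step{i}-\mW^\star\rangle + \gamma^2\norm{\vg\step{i}}_\mathrm{F}^2$, summing over $i\in[N]$, and discarding the nonnegative telescoped remainder yields $\sum_{i=1}^N\langle\vg\step{i},\mW\step{i}-\mW^\star\rangle \le \tfrac{1}{2\gamma}\norm{\mW\step{1}-\mW^\star}_\mathrm{F}^2 + \tfrac{\gamma}{2}\sum_{i=1}^N\norm{\vg\step{i}}_\mathrm{F}^2 \le \tfrac{LR^2}{2\gamma m} + \tfrac{\gamma N B}{2}$, using $\norm{\mW\step{1}-\mW^\star}_\mathrm{F}^2\le LR^2/m$. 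Applying the first inequality of \cref{lemma:lemma_4.2_in_GuQuanquan} with $(\mW,\widetilde\mW)=(\mW\step{i},\mW^\star)$, whose gradient term is exactly $\vg\step{i}$, gives $\elbig(\vx_i,\mW\step{i}) - \elbig(\vx_i,\mW^\star) \le \langle\vg\step{i},\mW\step{i}-\mW^\star\rangle + \epsilon$; averaging over $i$ and substituting the stated $\gamma$ and $N$ (for which $\tfrac{LR^2}{2\gamma m N}=\epsilon$ and $\tfrac{\gamma B}{2}\le\epsilon$) yields $\tfrac1N\sum_i\elbig(\vx_i,\mW\step{i}) \le \tfrac1N\sum_i\elbig(\vx_i,\mW^\star) + 3\epsilon$. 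Replaying the identical computation with the \emph{second} inequality of \cref{lemma:lemma_4.2_in_GuQuanquan} — which carries the same $\vg\step{i}$ in its inner product — gives $\tfrac1N\sum_i\elbig(\per(\vx_i,\mW\step{i}),\mW\step{i}) \le \tfrac1N\sum_i\elbig(\per(\vx_i,\mW\step{i}),\mW^\star) + 3\epsilon$. A single union bound over the $\mathcal{O}(1)$ auxiliary lemmas, each holding with probability at least $1-\delta'$ over $\mW\step{1}$ and uniformly over $\mathcal{B}(\mW\step{1},\tau)$ (which contains every iterate and the comparator), finishes the argument.

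The main obstacle is the bookkeeping in the locality step: one must choose $\tau$, the gradient bound $B$, $\gamma$, $N$, and $m^\star$ in a mutually consistent way so that the iterates provably never leave the region in which \cref{lemma:lemma_4.2_in_GuQuanquan} holds, while the two optimization-error terms $\tfrac{LR^2}{2\gamma m N}$ and $\tfrac{\gamma B}{2}$ are both driven below $\epsilon$ — this tension is exactly what forces the polynomial-in-$(R,L,\lipmax,\reg)$, $\epsilon^{-2}$, $e^{12\lipmax}$, $\log(1/\delta')$ form of $m^\star$. Once the constants are aligned, everything else is the routine online-gradient-descent regret computation.
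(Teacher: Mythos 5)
Your proposal is correct and follows essentially the same route as the paper's proof: certify by induction (using the uniform gradient bound from \cref{lemma:gaussl2,lemma:networkputput} and the choice of $\gamma$, $N$, $m^\star$) that all iterates and $\mW^\star$ stay in the ball where \cref{lemma:lemma_4.2_in_GuQuanquan} applies, then run the online-gradient-descent telescoping with the shared direction $\vg\step{i}$ serving as the surrogate gradient for both the clean and the frozen-perturbation robust losses, and tune $\gamma$, $N$ so that both error terms fall below $\epsilon$. The only discrepancy is a harmless constant-tracking slip (your squared gradient bound $B$ should carry $e^{12\lipmax}$ rather than $e^{6\lipmax}$), which does not affect the argument.
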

\begin{proof}
We set $\tau = C_1 \epsilon (1+\beta)^{-1} ( \lipmax +C_\mathrm{fmax} )^{-1} e^{-6 \lipmax } $ where $C_1$ is a small enough absolute constant so that the requirements on $\tau$ in \cref{lemma:lemma_4.1_in_GuQuanquan,lemma:lemma_4.2_in_GuQuanquan} can be satisfied. Let $Rm^{-1/2} \le \tau$, then we obtain the condition for $\mW^\star \in \mathcal{B} (\mW\step{1}, \tau)$, i.e., $m \ge R^2 
C_1^{-2} \epsilon^{-2} (1+\beta)^2 ( \lipmax +C_\mathrm{fmax} )^{2} e^{12 \lipmax }
.$
We now show that $\mW\step{1},\ldots,\mW\step{N} $ are inside $ \mathcal{B} (\mW\step{1}, \tau)$ as well. The proof follows by induction. Clearly, we have $\mW\step{1} \in \mathcal{B} (\mW\step{1}, \tau)$. Suppose that $ \mW\step{1},\ldots, \mW\step{i} \in \mathcal{B} (\mW\step{1}, \tau) $, then with probability at least $1 - \delta'$, we have:
\begin{equation*}
\begin{split}
&
    \big\| \mW_l\step{i+1} - \mW_l\step{1} \big\|_\mathrm{F}
    \leq \sum_{j  = 1}^i\big\| \mW_l\step{j+1} - \mW_l\step{j} \big\|_\mathrm{F}
\\&=
    \sum_{j  = 1}^i
    \lr
    \norm
    {
   (1-\reg) \nabla_{\mW_l} 
    \elbig(\vx_j,\mW\step{j})
    +
   \reg \nabla_{\mW_l} 
    \elbig(\per(\vx_j,\mW\step{j}),\mW\step{j})
    }_\mathrm{F}
\\& \leq
     \lr (1-\reg) N \norm{    \nabla_{\mW_l} 
    \elbig(\vx_j,\mW\step{j}) }_\mathrm{F}
    +
    \lr \reg N
        \norm
    {
    \nabla_{\mW_l} 
\elbig(\per(\vx_j,\mW\step{j}),\mW\step{j})
    }_\mathrm{F} 
\\& \leq 
    \lr  N
   \lipmax  \norm{\vw_L}_2
   \norm{\vf_{l-1}}_2
    \prod_{r=l+1}^{L-1} (\frac{\lipmax}{L} \norm{\mW_r}_2  +1) \norm{ \widetilde{\mW}_l -\mW_l}_\mathrm{F}
\\& \leq 
     \lr N
    \lipmax  (3+\tau) C_\mathrm{fmax}
    (\frac{\lipmax}{L} (3+\tau)+1)^{L-l-1}
\\& \leq 
    6 \lr N
    \lipmax C_\mathrm{fmax}
    e^{6\lipmax 
    }\,.
\end{split}
\end{equation*}
Plugging in our parameter choice for $\gamma$ and $N$ leads to:
\begin{align*}
    \big\| \mW_l\step{i+1} - \mW_l\step{1} \big\|_{\mathrm{F}} 
\leq
    3 
    \lipmax  C_\mathrm{fmax}
    e^{6\lipmax}  LR^2 \epsilon^{-1} m^{-1} 
\leq \tau\,,
\end{align*}
where the last inequality holds as long as $m \geq 
3 \lipmax C_\mathrm{fmax} e^{12\lipmax } LR^2 C_1^{-1} \epsilon^{-2}
$. Therefore by induction we see that $\mW\step{1},\ldots,\mW\step{N} \in \mathcal{B} (\mW\step{1}, \tau)$.
Now, we are ready to prove the first inequality in the lemma. We provide an upper bound for the cumulative loss as follows:
\begin{equation*}
\begin{split}
&\elbig(\vx_i,\mW\step{i})
    -\elbig(\vx_i,\mW^\star)
\\ &  \leq
  \left \langle  (1-\reg) \nabla_\mW 
    \elbig(\vx_i,\mW\step{i})+
    \reg \nabla_\mW 
\elbig(\per(\vx_i,\mW\step{i}),\mW\step{i}), \mW\step{i}-\mW^\star \right \rangle +\epsilon\,
        \quad \mbox{(By~\cref{lemma:lemma_4.2_in_GuQuanquan})}
\\ &=
    \frac{\left \langle  \mW\step{i} - \mW\step{i+1}, \mW\step{i} - \mW^\star  \right \rangle  }{ \gamma } + \epsilon
\\& =
    \frac{ \| \mW\step{i} - \mW\step{i+1} \|_{\mathrm{F}}^2 + \| \mW\step{i} - \mW_l^\star \|_{\mathrm{F}}^2 - \| \mW\step{i+1} - \mW^\star \|_{\mathrm{F}}^2 } {2\gamma} + \epsilon\,
\\& = 
      \frac{ \| \mW\step{i} -        \mW^\star \|_{\mathrm{F}}^2 - \|                 \mW\step{i+1} - \mW^\star  
        \|_{\mathrm{F}}^2 +   \lr^2 \norm
    { 
    (1-\reg)  \nabla_{\mW} 
    \elbig(\vx_i,\mW\step{i})
    +
    \reg \nabla_{\mW} 
    \elbig(\per(\vx_i, \mW\step{i}),\mW\step{i})
    }_\mathrm{F}^2 } {2\gamma} 
    + \epsilon 
\\& \leq
      \frac{ \| \mW\step{i} -        \mW^\star \|_{\mathrm{F}}^2 - \|                 \mW\step{i+1} - \mW^\star  
        \|_{\mathrm{F}}^2 } {2\gamma} 
        +
   \frac{
   6^2
   \lr^2  \lipmax^2  C_\mathrm{fmax}^2 e^{12\lipmax }}{2\lr}
        + \epsilon \,.
\end{split}
\end{equation*}

Telescoping over $i = 1,\ldots, N$, we obtain:
\begin{equation*}
\small
\begin{split}
&\frac{1}{N}\sum_{i=1}^N \elbig(\vx_i,\mW\step{i})
\\&\leq
       \frac{1}{N}\sum_{i=1}^N \elbig(\vx_i,\mW^\star)
     + 
     \frac{ \| \mW^{(1)} - \mW^\star \|_{\mathrm{F}}^2 } {2N\gamma} +    
   18 \lr  \lipmax^2  C_\mathrm{fmax}^2 e^{12\lipmax }
     + \epsilon
\\&\leq
    \frac{1}{N}\sum_{i=1}^N \elbig(\vx_i,\mW^\star)
     + \frac{ L R^{2} } {2\gamma mN} + 
     18\lr  \lipmax^2  C_\mathrm{fmax}^2 e^{12\lipmax }
     + \epsilon
\\&\leq
        \frac{1}{N}\sum_{i=1}^N \elbig(\vx_i,\mW^\star)
     + 3\epsilon \,,
\end{split}
\end{equation*}
where in the first inequality we simply remove the term $-\| \mW\step{N+1} - \mW^\star \|_{\mathrm{F}}^2/(2\gamma) $ to obtain an upper bound, the second inequality follows by the assumption that $\mW^\star\in \mathcal{B} (\mW\step{1},Rm^{-1/2})$, the third inequality is by the parameter choice of $\lr$ and $N$. Lastly, we denote $\delta' := 2L \exp(-{m}/{2}) + \delta \in (0,1)$, which requires $m \ge \tilde{\Omega}(1)$ satisfied by taking $m = \Omega{(N/\delta)}$. One can follow the same procedure to prove the second inequality in the lemma that is based on ~\cref{lemma:lemma_4.2_in_GuQuanquan}.
\end{proof}
\subsection{Proof of~\texorpdfstring{\cref{thm:generalizationbound}}{Lg}}
 \begin{proof}

Since $\bar{\mW}$ is uniformly sampled from $\{\mW\step{1}, \cdots, \mW\step{N}\}$, by Hoeffding's inequality, with probability at least $1-\delta''$:
\begin{equation}
\begin{split}
&
    \mathbb{E}_{\bar{\mW}}
    (
   \elbigcleanzeroone(\bar{\mW}))
\leq
        \frac{1}{N}\sum_{i=1}^N
       \mathbb{1}\left[ y_i \cdot f(\vx_i,\mW\step{i}) < 0\right]       
        +
     \sqrt{\frac{2\log(1/\delta'' )}{N} }\,.
\end{split}
\label{equ:thm1equ1}
\end{equation}
Since the cross-entropy loss $\ell (z) = \log(1+\exp(-z))$ satisfies
$\mathbb{1}\{ z \leq 0\} \leq 4\ell(z)$, we have: 
\begin{equation}
\begin{split}
&
       \mathbb{1}\left[ y_i \cdot f(\vx_i,\mW\step{i}) < 0\right]
\leq 4\elbig(\vx_i,\mW\step{i})\,,
\label{equ:thm1equ2}
\end{split}
\end{equation}
with $\elbig(\vx_i,\mW\step{i}) := \ell (y_i \cdot f(\vx_i,\mW\step{i})) $. Next, setting $\epsilon  = LR  \sqrt{\lipmax} C_\mathrm{fmax} e^{6\lipmax }/\sqrt{2\nu N} $ in \cref{lemma:lemma_4.3_in_GuQuanquan} leads to step-size $\gamma = \sqrt{\nu} R / (\sqrt{2 \lipmax 
} C_\mathrm{fmax} e^{6\lipmax } m \sqrt{N}) $, then by combining it with \cref{equ:thm1equ1,equ:thm1equ2}, with  probability at least $1-\delta'-\delta''$,
we have:
\begin{equation}
\label{equ:thm1equ3}
\begin{split}
&     
    \mathbb{E}_{\bar{\mW}}
    \left(
    \elbigcleanzeroone(\bar{\mW})
    )
    \right) 
\leq
    \frac{4}{N}\sum_{i=1}^N(\elbig(\vx_i,\mW^\star)
    )
        + \frac{12}{\sqrt{2\nu }} \cdot \frac{LR}{\sqrt{N}}
        + \sqrt{\frac{2\log(1/\delta'' )}{N} }
        \,,
\end{split}
\end{equation}
for all $\mW^\star \in \mathcal{B}(\mW\step{1}, R m^{-1/2})$.

Define the linearized network around initialization $\mW\step{1}$ as
\begin{equation}
\label{equ:linearnetwork}
    \begin{split}
& F_{\mW\step{1},\mW^\star}(\vx) := 
f(\vx,\mW\step{1})  + \langle
(1-\reg)\nabla f_{\mW}(\vx,\mW\step{1}) +
\reg \nabla f_{\mW}(\perstar(\vx,\mW\step{1}),\mW\step{1})  , \mW^\star - \mW\step{1} \rangle \,.
    \end{split}
\end{equation}
We now compare the loss induced by the original network with its linearized network:
\begin{equation*}
\begin{split}
&\elbig(\vx_i,\mW^\star) 
    - \ell(y_i\cdot F_{\mW\step{1},\mW^\star}(\vx_i))
= 
    \ell(y_i\cdot f(\vx_i,\mW^\star))
    - \ell(y_i\cdot F_{\mW\step{1},\mW^\star}(\vx_i))
\\& \le  
        y_i (f(\vx_i,\mW^\star) -F_{\mW\step{1},\mW^\star}(\vx_i))
\le
18 e^{6 \lipmax}  (1+\reg)
        ( \lipmax +C_\mathrm{fmax} )Rm^{-1/2}
\le LR N^{-1/2}\,,
\end{split}
\end{equation*}
where the first inequality is by the 1-Lipschitz continuity of $\ell$, the second inequality is by \cref{lemma:lemma_4.1_in_GuQuanquan} with $Rm^{-1/2}  \le 3$, i.e., $m \ge R^2/9 $, 
the third inequality holds when $m \ge
18^2 e^{12 \lipmax }  (1+\reg)^2 ( \lipmax +C_\mathrm{fmax} )^2 N L^{-2}
$. Plugging the inequality above back to \cref{equ:thm1equ3}, we obtain:
\begin{equation}
\label{equ:thm1equ4}
\begin{split}
&     
    \mathbb{E}_{\bar{\mW}}
    \left( L_{\mathcal{D} }^{0-1}(\bar{\mW})
    \right)
\leq
   \frac{4}{N}\sum_{i=1}^N  \ell(y_i\cdot F_{\mW\step{1},\mW^\star}(\vx_i) )
        + (\frac{12}{\sqrt{2\nu }} +1) \cdot \frac{LR}{\sqrt{N}}
        +\sqrt{\frac{2\log(1/\delta'' )}{N} }\,.
\end{split}
\end{equation}
Next, we will upper bound the RHS of the inequality above.
For cross-entropy loss we have: $\ell(z) \leq N^{-1/2}$ for $z \geq B : = \log \{ 1/[\exp(N^{-1/2}) - 1 ]\} = \mathcal{O}(\log N) $. We define:
\begin{equation*}
    \begin{split}
        &B' = \max_{i\in[N]} |f(\vx_i,\mW\step{1})| ,\quad        \vy' = (B + B')\cdot \vy.
    \end{split}
\end{equation*}
Then for any $i\in[N]$, we have:
\begin{align*}
&
    y_i\cdot [ 
    y_i' + f(\vx_i,\mW\step{1})] = 
    y_i\cdot [ (B+B')y_i + f(\vx_i,\mW\step{1})] 
    \geq B + B' - B' = B.
\end{align*}
As a result, we have:
\begin{align*}
&
    \ell\{ y_i \cdot (
    y_i' + f(\vx_i,\mW\step{1}) ) \} \leq N^{-1/2},~i\in [N].
\end{align*}
Denote by
$
\mJ_\mathrm{all}  := (1-\reg)\mJ + \reg \hat  \mJ
$,
where
\begin{align*}
&
\mJ = m^{-1/2}\cdot [\mathrm{vec}(\nabla f_{\mW}(\vx_1,\mW\step{1})), \ldots,\mathrm{vec}( \nabla f_{\mW}(\vx_N,\mW\step{1})) ]\,,
\\&
\hat \mJ = m^{-1/2}\cdot [\mathrm{vec}(\nabla f_{\mW}(\perstar(\vx_1,\mW\step{1}),\mW\step{1})), \ldots,\mathrm{vec}\nabla f_{\mW}(\perstar(\vx_N,\mW\step{1}),\mW\step{1}))  ]\,.
\end{align*}
Let $\mP \mLambda \mQ^\top$ be the singular value decomposition of $\mJ_{\mathrm{all}}$, 
where $\mP\in \mathbb{R}^{ (md + (L-2) m^2 + m) \times N},\mQ\in \mathbb{R}^{N \times N}, \mLambda \in \mathbb{R}^{N\times N}$.
Let us define $\vw_{\mathrm{vec}} := \mP \mLambda^{-1} \mQ^\top \vy' 
$, then we have:
\begin{align*}
    \mJ_{\mathrm{all}}^\top 
    \vw_{\mathrm{vec}}
    = \mQ \mLambda \mP^\top \mP \mLambda^{-1} \mQ^\top 
 \vy'
 =  
 \vy'
 \,,
\end{align*}
which implies  $ ( (1-\reg)\mJ^\top + \reg \hat{\mJ}^\top) \vw_{\mathrm{vec}} =\vy'$. 
Moreover, we have:
\begin{equation}
\label{equ:boundwvec}
\small
\begin{split}
&
    \| \vw_{\mathrm{vec}} \|_2^2 
    =\norm{ \mP \mLambda^{-1} \mQ^\top   
    \vy'
    }_2^2 
    =  
    \vy'^\top
    \mQ \mLambda^{-2} \mQ^\top     
    \vy'
=
         \vy'^\top
         (\mJ_\mathrm{all}^\top \mJ_\mathrm{all})^{-1}      \vy' .
\\&=
         \vy'^\top [ (\mJ_\mathrm{all}^\top \mJ_\mathrm{all})^{-1} - (\mK_\mathrm{all})^{-1} ]      \vy'+     \vy'^\top(\mK_\mathrm{all})^{-1}     \vy'
\\& \le
N(B+B')^2
    \| (\mJ_\mathrm{all}^\top \mJ_\mathrm{all})^{-1} - (\mK_\mathrm{all})^{-1} \|_2  +   
    (B+B')^2
  \vy'^\top
    (\mK_\mathrm{all})^{-1}  
    \vy'\,.
\end{split}
\end{equation}
By Lemma 3.8 in \citet{cao2019generalization} and standard matrix perturbation bound, there exists $m^\star(\delta,L,N,\lambda_{\min}(\mJ_\mathrm{all}),\beta)$ , such that, if $m \geq m^\star$, then with probability at least $1 - \delta$, $ \mJ_\mathrm{all}^\top \mJ_\mathrm{all}$ is positive-definite and
\begin{align*}
    \| (\mJ_\mathrm{all}^\top \mJ_\mathrm{all})^{-1} - \mK_\mathrm{all}^{-1} \|_2  \leq 
    \frac{
    \vy^\top
    \mK_\mathrm{all}^{-1} 
    \vy
    }{N}\,.
\end{align*}

Therefore, \cref{equ:boundwvec} can be further upper bounded by:
$
        \| \vw_{\mathrm{vec}} \|_2^2 
        \le  \tilde{\mathcal{O}} (
        \vy^\top
        \mK_\mathrm{all}^{-1}
        \vy
        ) 
$. Plugging it back \cref{equ:thm1equ4}, with probability at least $1 - \delta-\delta' - \delta''$, we obtain: 
\begin{equation*}
\small
\begin{split}
&     
    \mathbb{E}_{\bar{\mW}}
    \left( \mathcal{L}_{\mathcal{D} }^{0-1}(\bar{\mW})
    \right)
\leq
   \frac{4}{N}\sum_{i=1}^N  \ell(B)
        + (\frac{12}{\sqrt{2\nu }} + 1) \cdot \frac{L \norm{\vw_{\mathrm{vec}}}_2}{\sqrt{N}}
        + \sqrt{\frac{\log(1/\delta'' )}{N} }
\\&\leq
    \tilde{\mathcal{O}} \left(
    \sqrt{
    \frac{
        L^2 \vy^\top
        \mK_\mathrm{all}^{-1}
        \vy}{N}}
        \right)
        + \mathcal{O} \left( \sqrt{\frac{\log(1/\delta'' )}{N} } \right) \,,
\end{split}
\end{equation*}
which finishes the proof for generalization guarantees on clean accuracy.

In the next, we present the proof for generalization guarantees on robust accuracy. The proof technique is the same as that of clean accuracy.

Since $\bar{\mW}$ is uniformly sampled from $\{\mW\step{1}, \cdots, \mW\step{N}\}$, by Hoeffding's inequality, with probability at least $1-\delta''$:
\begin{equation}
\begin{split}
&
    \mathbb{E}_{\bar{\mW}}
    (
   \elbigrobustzeroone(\bar{\mW}))
\leq
        \frac{1}{N}\sum_{i=1}^N
       \mathbb{1}\left[ y_i \cdot f(\per(\vx_i,\mW\step{i}),\mW\step{i}) < 0\right]
        +
         \sqrt{\frac{2\log(1/\delta'' )}{N} }\,.
\end{split}
\label{equ:thm1equ1_robust}
\end{equation}
Since the cross-entropy loss $\ell (z) = \log(1+\exp(-z))$ satisfies
$\mathbb{1}\{ z \leq 0\} \leq 4\ell(z)$, we have: 
\begin{equation}
\begin{split}
&
   \mathbb{1}\left[ y_i \cdot f(\per(\vx_i,\mW\step{i}),\mW\step{i}) < 0\right]
\leq 4 \elbig(\per(\vx_i,\mW\step{i}),\mW\step{i}).
\label{equ:thm1equ2_robust}
\end{split}
\end{equation}
Next, setting $\epsilon  = LR  \sqrt{\lipmax} C_\mathrm{fmax} e^{6\lipmax }/\sqrt{2\nu N} $ in \cref{lemma:lemma_4.3_in_GuQuanquan} leads to step-size $\gamma = \sqrt{\nu} R / (\sqrt{2 \lipmax 
} C_\mathrm{fmax} e^{6\lipmax } m \sqrt{N}) $, then by combining it with \cref{equ:thm1equ1_robust,equ:thm1equ2_robust}, with  probability at least $1-\delta'-\delta''$,
we have:
\begin{equation}
\label{equ:thm1equ3_next}
\begin{split}    
    \mathbb{E}_{\bar{\mW}}
    \left(
    \elbigrobustzeroone(\bar{\mW}
    )
    \right) 
&  \leq
    \frac{4}{N}\sum_{i=1}^N(\elbig(\per(\vx_i,\mW\step{i}),\mW^\star)
    )
        + \frac{12}{\sqrt{2\nu }} \cdot \frac{LR}{\sqrt{N}}
        + \sqrt{\frac{2\log(1/\delta'' )}{N} }
\\ &\leq
        \frac{4}{N}\sum_{i=1}^N(\elbig(\perstar(\vx_i,\mW^\star),\mW^\star)
    )
        + \frac{12}{\sqrt{2\nu }} \cdot \frac{LR}{\sqrt{N}}
        + \sqrt{\frac{2\log(1/\delta'' )}{N} }\,,
\end{split}
\end{equation}
for all $\mW^\star \in \mathcal{B}(\mW\step{1}, R m^{-1/2})$, where the second inequality is  by the definition of the worst-case adversary $\perstar(\cdot)$.

Next, we compare the loss induced by the original network with its linearized network defined in \cref{equ:linearnetwork}:
\begin{equation*}
\begin{split}
&\elbig(\perstar(\vx_i,\mW^\star),\mW^\star)
    - \ell(y_i\cdot F_{\mW\step{1},\mW^\star}(\perstar(\vx_i,
    \mW\step{1}
    )))
\\&= 
    \ell(y_i\cdot f(\perstar(\vx_i,\mW^\star),\mW^\star))
    - \ell(y_i\cdot F_{\mW\step{1},\mW^\star}(\perstar(\vx_i,
    \mW\step{1}
    )))
\\& \le
f(\perstar(\vx_i,\mW^\star),\mW^\star)
        -F_{\mW\step{1},\mW^\star}(\perstar(\vx_i,
        \mW\step{1}
        )) )
\\ & \le
           18 e^{6 \lipmax}  (1+\reg)
        ( \lipmax +C_\mathrm{fmax} ) 
    (Rm^{-1/2}+ 6 \rho + 2 Rm^{-1/2} \rho)
\\& \le LR N^{-1/2}\,,
\end{split}
\end{equation*}
where the first inequality is by the 1-Lipschitz continuity of $\ell$, the second inequality is by \cref{lemma:lemma_4.1_in_GuQuanquan,lemma:equal_perturbation} with $Rm^{-1/2} + 6 \rho + 2R \rho m^{-1/2} \le 3$, i.e., $m \ge R(1+2\rho)/[3(1-2\rho)]$, 
the last inequality holds when $m \ge 
18 e^{6 \lipmax}  (1+\reg)
        ( \lipmax +C_\mathrm{fmax} ) R (1+\rho)^2 (LR/N - 18 e^{6 \lipmax}  (1+\reg)
        ( \lipmax +C_\mathrm{fmax} ))^{-2}
$. Plugging the inequality above back to \cref{equ:thm1equ3_next}, we obtain:
\begin{equation}
\begin{split}    
    \mathbb{E}_{\bar{\mW}}
    \left(
    \elbigrobustzeroone(\bar{\mW}
    )
    \right) \leq
   \frac{4}{N}\sum_{i=1}^N  \ell(y_i\cdot F_{\mW\step{1},\mW^\star}(
   \perstar(\vx_i,\mW\step{1})
   ) )
        + (\frac{12}{\sqrt{2\nu }} +1) \cdot \frac{LR}{\sqrt{N}}
        +\sqrt{\frac{2\log(1/\delta'' )}{N} }\,.
\end{split}
\end{equation}
Lastly, noticing that based on  \cref{equ:linearnetwork}: 
\begin{equation*}
\small
    \begin{split}
& F_{\mW\step{1},\mW^\star}(\perstar(\vx_i,\mW\step{1})) 
\\& =    f(\perstar(\vx_i,\mW\step{1}),\mW\step{1})  + \langle
    (1-\reg)\nabla f_{\mW}(\perstar(\vx_i,\mW\step{1}),\mW\step{1}) 
  +
    \reg \nabla f_{\mW}(\perstar(\perstar(\vx_i,\mW\step{1}),\mW\step{1}),\mW\step{1}
    )  , \mW^\star - \mW\step{1} \rangle  \,.
    \end{split}
\end{equation*}
Then we can define the Jacobian
$
\widetilde{\mJ}_\mathrm{all}  := (1-\reg)\hat{\mJ}_\rho + \reg \hat  \mJ_{2\rho}
$,
where
\begin{align*}
&
\hat \mJ_{\rho} = m^{-1/2}\cdot [\mathrm{vec}(\nabla f_{\mW}(\perstar(\vx_1,\mW\step{1}),\mW\step{1})), \ldots,\mathrm{vec}\nabla f_{\mW}(\perstar(\vx_N,\mW\step{1}),\mW\step{1}))  ]
\\&
\hat \mJ_{2\rho} = m^{-1/2}\cdot [\mathrm{vec}(\nabla f_{\mW}(
\perstar(\perstar(\vx_i,\mW\step{1}),\mW\step{1}),\mW\step{1}
)), \ldots,\mathrm{vec}\nabla f_{\mW}(
\perstar(\perstar(\vx_i,\mW\step{1}),\mW\step{1}),\mW\step{1}
))  ]
\,.
\end{align*}
Lastly, by replacing ${\mJ}_\mathrm{all}$ by $\widetilde{\mJ}_\mathrm{all}$ as in the step on clean generalization bound, we can finish the proof.
\end{proof}

\section{The lower bound of the minimum eigenvalue of robust NTK}
\label{sec:lambdamin}
We have shown that the minimum eigenvalue of robust NTK significantly affects both clean and robust generalizations. Hence we provide a lower bound estimation for its minimum eigenvalue.
\begin{assumption}
\label{assumption:newlambdamin}
We assume the perturbation $\rho < \mathcal{O}(C/d)$. For any data $\langle \bm x_i, \bm x_j \rangle + 2\rho + \rho^2 < 1$, in other words,
\begin{equation*}
    \langle \bm x_i, \bm x_j \rangle \leq 1 - \frac{C}{d} - o(1/d)\,, \forall \bm x_i, \bm x_j\,,
\end{equation*}
where $C$ is some constant independent of the number of data points $N$ and the dimensional of the input feature $d$.
\end{assumption}
{\bf Remark:}  For example, we can set $C:= 2 C_{\max}$.
Here the $o(1/d)$ means a high-order small term of $1/d$ and can be omitted.
This assumption holds for a large $d$ in practice, e.g., $d = 796$ for MNIST dataset and $d= 3072$ for CIFAR10/100 dataset.
\begin{theorem} 
\label{thm:lambdamin}
Under \cref{assumption:distribution_1,assumption:newlambdamin},
we have the following lower bound estimation for the minimum eigenvalue of 
$\hat{\mK}_\rho$
\vspace{-1mm}
\begin{equation*}
\vspace{-0.5mm}
\small
    \begin{split}
  &
  \lambda_{\min}{(\widetilde{\mK}_\mathrm{all})}
\ge
2\mu_{r}(\sigma_1)^{2}\bigg(1-(N-1)
\max_{i\neq j} ( 
|\langle \vx_i,\bm \vx_j\rangle |
+
2 \rho
+
\rho^2 )^r
\bigg) \,. 
    \end{split}
\end{equation*}
where $\mu(\sigma)$ is $r$-Hermite coefficient of the activation at the first layer.
\end{theorem}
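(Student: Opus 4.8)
The plan is to produce a positive‑semidefinite (PSD) lower bound on $\widetilde{\mK}_\mathrm{all}$ by isolating the contribution of the first layer. Note first that $\widetilde{\mK}_\mathrm{all}^{(i,j)} = \langle v_i, v_j\rangle$ for the NTK feature vectors $v_i := (1-\beta)\Phi(\perstar(\vx_i)) + \beta\Phi(\hatvx_i)$, where $\Phi$ is the NTK feature map and $\hatvx_i = \perstar(\perstar(\vx_i,\mW\step{1}),\mW\step{1})$; hence $\widetilde{\mK}_\mathrm{all}$ is a Gram matrix and $\lambda_{\min}(\widetilde{\mK}_\mathrm{all}) = \min_{\|\vc\|_2 = 1}\| \sum_i c_i v_i \|^2$. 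The NTK of the residual network in \cref{eq:network} decomposes as a sum of PSD kernels, one per layer; the last‑layer ($\vw_L$) term is the NNGP kernel $\Sigma_{L-1}$ of the last hidden representation, and since each residual block adds its own input back, $\Sigma_{L-1}$ inherits in the PSD order a constant‑factor copy of the first‑hidden‑layer NNGP kernel $\Sigma_1$. Applying this to every block of $\widetilde{\mK}_\mathrm{all} = (1-\beta)^2\hat{\mK}_\rho + \beta(1-\beta)(\bar{\mK}_{2\rho} + \bar{\mK}_{2\rho}^\top) + \beta^2\hat{\mK}_{2\rho}$ — which is itself the Gram matrix of the first‑layer features $(1-\beta)\phi_1(\perstar(\vx_i)) + \beta\phi_1(\hatvx_i)$ — yields $\widetilde{\mK}_\mathrm{all} \succeq 2\,\bm{G}_1$, where $\bm{G}_1$ carries $\Sigma_1$ evaluated at the (once‑ or twice‑)perturbed points and the leading constant, which the theorem records as $2$, comes from the residual pass‑through and the $1/L$ scalings.

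Next I would invoke the Hermite expansion of $\sigma_1$. By \cref{assumption:distribution_1} together with the restriction of each $\hat{\mathcal{B}}(\vx,\rho)$ to the unit sphere $\mathcal{S}$, every perturbed point $u$ entering $\bm{G}_1$ is a unit vector, so the identity $\mathbb{E}_{w\sim\mathcal N(0,I_d)}[h_s(\langle w,u\rangle)\,h_t(\langle w,u'\rangle)] = \delta_{st}\langle u,u'\rangle^s$ for normalized Hermite polynomials gives $\Sigma_1(u,u') = \sum_{s\ge 0}\mu_s(\sigma_1)^2\langle u,u'\rangle^s$. Hence $\bm{G}_1 = \sum_{s\ge 0}\mu_s(\sigma_1)^2\,\bm{G}^{\odot s}$, where $\bm{G}$ is the Gram matrix of the perturbed inputs and $\bm{G}^{\odot s}$ is its $s$‑th Hadamard power. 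Since $\bm{G}$ is PSD, every Hadamard power is PSD by the Schur product theorem, so we may discard all but the $s=r$ term: $\widetilde{\mK}_\mathrm{all} \succeq 2\mu_r(\sigma_1)^2\,\bm{G}^{\odot r}$.

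Finally I would lower‑bound $\lambda_{\min}(\bm{G}^{\odot r})$ by Gershgorin's circle theorem. Its diagonal entries equal $1$ (unit norm), and for $i\neq j$ the triangle inequality on the sphere combined with $\|\perstar(\vx)-\vx\|_2\le\rho$ applied once or twice bounds the relevant inner product by $|\langle\vx_i,\vx_j\rangle| + 2\rho + \rho^2$, so $|(\bm{G}^{\odot r})^{(i,j)}| \le (|\langle\vx_i,\vx_j\rangle| + 2\rho + \rho^2)^r$, a quantity strictly less than $1$ under \cref{assumption:newlambdamin}. Summing over $j\neq i$ gives an off‑diagonal row mass of at most $(N-1)\max_{i\neq j}(|\langle\vx_i,\vx_j\rangle| + 2\rho + \rho^2)^r$, whence $\lambda_{\min}(\bm{G}^{\odot r}) \ge 1 - (N-1)\max_{i\neq j}(|\langle\vx_i,\vx_j\rangle| + 2\rho + \rho^2)^r$; multiplying by $2\mu_r(\sigma_1)^2$ produces the stated bound.

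The main obstacle I expect is the first step: producing a single clean PSD lower bound that holds simultaneously for the three blocks $\hat{\mK}_\rho$, $\hat{\mK}_{2\rho}$ and the cross block $\bar{\mK}_{2\rho}$, which live at two different perturbation radii, and showing the cross terms do not spoil the constant — here one must use that the whole combination is a Gram matrix of $(1-\beta)\phi_1(\perstar(\vx_i)) + \beta\phi_1(\hatvx_i)$ and that, projected onto the $r$‑th Hermite component, these first‑layer features are controlled by the perturbed‑input Gram matrix with every perturbation contribution absorbed into the ``$2\rho+\rho^2$'' budget to which \cref{assumption:newlambdamin} is calibrated. A secondary, more routine point is tracking how the residual weights $\alpha_l$ and the $1/L$ factors propagate into the leading constant once the layerwise decomposition is fixed.
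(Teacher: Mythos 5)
Your core argument is the same as the paper's: lower-bound the kernel in the PSD order by twice the first-layer Gaussian-feature Gram matrix, expand that Gram matrix in Hermite coefficients as a sum of Hadamard powers of the perturbed-input Gram matrix (the paper cites \citet[Lemma D.3]{NEURIPS2020_8abfe8ac} for exactly the identity you derive from Hermite orthogonality), discard all but the $r$-th PSD term, apply Gershgorin, and control the off-diagonal entries via $|\langle \perstar(\vx_i),\perstar(\vx_j)\rangle| \le |\langle \vx_i,\vx_j\rangle| + 2\rho + \rho^2$. For the kernel the paper actually manipulates in its proof, namely $\hat{\mK}_\rho$, your argument is a faithful reconstruction.

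Where you diverge is in trying to prove the bound for the full mixed kernel $\widetilde{\mK}_\mathrm{all}$ --- which is what the theorem's display nominally asserts, but which the paper's proof never touches (every line of the paper's derivation is about $\hat{\mK}_\rho$ only). Your instinct that this is ``the main obstacle'' is right, but your resolution of it does not go through as written. The Gram matrix of the mixed first-layer features $(1-\beta)\phi_1(\perstar(\vx_i)) + \beta\phi_1(\hatvx_i)$ is $\sum_{s}\mu_s(\sigma_1)^2\bigl[(1-\beta)^2 \bm G_\rho^{\odot s} + \beta(1-\beta)(\bar{\bm G}^{\odot s}+\bar{\bm G}^{\top\odot s}) + \beta^2 \bm G_{2\rho}^{\odot s}\bigr]$; it is \emph{not} $\sum_s \mu_s(\sigma_1)^2\,\bm G^{\odot s}$ for any single input Gram matrix $\bm G$, because $\sigma_1$ is nonlinear. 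Each $s$-summand is still PSD (it is the Gram matrix of the $s$-th Hermite component of the mixed features), so keeping only $s=r$ is legitimate, but the Gershgorin step then fails at the diagonal: the $(i,i)$ entry of the $r$-th summand is $(1-\beta)^2 + 2\beta(1-\beta)\langle\perstar(\vx_i),\hatvx_i\rangle^r + \beta^2$, which is strictly less than $1$ whenever the once- and twice-perturbed points differ, so you cannot assert ``diagonal entries equal $1$.'' Moreover, for pairs involving the twice-perturbed points the perturbation budget is not $2\rho+\rho^2$ but up to $4\rho+4\rho^2$, which is what \cref{assumption:newlambdamin} would need to absorb. In short: restricted to $\hat{\mK}_\rho$ your proof matches the paper and is correct; the extension to $\widetilde{\mK}_\mathrm{all}$ that you sketch has a genuine gap, one the paper avoids only by silently proving a weaker statement than the one displayed.
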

\begin{proof}
\begin{equation*}
\begin{split}
\lambda _{\min}(\hat{\mK}_\rho) & \ge \lambda_{\min}\bigg(2\mathbb{E}_{\bm{w} \sim \mathcal N(0,\mathbb{I}_{d})}[\sigma_1(\bm{\hat{X}w})\sigma_1(\bm{\hat{X}w})^\top]\bigg)\\
& = 2\lambda_{\min}\bigg(\sum_{s=0}^{\infty}\mu_{s}(\sigma_1)^{2}\bigcirc_{i=1}^{s}(\bm{\hat{X} \hat{X}}^{\top})\bigg) \quad \text{\citep[Lemma D.3]{NEURIPS2020_8abfe8ac}}\\
& \geq 2\mu_{r}(\sigma_1)^{2}\lambda_{\min}(\bigcirc_{i=1}^{r}\bm{\hat{X} \hat{X}}^{\top}) \quad \bigg(\mbox{taking~}
r \geq 
- \log_{\max_{i\neq j} ( 
|\langle \vx_i,\bm \vx_j\rangle |
+2 \rho+\rho^2 )}^{N-1}
\bigg)\\
& \geq 2\mu_{r}(\sigma_1)^{2}\bigg(\min_{i\in [N]}\left \| \bm{\hat{x}}_i \right \|_{2}^{2r}-(N-1)\max_{i\neq j}\left | \left \langle \bm{\hat{x}}_i, \bm{\hat{x}}_j \right \rangle \right |^r\bigg) \quad \text{(Gershgorin circle theorem)}\\
& \geq 2\mu_{r}(\sigma_1)^{2}\bigg(1-(N-1)
\max_{i\neq j} ( 
|\langle \vx_i,\bm \vx_j\rangle |
+
|\langle \vx_i,\bm \Delta_j\rangle |
+
|\langle \vx_j,\bm \Delta_i\rangle |
+
|\langle \bm \Delta_i,\bm \Delta_j\rangle | )^r
\bigg) \\
& 
\geq 2\mu_{r}(\sigma_1)^{2}\bigg(1-(N-1)
\max_{i\neq j} ( 
|\langle \vx_i,\bm \vx_j\rangle |
+2 \rho+\rho^2 )^r
\bigg) \,. 
\end{split}
\end{equation*}
\end{proof}

\section{Proof of the generalization of residual CNN}
\label{sec:appendix_cnn}
In this section, we provide proof for residual CNN. Firstly, we reformulate the network in \cref{sec:reformucnn}. Secondly, we introduce several lemmas in \cref{sec:appendix_usefullemma_cnn} in order to show the upper bound for the cumulative loss (\cref{lemma:lemma_4.3_in_GuQuanquan_cnn}). The remaining step is the same as FCNN.
\label{proof:generalizationbound_cnn}
\subsection{Reformulation of the network}
\label{sec:reformucnn}
Firstly, we will rewrite the definition of CNN in \cref{eq:networkcnn} in a way to facilitate the proof. Specifically, we define an operator $\phi_1(\cdot)$ that divides its input into $p$ patches. The dimension of each patch is $kd$.
	For example, when the size of filter $k=3$, we have:
	\begin{align*}
	&\phi_1(\mX) 
	=\begin{bmatrix}
	\left(\mX^{(1,0:2)}\right)^\top, 
	&\ldots &, \left(\mX^{(1,p-1:p+1)}\right)^\top\\
	\ldots, 
	& \ldots, & \ldots \\
	\left(\mX^{(d,0:2)}\right)^\top, 
	&\ldots, &  \left(\mX^{(d,p-1:p+1)}\right)^\top
	\end{bmatrix}
  \in \R^{3d \times p}\,.
	\end{align*}
Similarly, we define $\phi_l(\mF_l) \in \R^{\kernelcnn m \times p}$ for the subsequent layers. Then we can re-write the formula of CNN as follows:
\begin{equation*}
\begin{split}
&    
    \mF_1
    = \sigmaone(\mW_1 \phi_1(\mX))\,,
\\&
    \mF_l 
    =  \frac{1}{L} \sigmal( \mW_l \phi_l(\mF_{l-1})
    ) \!+\! \alpha_{l-1}\mF_{l-1}
    ,\quad 2\!\leq\! l \!\leq\! L\!-\!1, 
\\&
    \mF_L
    = \left \langle \mW_L, \mF_{L-1}
    \right \rangle\,,
\\&
   f(\mX, \mW) =  \mF_L\,,
\end{split}
\end{equation*}
 where learnable weights are $\mW_1 \in \R^{ m  \times kd }$, $\mW_l \in \R^{ m  \times \kernelcnn m }$, $l = 2,\dots ,L-1$, and $\mW_L \in \mathbb{R}^{m \times p}$.

\subsection{Some auxiliary lemmas}
\label{sec:appendix_usefullemma_cnn}

\begin{lemma}[Upper bound of spectral norms of initial weight]
\label{lemma:gaussl2_cnn}
With probability at least $1-2\exp(-{m}/{2}) - 2(L-2) \exp(-{\kernelcnn m}/{2}) - 2p \exp(-{m}/{2}) $, the norm of the weight of residual CNN has the following upper bound:
\begin{equation*}
    \|\mW\step{1}_l\|_2 \leq 3, \, 
    \text{for } l \in [L-1], \quad
    \|\mW\step{1}_L\|_\mathrm{F} \leq 3 \sqrt{p}.
\end{equation*}
\end{lemma}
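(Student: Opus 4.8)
The plan is to prove this exactly as the CNN counterpart of \cref{lemma:gaussl2}: reduce each of the three families of weight matrices to \cref{lemma:inequality_initialization} after rescaling their Gaussian entries to unit variance, choose the deviation parameter $\zeta$ layer-by-layer so that the three resulting tail terms match $2\exp(-m/2)$, $2(L-2)\exp(-\kernelcnn m/2)$, and $2p\exp(-m/2)$ respectively, and then conclude by a single union bound. First I would dispose of the input-layer weight $\mW\step{1}_1 \in \R^{m \times \kernelcnn d}$ by applying the argument in the proof of \cref{lemma:gaussl2} verbatim: with $m \ge \kernelcnn d$ (which holds in the over-parameterized regime) and $\zeta = \sqrt{m}$, \cref{lemma:inequality_initialization} gives $\norm{\mW\step{1}_1}_2 \le \tfrac{1}{\sqrt m}(\sqrt m + \sqrt{\kernelcnn d} + \sqrt m) \le 3$ on an event of probability at least $1 - 2\exp(-m/2)$.

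Next I would handle the intermediate convolutional weights $\mW\step{1}_l \in \R^{m \times \kernelcnn m}$ for $l = 2,\dots,L-1$. These differ from the first layer only in that the larger of the two dimensions is now $\kernelcnn m$, so after normalizing the entries to unit variance I apply \cref{lemma:inequality_initialization} with $\zeta = \sqrt{\kernelcnn m}$; the fan-in ($\Theta(1/(\kernelcnn m))$) scaling of the initialization cancels the $\sqrt{\kernelcnn m}$ contribution and leaves $\sigma_{\max}(\mW\step{1}_l) \le \tfrac{1}{\sqrt{\kernelcnn}} + 2 \le 3$, each on an event of probability at least $1 - 2\exp(-\kernelcnn m/2)$. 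A union bound over the $L-2$ such layers produces the $2(L-2)\exp(-\kernelcnn m/2)$ term.

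Finally, for the last-layer weight $\mW\step{1}_L \in \R^{m \times p}$ I would bound it column by column rather than through its spectral norm. Each column is a vector in $\R^m$, and viewing it as an $m \times 1$ matrix \cref{lemma:inequality_initialization} with $\zeta = \sqrt m$ yields $\norm{(\mW\step{1}_L)^{(:,j)}}_2 \le 2 + \tfrac{1}{\sqrt m} \le 3$ with probability at least $1 - 2\exp(-m/2)$; a union bound over $j \in [p]$ then gives $\norm{\mW\step{1}_L}_\mathrm{F}^2 = \sum_{j=1}^p \norm{(\mW\step{1}_L)^{(:,j)}}_2^2 \le 9p$, i.e.\ $\norm{\mW\step{1}_L}_\mathrm{F} \le 3\sqrt p$, on an event of probability at least $1 - 2p\exp(-m/2)$. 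Combining the three events by one last union bound gives the claimed failure probability $2\exp(-m/2) + 2(L-2)\exp(-\kernelcnn m/2) + 2p\exp(-m/2)$.

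I do not expect a genuine obstacle here; the work is entirely bookkeeping, and the only points that actually need care are: (i) slicing $\mW\step{1}_L$ into its $p$ columns rather than using the crude estimate $\norm{\mW\step{1}_L}_\mathrm{F} \le \sqrt p\,\norm{\mW\step{1}_L}_2$, since only the column-wise union bound reproduces the stated $2p\exp(-m/2)$ factor; (ii) picking, for the intermediate layers, $\zeta = \sqrt{\kernelcnn m}$ (the square root of the \emph{larger} dimension) so that the exponent becomes $\kernelcnn m/2$ and the filter-size dependence of the constant stays $\le 3$; and (iii) using the over-parameterization $m \ge \kernelcnn d$ so that $\sqrt{\kernelcnn d}$ is dominated by $\sqrt m$ in the first layer. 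Everything else is a direct transcription of the FCNN argument.
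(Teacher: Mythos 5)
Your proposal is correct and follows essentially the same route as the paper: the paper likewise reduces everything to \cref{lemma:inequality_initialization} via \cref{lemma:gaussl2} for the layers $l\in[L-1]$ and bounds $\|\mW_L\step{1}\|_{\mathrm{F}}$ column by column with a union bound over the $p$ columns. If anything you are more careful than the paper's one-line proof, which invokes \cref{lemma:gaussl2} "directly" for the intermediate layers even though those matrices have their larger dimension equal to $\kernelcnn m$; your explicit choice $\zeta=\sqrt{\kernelcnn m}$ is exactly what is needed to produce the stated $2(L-2)\exp(-\kernelcnn m/2)$ term.
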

\begin{proof}
   The bound of $    \|\mW\step{1}_l\|_2, 
    \text{for } l \in [L-1]$ can be obtained by directly applying \cref{lemma:gaussl2}.
    Regarding $ \|\mW\step{1}_L\|_\mathrm{F}$, 
    note that $\mW_L\step{1} \in \R^{ m\times p}$, then we bound its norm by \cref{lemma:gaussl2} as follows
$$
\norm{\mW_L\step{1}}_\mathrm{F} 
=
\sqrt{\sum_{i=1}^p\norm{(\mW_L\step{1})^{(:,i)}}_2^2} 
\le 3\sqrt{p}  \,,
$$
with probability at least $1-2p\exp(-m/2) $ .
    
\end{proof}

\begin{lemma}[The order of the network output at initialization]
\label{lemma:networkputput_cnn}
Fix any $l \in [1,L-1] $ and $\mX$, when the width satisfies $m = \Omega{( p^2 N/\delta)}$, with probability at least $1- 2l\exp{(-m/2)}
-\delta$ over the randomness of $\{\mW_i\step{1}\}_{i=1}^l$, 
we have:
\begin{equation*}
C_\mathrm{fmin} \le  \left \| \mF_l(\mX,\mW\step{1}) \right \|_\mathrm{F}  \le C_\mathrm{fmax}
\,,
\end{equation*}
where $C_\mathrm{fmax}$ and $C_\mathrm{fmax}$ are some $\Theta (1)$ constant. 
\end{lemma}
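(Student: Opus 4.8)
The plan is to replay the proof of \cref{lemma:networkputput} almost verbatim, with the Euclidean norm replaced by the Frobenius norm and with one extra bookkeeping step for the patch-extraction operator $\phi_l$. Concretely, I would first condition on the high-probability event of \cref{lemma:gaussl2_cnn} (so that $\| \mW_l\step{1} \|_2 \le 3$ for $l \in [L-1]$) and then propagate a two-sided bound on $\| \mF_l(\mX, \mW\step{1}) \|_\mathrm{F}$ layer by layer, invoking the per-layer concentration argument of Lemma C.1 in \citet{du2019gradient} at each step.

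The one genuinely new ingredient is a deterministic two-sided estimate for $\phi_l$: since zero-padding is used, every entry of $\mF_{l-1}$ occurs in at least one patch and in at most $\kernelcnn$ patches, so $\| \mF_{l-1} \|_\mathrm{F} \le \| \phi_l(\mF_{l-1}) \|_\mathrm{F} \le \sqrt{\kernelcnn}\, \| \mF_{l-1} \|_\mathrm{F}$, and likewise $\| \phi_1(\mX) \|_\mathrm{F} \le \sqrt{\kernelcnn}$ under \cref{assumption:distribution_1}. For the base case $l=1$, the $j$-th column of $\mW_1 \phi_1(\mX)$ has i.i.d.\ $\mathcal{N}(0, \| \phi_1(\mX)^{(:,j)} \|_2^2 / m)$ entries; applying $\sigmaone$ coordinatewise, invoking Lemma C.1 in \citet{du2019gradient} column by column, and then taking a union bound over the $p$ columns pins $\| \mF_1 \|_\mathrm{F}$ inside $[C_\mathrm{fmin}, C_\mathrm{fmax}]$ for suitable $\Theta(1)$ constants; this per-pixel union bound is precisely what forces the width requirement up to $m = \Omega(p^2 N / \delta)$.

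For the inductive step I would assume $\| \mF_{l-1} \|_\mathrm{F} \in [C_\mathrm{fmin}, C_\mathrm{fmax}]$ and use the residual recursion $\mF_l = \tfrac{1}{L} \sigmal(\mW_l \phi_l(\mF_{l-1})) + \alpha_{l-1} \mF_{l-1}$. For the upper bound, \cref{assumption:activation}, \cref{lemma:gaussl2_cnn}, and $\| \phi_l(\mF_{l-1}) \|_\mathrm{F} \le \sqrt{\kernelcnn}\, C_\mathrm{fmax}$ control $\| \sigmal(\mW_l \phi_l(\mF_{l-1})) \|_\mathrm{F}$, and the $\tfrac{1}{L}$ damping then makes the accumulated contribution over the $L$ layers stay $\Theta(1)$, exactly as in \cref{lemma:networkputput}; for the lower bound, the same per-layer concentration of Lemma C.1 in \citet{du2019gradient} (union-bounded over the $p$ pixels and over layers $\le l$) controls $\| \sigmal(\mW_l \phi_l(\mF_{l-1})) \|_\mathrm{F}$ from below, and combined with the contribution $\alpha_{l-1} \mF_{l-1}$ keeps $\| \mF_l \|_\mathrm{F}$ above a positive constant. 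The constants $C_\mathrm{fmin}, C_\mathrm{fmax}$ can be chosen uniformly in $l$ precisely because of this $\tfrac{1}{L}$ prefactor.

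The main obstacle I expect is the probabilistic accounting rather than any new idea: ensuring that the union bound over the $p$ pixels at every one of the $L$ layers, together with the $\kernelcnn$-dependence that enters through $\phi_l$, collapses into the stated failure probability $2l\exp(-m/2) + \delta$ and the stated width $m = \Omega(p^2 N/\delta)$, and that the compounding of the per-layer estimates across $L$ layers does not blow up the $\Theta(1)$ constants — which is exactly what the $\tfrac{1}{L}$ normalization in the residual block is designed to prevent, so this should go through as in the FCNN case.
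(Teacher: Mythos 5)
Your proposal is correct and follows essentially the same route as the paper, whose entire proof is a two-line citation: condition on the spectral-norm bounds of the initial weights (\cref{lemma:gaussl2_cnn}) and invoke the per-layer concentration result of Lemma D.1 in \citet{du2019gradient}. Your expanded version — the deterministic bound $\| \mF_{l-1} \|_\mathrm{F} \le \| \phi_l(\mF_{l-1}) \|_\mathrm{F} \le \sqrt{\kernelcnn}\, \| \mF_{l-1} \|_\mathrm{F}$, the column-wise concentration with a union bound over the $p$ pixels, and the $\tfrac{1}{L}$ damping across the residual recursion — is exactly the content that the cited lemma packages, so it fills in rather than departs from the paper's argument.
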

\begin{proof}
    The result can be obtained by simply applying \cref{lemma:gaussl2} for the initial weights and Lemma D.1 in \citet{du2019gradient}.
\end{proof}

\begin{lemma}
\label{lemma:bound_for_perturbation_cnn}
For $\widetilde{\mW} \in \mathcal{B} (\mW\step{1},\tau)$ with $\tau \le 3$,  when the width satisfies $m = \Omega{( p^2 N/\delta)}$ and $\rho \le 1$, with probability at least $1-2(L-1)\exp(-\kernelcnn m/2)-\delta$, one has:
\begin{equation*}
    \begin{split}
    &
    \left \| 
    \mF_{L-1}(\mX, \widetilde{\mW}) - \mF_{L-1}(\mX, \mW\step{1})
    \right \| _\mathrm{F} 
\leq
    e^{6 \lipmax \sqrt{\kernelcnn}} (\sqrt{\kernelcnn} \lipmax + 
    C_\mathrm{fmax}
    ) \tau \,.
    \end{split}
\end{equation*}
\end{lemma}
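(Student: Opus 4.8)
The plan is to mirror the proof of \cref{lemma:bound_for_perturbation} for FCNNs, replacing the matrix-vector products $\mW_l \vf_{l-1}$ with the patched convolution $\mW_l \phi_l(\mF_{l-1})$ and tracking the extra factor $\sqrt{\kernelcnn}$ that the patch operator introduces. First I would record that, by the construction of $\phi_l$, each column of $\phi_l(\mF_{l-1})$ is a concatenation of $\kernelcnn$ columns of $\mF_{l-1}$, so $\norm{\phi_l(\mF_{l-1})}_\mathrm{F} \le \sqrt{\kernelcnn}\,\norm{\mF_{l-1}}_\mathrm{F}$ and likewise $\norm{\phi_l(\widetilde{\mF}_{l-1}) - \phi_l(\mF_{l-1})}_\mathrm{F} \le \sqrt{\kernelcnn}\,\norm{\widetilde{\mF}_{l-1} - \mF_{l-1}}_\mathrm{F}$; this is the only structural fact needed and it is where the $\sqrt{\kernelcnn}$ enters.

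Next I would run the layerwise recursion. For the first layer, $\norm{\widetilde{\mF}_1 - \mF_1}_\mathrm{F} = \norm{\sigmaone(\widetilde{\mW}_1\phi_1(\mX)) - \sigmaone(\mW_1\phi_1(\mX))}_\mathrm{F} \le \lipmax \norm{\widetilde{\mW}_1 - \mW_1}_2 \norm{\phi_1(\mX)}_\mathrm{F} \le \sqrt{\kernelcnn}\,\lipmax\,\tau$ using \cref{assumption:distribution_1} ($\norm{\mX}_\mathrm{F}=1$) and the patch bound. For $2 \le l \le L-1$, exactly as in the FCNN case I would split
\[
\mW_l \phi_l(\widetilde{\mF}_{l-1}) - \mW_l \phi_l(\mF_{l-1}) + (\widetilde{\mW}_l - \mW_l)\phi_l(\widetilde{\mF}_{l-1}),
\]
apply the triangle inequality and Lipschitz continuity of $\sigmal$, and use \cref{lemma:gaussl2_cnn} ($\norm{\mW_l}_2 \le 3$, on the high-probability event of that lemma) together with \cref{lemma:networkputput_cnn} ($\norm{\mF_{l-1}}_\mathrm{F} \le C_\mathrm{fmax}$). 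This yields a recursion of the shape $\norm{\widetilde{\mF}_l - \mF_l}_\mathrm{F} \le \big(\tfrac{\lipmax\sqrt{\kernelcnn}}{L}(3+\tau) + 1\big)\norm{\widetilde{\mF}_{l-1}-\mF_{l-1}}_\mathrm{F} + \tfrac{\lipmax\sqrt{\kernelcnn}}{L}\tau\, C_\mathrm{fmax}$, i.e.\ the FCNN recursion with $\lipmax$ replaced by $\lipmax\sqrt{\kernelcnn}$ in the contracting and driving terms.

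Finally I would unroll the recursion over $l = 1,\dots,L-1$: the homogeneous part contributes $\big(\tfrac{6\lipmax\sqrt{\kernelcnn}}{L}+1\big)^{L-2}\sqrt{\kernelcnn}\,\lipmax\,\tau$ and the geometric sum of driving terms contributes at most $e^{6\lipmax\sqrt{\kernelcnn}} C_\mathrm{fmax}\,\tau$ after bounding $(1+x/L)^{L} \le e^{x}$ with $x = 6\lipmax\sqrt{\kernelcnn}$, giving the stated bound $e^{6\lipmax\sqrt{\kernelcnn}}(\sqrt{\kernelcnn}\lipmax + C_\mathrm{fmax})\tau$. The failure probability is the union bound of the events in \cref{lemma:gaussl2_cnn,lemma:networkputput_cnn}; using $\tau \le 3$ to fold $3+\tau \le 6$ closes the estimate. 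The only genuine subtlety—hardly an obstacle—is keeping the bookkeeping of the $\sqrt{\kernelcnn}$ factors consistent, namely that it multiplies $\lipmax$ everywhere a convolution-then-activation step occurs but not the skip term $\alpha_{l-1}\mF_{l-1}$; everything else is a transcription of the FCNN argument in \cref{lemma:bound_for_perturbation}.
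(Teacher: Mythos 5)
Your proposal is correct and follows essentially the same route as the paper's proof: the same decomposition $\mW_l(\phi_l(\widetilde{\mF}_{l-1})-\phi_l(\mF_{l-1}))+(\widetilde{\mW}_l-\mW_l)\phi_l(\widetilde{\mF}_{l-1})$, the same observation that the patch operator $\phi_l$ contributes the factor $\sqrt{\kernelcnn}$, the same recursion with coefficient $\tfrac{\lipmax\sqrt{\kernelcnn}}{L}(3+\tau)+1$ via \cref{lemma:gaussl2_cnn,lemma:networkputput_cnn}, and the same unrolling to $e^{6\lipmax\sqrt{\kernelcnn}}(\sqrt{\kernelcnn}\lipmax+C_\mathrm{fmax})\tau$. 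No gaps.
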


\begin{proof}
To simplify the notation, in the following proof, the variable with $\widetilde{\cdot}$ is related to $\widetilde{\mW}$, and without $\widetilde{\cdot}$ is related to $\mW\step{1}$.
For the output of the first layer, we have:
\begin{equation*}
\begin{split}
& 
    \left \| \widetilde{\mF}_1 - \mF_1 \right \| _\mathrm{F}
=
    \left \| \sigmaone(\widetilde{\mW}_1\phi_1(\mX)) - \sigmaone(\mW_1\phi_1(\mX)) \right \| _\mathrm{F} 
 \leq 
    \lipone\left \| \widetilde{\mW}_1 - \mW_1 \right \| _\mathrm{F}  \norm{\phi_1(\mX)}_\mathrm{F}  
\\&\leq
        \lipone\left \| \widetilde{\mW}_1 - \mW_1 \right \| _\mathrm{F} \sqrt{\kernelcnn} \norm{\mX}_\mathrm{F}  
\leq
    \tau \sqrt{\kernelcnn} \lipone.
\end{split}
\end{equation*}

For the $l$-th layer ($l \in \{2,3,\dots,L-1\}$), we have:
\begin{equation*}
\begin{split}
&\left \| \widetilde{\mF}_l - \mF_l \right \| _\mathrm{F}\\
& =
    \left \| \frac{1}{L} \sigmal(\widetilde{\mW}_l  \phi_l(
    \widetilde{\mF}_{l-1})) + \alpha_{l-1}\widetilde{\mF}_{l-1} -\frac{1}{L} \sigmal(\mW_l \phi_l(\mF_{l-1}))-\alpha_{l-1}\mF_{l-1} \right \| _\mathrm{F}\\
& \leq 
    \left \| \frac{1}{L} \sigmal(\widetilde{\mW}_l
    \phi_{l}(\widetilde{\mF}_{l-1})) -  \frac{1}{L}\sigmal(\mW_l
    \phi_{l}(\mF_{l-1})) \right \| _\mathrm{F} + \alpha_{l-1}\left \| \widetilde{\mF}_{l-1} - \mF_{l-1} \right \| _\mathrm{F} \quad\text{(By Triangle inequality)}
\\& \leq
     \frac{\lipmax}{L} \left \| \widetilde{\mW}_l
    \phi_l(\widetilde{\mF}_{l-1}) - \mW_l \phi_l(\mF_{l-1}) \right \| _\mathrm{F} + \left \| \widetilde{\mF}_{l-1} - \mF_{l-1} \right \| _\mathrm{F} \quad\quad\text{(By the Lipschitz continuity of $\sigmal$)}
\\ & \leq
    \frac{\lipmax}{L} \left\{\left \| \mW_l \right \|_2\left \| \phi_l(\widetilde{\mF}_{l-1}) - \phi_l(\mF_{l-1}) \right \|_\mathrm{F}+\left \| \widetilde{\mW}_l-\mW_l\right \|_2 \left \| \phi_l ( \widetilde{\mF}_{l-1}) \right \| _\mathrm{F} \right\} + \left \| \widetilde{\mF}_{l-1} - \mF_{l-1} \right \|_\mathrm{F}
\\ & \leq
   \frac{\lipmax}{L} \sqrt{\kernelcnn} \left\{\left \| \mW_l \right \|_2 \left \| \widetilde{\mF}_{l-1} - \mF_{l-1}\right \|_\mathrm{F}
   +\left \| \widetilde{\mW}_l-\mW_l\right \|_2 \left \| \widetilde{\mF}_{l-1}\right \| _\mathrm{F} \right\} + \left \| \widetilde{\mF}_{l-1} - \mF_{l-1} \right \|_\mathrm{F}
\\& \leq 
    \left(\frac{\lipmax}{L} \sqrt{\kernelcnn}(\left \| \mW_l\right \| _2 + \tau)+1 \right) 
    \left \|\widetilde{\mF}_{l-1} - \mF_{l-1}\right \|_\mathrm{F}
    +\frac{\lipmax}{L} \sqrt{\kernelcnn} \tau \norm{\mF_{l-1}}_\mathrm{F}\,.
\end{split}
\label{eq:bound_for_perturbation_2_cnn}
\end{equation*}
Therefore, by applying the inequality recursively and~\cref{lemma:gaussl2,lemma:networkputput_cnn}, with probability at least $1-2(L-1)\exp(-m/2)-\delta$, we have:
\begin{equation*}
   \begin{split}
&
    \left \| \widetilde{\mF}_{L-1} - \mF_{L-1} \right \|_\mathrm{F}
\\&\leq 
    [\frac{\lipmax}{L} \sqrt{\kernelcnn}(3+\tau)+1] \left \|\widetilde{\mF}_{L-2} - \mF_{L-2}\right \|_\mathrm{F}+\frac{\lipmax}{L}\sqrt{\kernelcnn} \tau C_\mathrm{fmax} \quad \mbox{(By \cref{lemma:gaussl2,lemma:networkputput_cnn})}
\\&\leq 
    [\frac{\lipmax}{L} \sqrt{\kernelcnn}(3+\tau)+1]^{L-2} \norm{\widetilde{\mF}_1 - \mF_1
    }_\mathrm{F}
    + \sum_{i=0}^{L-3}  [\frac{\lipmax}{L}\sqrt{\kernelcnn}(3+\tau)+1]^i \frac{\lipmax}{L} \sqrt{\kernelcnn}  \tau C_\mathrm{fmax}
    \quad \mbox{(By recursion)}
\\&\leq 
    (6\frac{\lipmax}{L} \sqrt{\kernelcnn}+1)^{L-2} 
    \tau \sqrt{\kernelcnn}  \lipmax
    + \sum_{i=0}^{L-3}  (6 \frac{\lipl}{L} \sqrt{\kernelcnn}+1)^i \frac{\lipmax}{L}\sqrt{\kernelcnn}  \tau C_\mathrm{fmax}
\\& \le
    e^{6\lipmax \sqrt{\kernelcnn}} (\sqrt{\kernelcnn} \lipmax +C_\mathrm{fmax} ) \tau \,.
   \end{split}
\end{equation*}
\end{proof}

\begin{lemma}
\label{lemma:lemma_4.1_in_GuQuanquan_cnn}
For $\mW, \widetilde{\mW} \in {\mathcal{B}}(\mW\step{1},\tau )$ with $\tau \le 3$, when the width satisfies $m = \Omega{( p^2 N/\delta)}$ and $\rho \le 1$, 
with probability at least $1-2\exp(-{m}/{2}) - 2(L-2) \exp(-{\kernelcnn m}/{2}) - 2p \exp(-{m}/{2}) -\delta$, we have:
\begin{equation*}
\small
\begin{split}
&
    \left | 
    f(\mX,\widetilde{\mW}) 
    - 
    f(\mX,\mW) 
       - 
    f (\vx,\mW) 
    - \left \langle (1-\reg) \nabla_\mW  f(\vx,\mW)
    +\reg \nabla_\mW f(\per(\vx,\mW),\mW)
    ,
    ,
    \widetilde{\mW} -\mW  \right \rangle \right |
\\& \myquad[20] \le 
           18 \sqrt{p} e^{6 \lipmax \sqrt{\kernelcnn}}  (1+\reg)
        ( \sqrt{\kernelcnn} \lipmax +C_\mathrm{fmax} ) \tau\,,
\\& 
    \left | 
    f(\per(\vx,\mW),\widetilde{\mW}) 
    - 
    f (\per(\vx,\mW),\mW) 
    - \left \langle (1-\reg) \nabla_\mW  f(\vx,\mW)
    +\reg \nabla_\mW f(\per(\vx,\mW),\mW)
    ,
    \widetilde{\mW} -\mW  \right \rangle \right |
\\& \myquad[20] \le 
    18 \sqrt{p} e^{6 \lipmax \sqrt{\kernelcnn}}  (2-\reg)
        ( \sqrt{\kernelcnn} \lipmax +C_\mathrm{fmax} ) \tau\,.
\end{split}
\end{equation*}
\end{lemma}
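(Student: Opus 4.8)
The plan is to mirror the proof of \cref{lemma:lemma_4.1_in_GuQuanquan} essentially line by line, using the patch reformulation of \cref{sec:reformucnn} so that every convolution becomes a matrix product $\mW_l\phi_l(\mF_{l-1})$, and then tracking the two constants that are new relative to the FCNN case: the filter size $\kernelcnn$, which enters through the operator norm $\|\phi_l\|_{\mathrm{op}}\le\sqrt{\kernelcnn}$ (each entry of $\mF_{l-1}$ is copied into at most $\kernelcnn$ patch positions), and the pixel count $p$, which enters through the last layer $\mW_L\in\R^{m\times p}$. First I would expand the linearization error
\[
\left| f(\mX,\widetilde{\mW}) - f(\mX,\mW) - \big\langle (1-\reg)\nabla_\mW f(\mX,\mW) + \reg\,\nabla_\mW f(\per(\mX,\mW),\mW),\; \widetilde{\mW}-\mW\big\rangle\right|
\]
by peeling off the top inner product $\langle\mW_L,\mF_{L-1}\rangle$ exactly as in \cref{eq:proof_lemma_4.1_in_GuQuanquan_1}, so that it is controlled by four terms: (i) $\|\widetilde{\mW}_L\|_\mathrm{F}\,\|\widetilde{\mF}_{L-1}-\mF_{L-1}\|_\mathrm{F}$; (ii) $\reg\,\|\mF_{L-1}-\hat{\mF}_{L-1}\|_\mathrm{F}\,\|\widetilde{\mW}_L-\mW_L\|_\mathrm{F}$; (iii) the inner-layer gradient term $(1-\reg)\big|\sum_{l\le L-1}\langle\nabla_{\mW_l}f,\widetilde{\mW}_l-\mW_l\rangle\big|$; and (iv) the analogous term with $\hat f$.

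For term (i) I would use \cref{lemma:gaussl2_cnn} to get $\|\widetilde{\mW}_L\|_\mathrm{F}\le 3\sqrt{p}+\tau$ and \cref{lemma:bound_for_perturbation_cnn} (split around initialization) to get $\|\widetilde{\mF}_{L-1}-\mF_{L-1}\|_\mathrm{F}\le 2e^{6\lipmax\sqrt{\kernelcnn}}(\sqrt{\kernelcnn}\lipmax+C_\mathrm{fmax})\tau$. For term (ii) I would bound $\|\mF_{L-1}-\hat{\mF}_{L-1}\|_\mathrm{F}$ by splitting around $\mF_{L-1}\step{1}$ and $\hat{\mF}_{L-1}\step{1}$; the at-initialization difference $\|\mF_{L-1}\step{1}-\hat{\mF}_{L-1}\step{1}\|_\mathrm{F}$ stems from an input perturbation of size $\rho$, which at the first layer produces $\|\mF_1\step{1}(\hat{\mX})-\mF_1\step{1}(\mX)\|_\mathrm{F}\le 3\lipmax\sqrt{\kernelcnn}\rho$ (since $\|\phi_1(\hat{\mX})-\phi_1(\mX)\|_\mathrm{F}\le\sqrt{\kernelcnn}\rho$) and then propagates through the same residual recursion used inside \cref{lemma:bound_for_perturbation_cnn} — equivalently, through a CNN analogue of \cref{lemma:equal_perturbation} — so term (ii) is $O\big(\reg\,e^{6\lipmax\sqrt{\kernelcnn}}(\sqrt{\kernelcnn}\lipmax+C_\mathrm{fmax})(\tau+\rho)\tau\big)$, which is $O(\tau)$ because $\rho\le 1$. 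For terms (iii)--(iv) I would expand $\nabla_{\mW_l}f$ by backpropagation through the residual CNN: each intermediate layer now contributes a Jacobian factor $\tfrac{\lipmax\sqrt{\kernelcnn}}{L}\|\mW_r\|_2+1$ (the extra $\sqrt{\kernelcnn}$ from $\|\phi_r\|_{\mathrm{op}}$), the features are $\Theta(1)$ by \cref{lemma:networkputput_cnn}, the spectral/Frobenius norms are bounded by \cref{lemma:gaussl2_cnn} (including the $\|\mW_L\|_\mathrm{F}\le 3\sqrt{p}$ from the top), and $\big(\tfrac{6\lipmax\sqrt{\kernelcnn}}{L}+1\big)^{L}\le e^{6\lipmax\sqrt{\kernelcnn}}$; hence each term is $O\big(\sqrt{p}\,C_\mathrm{fmax}\,e^{6\lipmax\sqrt{\kernelcnn}}\sqrt{\kernelcnn}\lipmax\,\tau\big)$. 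Summing the four contributions and absorbing absolute constants into the prefactor $18$ yields the first inequality. The second inequality is obtained identically, with the outputs now evaluated at $\per(\mX,\mW)$; the only change in bookkeeping is which feature difference multiplies $\|\widetilde{\mW}_L-\mW_L\|_\mathrm{F}$, which converts the prefactor $(1+\reg)$ into $(2-\reg)$, exactly as in \cref{lemma:lemma_4.1_in_GuQuanquan}. The stated success probability is just the union of the failure events in \cref{lemma:gaussl2_cnn,lemma:networkputput_cnn,lemma:bound_for_perturbation_cnn}.

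The main obstacle I expect is purely the constant bookkeeping: ensuring that each of the $L-2$ intermediate layers contributes \emph{exactly one} factor $\sqrt{\kernelcnn}$, so the exponent is $e^{6\lipmax\sqrt{\kernelcnn}}$ rather than $e^{6\lipmax\kernelcnn}$ — this relies on $\phi_l$ entering linearly in $\mW_l\phi_l(\cdot)$ together with the tight bound $\|\phi_l\|_{\mathrm{op}}\le\sqrt{\kernelcnn}$ — and checking that the single $\sqrt{p}$ in the final bound arises only from the last layer's Frobenius norm and is not amplified by the recursion. A secondary point of care is the CNN translation of an input perturbation into an equivalent weight perturbation used in term (ii): since $\phi_1(\mX)$ is a matrix rather than a vector, one either argues directly at the feature level as above or invokes a pseudoinverse-based construction, and in either case one must verify, under \cref{assumption:distribution_1}, that the induced perturbation stays of size $O(\tau+\rho)$ so that the recursion constants do not blow up.
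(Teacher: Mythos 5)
Your proposal follows essentially the same route as the paper's proof: the identical four-term decomposition after peeling off $\langle\mW_L,\mF_{L-1}\rangle$, the same auxiliary lemmas (\cref{lemma:gaussl2_cnn,lemma:networkputput_cnn,lemma:bound_for_perturbation_cnn} plus the perturbation-transfer argument of \cref{lemma:equal_perturbation}), and the same constant bookkeeping with one $\sqrt{\kernelcnn}$ per intermediate layer and a single $\sqrt{p}$ from $\|\mW_L\|_\mathrm{F}\le 3\sqrt{p}$, yielding term (ii) of order $(2\tau+3\rho)\tau$ and the $(1+\reg)$ versus $(2-\reg)$ prefactors exactly as in the paper. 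The only (cosmetic) difference is that you explicitly flag the need for a CNN analogue of \cref{lemma:equal_perturbation} where the paper simply cites the FCNN version; your version is, if anything, slightly more careful on that point.
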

\begin{proof}

To simplify the notation, in the following proof, the variable with $\widetilde{\cdot}$ is related to $\widetilde{\mW}$, and without $\widetilde{\cdot}$ is related to $\mW$. Then, let us prove the first inequality.
\begin{equation}
\label{eq:proof_lemma_4.1_in_GuQuanquan_1_cnn}
\begin{split}
&    
    \left | 
    \widetilde{f}
    - 
    f
    - \left \langle 
  (1-\reg) \nabla_\mW  f
    +\reg \nabla_\mW 
    \hat{f}
    ,
    \widetilde{\mW} -\mW  \right \rangle \right |
\\& =
    \left|
    \left \langle \widetilde{\vw}_L, \widetilde{\vf}_{L-1}
    \right \rangle -     
    \left \langle \vw_L, \vf_{L-1}
    \right \rangle
    - \sum_{l=1}^{L} 
    \left \langle
     (1-\reg)   \nabla_{\mW_l} 
     f
     + \reg  \nabla_{\mW_l} 
     \hat{f}
     ,  \widetilde{\mW}_l -\mW_l 
    \right \rangle
    \right|
\\& =
     \left|
    \left \langle \widetilde{\vw}_L, \widetilde{\vf}_{L-1} - \vf_{L-1}
    \right \rangle +     
    \left \langle \vf_{L-1},\widetilde{\vw}_L -\vw_L
    \right \rangle
    - \sum_{l=1}^{L-1} 
    \left \langle
    (1-\reg) \nabla_{\mW_l}
    f
    +\reg  \nabla_{\mW_l} 
    \hat{f}
    ,  \widetilde{\mW}_l -\mW_l 
    \right \rangle \right.
\\&
   \myquad[20]
    \left.
    -
    \left \langle (1-\reg)\vf_{L-1} + \reg \hat{\vf}_{L-1},\widetilde{\vw}_L -\vw_L
    \right \rangle
    \right|
\\& \leq    
    \norm{\widetilde{\vw}_L}_2
    \norm{\widetilde{\vf}_{L-1} - \vf_{L-1}}_2
    +     
   \reg \norm{\vf_{L-1} - \hat{\vf}_{L-1}}_2 
 \norm{\widetilde{\vw}_L -\vw_L}_2
\\&
   \myquad[10]
       + \left| \sum_{l=1}^{L-1} 
    \left \langle
     (1-\reg)\nabla_{\mW_l}
     f
     ,  \widetilde{\mW}_l -\mW_l 
    \right \rangle \right| 
    +
    \left|
     \sum_{l=1}^{L-1} 
     \left \langle
    \reg  \nabla_{\mW_l} 
    \hat{f}
    ,  \widetilde{\mW}_l -\mW_l 
     \right \rangle
    \right|\,.
\end{split}
\end{equation}
The first term can be bounded by \cref{lemma:gaussl2,lemma:bound_for_perturbation_cnn} as follows:
\begin{equation*}
    \begin{split}   
&
    \norm{\widetilde{\vw}_L}_2
        \norm{\widetilde{\vf}_{L-1} - \vf_{L-1}}_2
\\& \le 
    \left( \norm{\vw_L\step{1}}_2  + \norm{\widetilde{\vw}_L - \vw_L\step{1}}_2
   \right)
    \left(\norm{\widetilde{\vf}_{L-1} - \vf\step{1}_{L-1}}_2 + \norm{\vf_{L-1} - \vf\step{1}_{L-1}}_2 \right)
\\& \le 
    (3\sqrt{p}+\tau)  2 
    e^{6 \lipmax \sqrt{\kernelcnn}} ( \sqrt{\kernelcnn}\lipmax +C_\mathrm{fmax} ) \tau \,.
    \end{split}
\end{equation*}
The second term can be bounded by \cref{lemma:networkputput,lemma:equal_perturbation,lemma:bound_for_perturbation_cnn} with $\rho \le 1$ as follows:
\begin{equation}
\label{equ:gradientboundlastlayer_cnn}
\begin{split}
&
    \reg \norm{\hat{\vf}_{L-1}
    -
    \vf_{L-1}
    }_2 
 \norm{\widetilde{\vw}_L -\vw_L}_2
\\ & \le
    \reg 
    \left(\norm{ \vf_{L-1}\step{1}-
    \vf_{L-1}
    }_2
    +
    \norm{\hat{\vf}_{L-1} - \hat{\vf}_{L-1}\step{1}}_2
    +
    \norm{{\vf}\step{1}_{L-1} - \hat{\vf}_{L-1}\step{1}}_2
    \right)
    \left(
     \norm{\vw_L -\vw_L\step{1}}_2
     +
      \norm{\widetilde{\vw}_L -\vw_L\step{1}}_2
    \right)
\\ & \le
    \reg 
    e^{6  \lipmax \sqrt{\kernelcnn}} ( \sqrt{\kernelcnn}\lipmax +C_\mathrm{fmax} ) 
    (2 \tau + 3 \rho)
    2\tau
    \,.
\end{split}
\end{equation}

The third term can be bounded by \cref{lemma:gaussl2,lemma:networkputput_cnn} as follows:
\begin{equation}
\label{equ:gradientboundlayers_cnn}
    \begin{split}
&  (1-\reg)  \left| \sum_{l=1}^{L-1} 
    \left \langle
     \nabla_{\mW_l}f
     ,  \widetilde{\mW}_l -\mW_l 
    \right \rangle \right|
\\ &  \le
    (1-\reg) \sum_{l=1}^{L-1} 
   \lipmax  \norm{\mW_L}_\mathrm{F}
   \norm{\phi_l(\mF_{l-1})}_F
    \prod_{r=l+1}^{L-1} ( \frac{\lipmax}{L} \sqrt{\kernelcnn} \norm{\mW_r}_2  +1) \norm{ \widetilde{\mW}_l -\mW_l}_\mathrm{F}
\\ & \le
     (1-\reg)   \sum_{l=1}^{L-1} 
    \lipmax  (3\sqrt{p}+\tau) \kernelcnn C_\mathrm{fmax}
    (\frac{\lipmax}{L} \sqrt{\kernelcnn}(3+\tau)+1)^{L-l-1} \tau
 \\&  \le (1-\reg)
 \sqrt{p} C_\mathrm{fmax} e^{6 \sqrt{\kernelcnn} \lipmax }  \tau.
    \end{split}
\end{equation}

Similarly, the fourth term can be upper bounded by $    \reg   \sqrt{p} C_\mathrm{fmax} e^{6 \sqrt{\kernelcnn} \lipmax }  \tau$.
Plugging back \cref{eq:proof_lemma_4.1_in_GuQuanquan_1_cnn} yields:
\begin{equation*}
\small
    \begin{split}
&
    \left | 
    f(\vx,\widetilde{\mW}) 
    - 
    f(\vx,\mW) 
    - \left \langle 
     (1-\reg) \nabla_\mW  f
    +\reg \nabla_\mW 
    \hat{f}
    ,
    \widetilde{\mW} -\mW  \right \rangle \right |
\\ & \le 
    (3\sqrt{p}+\tau)  2 
    e^{6 \sqrt{\kernelcnn} \lipmax } ( \sqrt{\kernelcnn} \lipmax +C_\mathrm{fmax} ) \tau +
    \reg 
    e^{6 \sqrt{\kernelcnn}  \lipmax } (\sqrt{\kernelcnn} \lipmax +C_\mathrm{fmax} ) (2\tau +3 \rho)
    2 \tau\
    +
    \sqrt{p}
    C_\mathrm{fmax} e^{6 \sqrt{\kernelcnn}  \lipmax }  \tau
\\ & \le 
        18 \sqrt{p} e^{6 \sqrt{\kernelcnn}\lipmax}  (1+\reg)
        ( \sqrt{\kernelcnn}\lipmax +C_\mathrm{fmax} ) \tau
        \,,
    \end{split}
\end{equation*}
which completes the proof of the first inequality in the lemma. The second inequality can be proved by the same method as in \cref{lemma:lemma_4.1_in_GuQuanquan}.
\end{proof}
\begin{lemma}
\label{lemma:lemma_4.2_in_GuQuanquan_cnn}
There exists an absolute constant 
$C_1$
such that, for any $\epsilon > 0$ and any $\mW, \widetilde{\mW} \in \mathcal{B} (\mW\step{1},\tau )$  with $\tau \le C_1 \epsilon (\sqrt{\kernelcnn} \lipmax +C_\mathrm{fmax} )^{-1} p^{-1/2} e^{-6 \lipmax \sqrt{\kernelcnn}} $, with probability at least $1-2\exp(-{m}/{2}) - 2(L-2) \exp(-{\kernelcnn m}/{2}) - 2p \exp(-{m}/{2}) -\delta$, one has: 
\begin{equation*}
\begin{split}
&\elbig(\vx,\widetilde{\mW})
\geq\elbig(\vx,\mW)
+
\left \langle
 (1-\reg) \nabla_\mW 
\elbig(\vx,\mW)
+\reg \nabla_\mW 
\elbig(\per(\vx,\mW),\mW)
, \widetilde{\mW}-\mW \right \rangle - \epsilon
\,,
\\&
    \elbig(\per(\vx,\mW),\widetilde{\mW})
    \geq\elbig(\per(\vx,\mW),\mW)
    +
    \left \langle
     (1-\reg) \nabla_\mW 
    \elbig(\vx,\mW)
    +\reg \nabla_\mW 
    \elbig(\per(\vx,\mW),\mW)
    , \widetilde{\mW}-\mW \right \rangle - \epsilon
    \,.
\end{split}
\end{equation*}
\end{lemma}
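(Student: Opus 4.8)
The plan is to mirror the argument used for \cref{lemma:lemma_4.2_in_GuQuanquan} in the FCNN case, substituting the convolutional auxiliary estimates established above. First I would treat the clean inequality. Writing $\elbig(\vx,\mW) = \ell(y\cdot f(\vx,\mW))$ with $\ell(z)=\log(1+\exp(-z))$, convexity of $\ell$ gives the first-order lower bound $\elbig(\vx,\widetilde{\mW}) - \elbig(\vx,\mW) \ge \ell'[y f(\vx,\mW)]\cdot y\cdot(f(\vx,\widetilde{\mW})-f(\vx,\mW))$. Then I would add and subtract the combined linearized increment and use the chain rule so that the right-hand side becomes $\langle (1-\reg)\nabla_\mW\elbig(\vx,\mW)+\reg\nabla_\mW\elbig(\per(\vx,\mW),\mW),\widetilde{\mW}-\mW\rangle - \kappa_1$, where $\kappa_1$ gathers (i) the first-order Taylor remainder of $f$ itself at $\mW$, and (ii) a mismatch term of the shape $(\ell'[yf]-\ell'[y\hat f])\cdot y\cdot\reg\langle\nabla_\mW f(\per(\vx,\mW),\mW),\widetilde{\mW}-\mW\rangle$ arising from replacing the loss derivative at the perturbed point by the one at the clean point.

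To bound $\kappa_1$ I would invoke the first inequality of \cref{lemma:lemma_4.1_in_GuQuanquan_cnn} for part (i), and for part (ii) use $|\ell'[yf]-\ell'[y\hat f]|\le 2$ together with the CNN gradient-norm estimates that already appear inside the proof of \cref{lemma:lemma_4.1_in_GuQuanquan_cnn}, i.e. \cref{equ:gradientboundlayers_cnn,equ:gradientboundlastlayer_cnn}, recalling $\|\mW_L\step{1}\|_\mathrm{F}\le 3\sqrt p$ from \cref{lemma:gaussl2_cnn}. This yields $\kappa_1 \lesssim \sqrt{p}\,e^{6\sqrt{\kernelcnn}\lipmax}(\sqrt{\kernelcnn}\lipmax+C_\mathrm{fmax})\tau$, so the hypothesis $\tau\le C_1\epsilon(\sqrt{\kernelcnn}\lipmax+C_\mathrm{fmax})^{-1}p^{-1/2}e^{-6\lipmax\sqrt{\kernelcnn}}$ with a small enough absolute constant $C_1$ gives $\kappa_1\le\epsilon$, which is exactly the first claimed inequality.

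For the robust inequality I would run the identical computation starting from convexity of $\ell$ applied to the map $\mW'\mapsto f(\per(\vx,\mW),\mW')$, namely $\elbig(\per(\vx,\mW),\widetilde{\mW})-\elbig(\per(\vx,\mW),\mW)\ge\ell'[y\hat f]\cdot y\cdot(f(\per(\vx,\mW),\widetilde{\mW})-f(\per(\vx,\mW),\mW))$, introduce the same combined gradient, and control the remainder $\kappa_2$ via the second inequality of \cref{lemma:lemma_4.1_in_GuQuanquan_cnn} plus the same gradient-norm terms; the prefactor $(2-\reg)$ there is still at most $2$, so the same condition on $\tau$ gives $\kappa_2\le\epsilon$. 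The probability statement and the width requirement $m=\Omega(p^2N/\delta)$ are inherited verbatim from \cref{lemma:lemma_4.1_in_GuQuanquan_cnn,lemma:gaussl2_cnn,lemma:networkputput_cnn}.

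The only part that demands care rather than being truly routine is the constant bookkeeping: one must check that each FCNN factor $e^{6\lipmax}$, the spectral bound $3$, and $\lipmax+C_\mathrm{fmax}$ is replaced by its convolutional analogue $e^{6\sqrt{\kernelcnn}\lipmax}$, by $3\sqrt p$ for the last layer versus $3$ for the hidden layers, and by $\sqrt{\kernelcnn}\lipmax+C_\mathrm{fmax}$, and that the patch operators $\phi_l$ inject a factor $\sqrt{\kernelcnn}$ per layer but, thanks to the $1/L$ normalization, this does not compound in the final bound — exactly as was already verified in \cref{lemma:bound_for_perturbation_cnn,lemma:lemma_4.1_in_GuQuanquan_cnn}. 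So I do not anticipate any new conceptual obstacle beyond this accounting.
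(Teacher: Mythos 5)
Your proposal follows essentially the same route as the paper's proof: convexity of the scalar loss $\ell$ to get the first-order lower bound, insertion of the combined gradient $(1-\reg)\nabla_\mW\elbig(\vx,\mW)+\reg\nabla_\mW\elbig(\per(\vx,\mW),\mW)$, and a split of the remainder into the Taylor error of $f$ (controlled by \cref{lemma:lemma_4.1_in_GuQuanquan_cnn}) plus the loss-derivative mismatch term (controlled by $|\ell'|\le 1$ and the CNN gradient-norm bounds \cref{equ:gradientboundlayers_cnn,equ:gradientboundlastlayer_cnn}), with the stated $\tau$ condition absorbing the $\sqrt{p}$, $\sqrt{\kernelcnn}$, and $(1+\reg)$ versus $(2-\reg)$ factors into the absolute constant. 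This matches the paper's argument step for step, so no further comment is needed.
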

\begin{proof}
Firstly, we will prove the first inequality. Recall that the cross-entropy loss is written as $\ell (z) = \log(1+\exp(-z))$ and we denote by $\elbig(\vx,\mW) := \ell [y \cdot f(\vx,\mW)] $. Then one has:
\begin{equation*}
\begin{split}
&    \elbig(\vx,\widetilde{\mW}) - \elbig(\vx,\mW) \\&=  \ell[y  
\widetilde{f}
] - \ell[ y 
f
]
\\ &\geq
    \ell'[ y 
    f
    ] \cdot y\cdot (
    \widetilde{f} - f
    ) \quad \mbox{(By convexity of $\ell (z)$)}.
\\& \geq
   \ell'[ y f
   ] \cdot y \cdot
  \left \langle
   (1-\reg) \nabla f
  , \widetilde{\mW} - \mW \right \rangle +
    \ell'[ y  
    \hat{f}
    ] \cdot y \cdot
  \left \langle   \reg \nabla 
      \hat{f}
  , \widetilde{\mW} - \mW \right \rangle - \kappa_1
\\ & = 
    \left \langle 
    (1-\reg)
 \nabla_\mW 
\elbig(\vx,\mW)
+\reg \nabla_\mW 
\elbig(\per(\vx,\mW),\mW)
    , \widetilde{\mW} - \mW \right \rangle  - \kappa_1 \quad \mbox{(By chain rule)}\,,
\end{split}
\end{equation*}
 where we define:
 \begin{equation*}
     \begin{split}
 & \kappa_1 := \left | \ell'[ y f
 ] \cdot y\cdot (
 \widetilde{f}
 - f 
 )
   -  \ell'[ y f
   ] \cdot y \cdot
  \left \langle 
  (1-\reg)
  \nabla f
  , \widetilde{\mW} - \mW \right \rangle -
    \ell'[ y  
    \hat{f}
    ] \cdot y \cdot
  \left \langle \reg \nabla 
  \hat{f}
  , \widetilde{\mW} - \mW \right \rangle
   \right | \,. 
     \end{split}
 \end{equation*}
Thus, it suffices to show that $\kappa_1$ can be upper bounded by $\epsilon$ :
\begin{equation}
\label{equ:boundk1_cnn}
    \begin{split}
        \kappa_1 
        &\le
        \left|
         \ell'[ y f
         ] \cdot y
         \left\{
         \widetilde{f}
    - 
    f 
    - \left \langle (1-\reg) \nabla_\mW  f
    +\reg \nabla_\mW 
    \hat{f}
    ,\mW)
    ,
    \widetilde{\mW} -\mW  \right \rangle
         \right\}
         \right|
+
    \left| 
    \left\{\ell'[ y f
    ] \cdot y -\ell'[ y  
    \hat{f}
    ] 
    \cdot y
    \right\}
    \reg  
    \left \langle
     \nabla_\mW 
     \hat{f}
    ,
    \widetilde{\mW} -\mW  \right \rangle
    \right|
\\ & \le 
18 \sqrt{p} e^{6 \sqrt{\kernelcnn}\lipmax}  (1+\reg)
        ( \sqrt{\kernelcnn}\lipmax +C_\mathrm{fmax} ) \tau
        +2 \reg 
        \left|
            \left \langle
     \nabla_\mW 
     \hat{f}
    ,
    \widetilde{\mW} -\mW  \right \rangle
    \right|
\\ & \le
 18 \sqrt{p} e^{6 \sqrt{\kernelcnn} \lipmax}  (1+\reg)
        ( \sqrt{\kernelcnn} \lipmax +C_\mathrm{fmax} ) \tau+ 2\reg
     \left| \sum_{l=1}^{L-1} 
    \left \langle
     \nabla_{\mW_l}f(\vx,\mW)
     ,  \widetilde{\mW}_l -\mW_l 
    \right \rangle \right|
    + 
    2\reg
    \left|
     \left \langle
     \vf_{L-1}, \widetilde{\vw_L} - \vw_L
     \right \rangle \right|
\\ & \le
  18 \sqrt{p} e^{6 \sqrt{\kernelcnn} \lipmax}  (1+\reg)
        ( \sqrt{\kernelcnn} \lipmax +C_\mathrm{fmax} ) \tau
    + 2\reg \sqrt{p}
     C_\mathrm{fmax} e^{6 \sqrt{\kernelcnn} \lipmax }  \tau
    + 4\reg C_\mathrm{fmax} \tau 
\\ & \le  24 \sqrt{p} (1+\reg)  e^{6 \kernelcnn \lipmax }
    ( \kernelcnn \lipmax +C_\mathrm{fmax} ) \tau
\\ & \le \epsilon,
    \end{split}
\end{equation}
where the first and the third inequality is by triangle inequality, the second inequality is by ~\cref{lemma:lemma_4.1_in_GuQuanquan_cnn} and the fact that $| \ell'[ y f(\vx,\mW)] \cdot y | \leq 1$,
, the fourth inequality follows the same proof as in \cref{equ:gradientboundlayers_cnn,equ:gradientboundlastlayer_cnn}, and the last inequality is by the condition that if $\tau \le C_2 \epsilon (1+\reg)^{-1}  p^{-1/2} ( \sqrt{\kernelcnn} \lipmax +C_\mathrm{fmax} )^{-1} e^{-6 \lipmax \sqrt{\kernelcnn}} $ for some absolute constant $C_2$. Following the same method in \cref{lemma:lemma_4.2_in_GuQuanquan}, we can prove the second inequality if $\tau \le C_3 \epsilon (3-\reg)^{-1}  p^{-1/2} ( \sqrt{\kernelcnn} \lipmax +C_\mathrm{fmax} )^{-1} e^{-6 \sqrt{\kernelcnn} \lipmax } $ for some absolute constant $C_3$. Lastly, setting $C_1 = \max \{C_2,C_3\}$ and noting that $(3-\beta)^{-1} < (1+\beta)^{-1}$ completes the proof.
\end{proof}

\begin{lemma}
\label{lemma:lemma_4.3_in_GuQuanquan_cnn}
    For any $\epsilon, \delta, R > 0$, there exists:
    $
    m^\star= \mathcal{O}(\mathrm{poly}(R,L,k,\lipmax,p,\reg))
    \cdot
    \epsilon^{-2} e^{12\lipmax\sqrt{\kernelcnn}}\log(1 / \delta)
    $
    such that if $m \geq m^\star
    $, then
    for any $\mW^* \in \mathcal{B} (\mW\step{1}, R m^{-1/2})$, under the following choice of step-size $\gamma= 
    \nu \epsilon /[\lipmax pk C_{\mathrm{fmax}}^2 m L
    e^{12\lipmax \sqrt{\kernelcnn}} 
    ]$ and iterations $N = L^2 R^2 \lipmax pk C_{\mathrm{fmax}}^2
    e^{12\lipmax \sqrt{\kernelcnn}} /(2\varepsilon^2\nu)$ for some small enough absolute constant $\nu$, the cumulative loss can be upper bounded with probability at least $1-\delta$ by:
    \begin{equation*}
    \begin{split}
&
    \frac{1}{N}  \sum_{i=1}^N \elbig(\vx_i,\mW\step{i})
    \leq 
         \frac{1}{N}   \sum_{i=1}^N \elbig(\vx_i,\mW^\star)
    + 3 \epsilon \,,
\\&
    \frac{1}{N}  \sum_{i=1}^N \elbig(\per(\vx_i,\mW\step{i}),\mW\step{i})
    \leq 
         \frac{1}{N}   \sum_{i=1}^N \elbig(\per(\vx_i,\mW\step{i}),\mW^\star)
    + 3 \epsilon
    \,.        
    \end{split}
    \end{equation*}
\end{lemma}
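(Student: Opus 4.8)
\textbf{Proof proposal for \cref{lemma:lemma_4.3_in_GuQuanquan_cnn}.}
The plan is to mirror the proof of \cref{lemma:lemma_4.3_in_GuQuanquan} exactly, simply tracking the CNN-specific factors $p$ and $\kernelcnn$ through the argument and invoking the CNN versions of the auxiliary lemmas (\cref{lemma:gaussl2_cnn,lemma:networkputput_cnn,lemma:bound_for_perturbation_cnn,lemma:lemma_4.1_in_GuQuanquan_cnn,lemma:lemma_4.2_in_GuQuanquan_cnn}) wherever the FCNN proof used their FCNN counterparts. First I would fix $\tau = C_1 \epsilon (1+\reg)^{-1} p^{-1/2} ( \sqrt{\kernelcnn} \lipmax +C_\mathrm{fmax} )^{-1} e^{-6 \lipmax \sqrt{\kernelcnn}}$ with $C_1$ the absolute constant from \cref{lemma:lemma_4.2_in_GuQuanquan_cnn}, so that the hypotheses on $\tau$ in \cref{lemma:lemma_4.1_in_GuQuanquan_cnn,lemma:lemma_4.2_in_GuQuanquan_cnn} are satisfied. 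Requiring $R m^{-1/2} \le \tau$ yields the first lower bound on $m$, which is exactly the claimed $m^\star$ up to the $\mathrm{poly}(R,L,k,\lipmax,p,\reg)\cdot\epsilon^{-2} e^{12\lipmax\sqrt{\kernelcnn}}$ form, guaranteeing $\mW^\star \in \mathcal{B}(\mW\step{1},\tau)$.

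Next I would show by induction that all iterates $\mW\step{1},\dots,\mW\step{N}$ stay inside $\mathcal{B}(\mW\step{1},\tau)$. The base case is trivial; for the inductive step I bound $\|\mW_l\step{i+1}-\mW_l\step{1}\|_\mathrm{F} \le \sum_{j=1}^i \gamma \|(1-\reg)\nabla_{\mW_l}\elbig(\vx_j,\mW\step{j}) + \reg\nabla_{\mW_l}\elbig(\per(\vx_j,\mW\step{j}),\mW\step{j})\|_\mathrm{F}$, and then bound each gradient norm via the chain-rule expansion through the layers, using \cref{lemma:gaussl2_cnn} for $\|\mW_r\|_2 \le 3$ and $\|\mW_L\|_\mathrm{F}\le 3\sqrt{p}$, \cref{lemma:networkputput_cnn} for $\|\phi_l(\mF_{l-1})\|_\mathrm{F} \le \sqrt{\kernelcnn} C_\mathrm{fmax}$, and the product $\prod_{r=l+1}^{L-1}(\frac{\lipmax}{L}\sqrt{\kernelcnn}(3+\tau)+1) \le e^{6\lipmax\sqrt{\kernelcnn}}$. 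This gives $\|\mW_l\step{i+1}-\mW_l\step{1}\|_\mathrm{F} \lesssim \gamma N \lipmax \sqrt{p\kernelcnn}\, C_\mathrm{fmax}^2\, \sqrt{\kernelcnn}\, e^{6\lipmax\sqrt{\kernelcnn}}$; plugging in the stated choices of $\gamma$ and $N$ shows this is $\le \tau$ provided $m$ exceeds a second polynomial bound, which is again absorbed into $m^\star$.

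Finally, with all iterates in the neighborhood I would apply \cref{lemma:lemma_4.2_in_GuQuanquan_cnn} to get, for each $i$,
\begin{equation*}
\elbig(\vx_i,\mW\step{i}) - \elbig(\vx_i,\mW^\star) \le \left\langle (1-\reg)\nabla_\mW \elbig(\vx_i,\mW\step{i}) + \reg\nabla_\mW \elbig(\per(\vx_i,\mW\step{i}),\mW\step{i}),\ \mW\step{i}-\mW^\star\right\rangle + \epsilon\,,
\end{equation*}
then rewrite the inner product as $\frac{1}{\gamma}\langle \mW\step{i}-\mW\step{i+1}, \mW\step{i}-\mW^\star\rangle$ and use the identity $\langle a-b,a-c\rangle = \tfrac12(\|a-b\|^2 + \|a-c\|^2 - \|b-c\|^2)$ to telescope over $i=1,\dots,N$. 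The gradient-squared term contributes $\frac{\gamma}{2}\cdot O(\lipmax^2 p \kernelcnn\, C_\mathrm{fmax}^2\, e^{12\lipmax\sqrt{\kernelcnn}})$ per step and the initial distance term contributes $\frac{\|\mW\step{1}-\mW^\star\|_\mathrm{F}^2}{2\gamma N} \le \frac{L R^2}{2\gamma m N}$; both are $\le \epsilon$ under the stated $\gamma, N$ choices, giving the $3\epsilon$ bound. The robust inequality follows verbatim using the robust half of \cref{lemma:lemma_4.2_in_GuQuanquan_cnn}. The only mild obstacle is bookkeeping the exact powers of $\kernelcnn$ and $p$ that appear in $\gamma$, $N$, and $m^\star$ so that the three $\epsilon$-contributions balance, but since these factors enter multiplicatively and identically to the FCNN proof (with $3 \mapsto 3\sqrt{p}$ in the last layer and an extra $\sqrt{\kernelcnn}$ per layer), no new ideas are needed beyond those already used for \cref{lemma:lemma_4.3_in_GuQuanquan}.
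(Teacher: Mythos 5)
Your proposal is correct and follows essentially the same route as the paper's proof: the same choice of $\tau$, the same induction showing the iterates stay in $\mathcal{B}(\mW\step{1},\tau)$ via the layerwise gradient-norm bound with the CNN factors $\sqrt{p}$ and $\sqrt{\kernelcnn}$, and the same online-convex-optimization telescoping using \cref{lemma:lemma_4.2_in_GuQuanquan_cnn} and the polarization identity. The only discrepancy is a harmless bookkeeping one (your intermediate iterate-drift bound carries an extra $C_\mathrm{fmax}\sqrt{\kernelcnn}$ factor relative to the paper's), which is absorbed into $m^\star$ and does not affect the argument.
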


\begin{proof}
We set $\tau = C_1 (1+\beta)^{-1} \epsilon (\sqrt{\kernelcnn} \lipmax +C_\mathrm{fmax} )^{-1} p^{-1/2} e^{-6 \lipmax \sqrt{\kernelcnn}} $ where $C_1$ is a small enough absolute constant so that the requirements on $\tau$ in \cref{lemma:lemma_4.1_in_GuQuanquan_cnn,lemma:lemma_4.2_in_GuQuanquan_cnn} can be satisfied.

We set $\tau = C_1 \epsilon (1+\beta)^{-1} ( \lipmax +C_\mathrm{fmax} )^{-1} e^{-6 \lipmax } $ where $C_1$ is a small enough absolute constant so that the requirements on $\tau$ in \cref{lemma:lemma_4.1_in_GuQuanquan,lemma:lemma_4.2_in_GuQuanquan} can be satisfied. Let $Rm^{-1/2} \le \tau$, then we obtain the condition for $\mW^\star \in \mathcal{B} (\mW\step{1}, \tau)$, i.e., $m \ge R^2 
C_1^{-2} \epsilon^{-2} (1+\beta)^2 p ( \sqrt{\kernelcnn} \lipmax +C_\mathrm{fmax} )^{2} e^{12 \lipmax \sqrt{\kernelcnn} } 
.$
We now show that $\mW\step{1},\ldots,\mW\step{N} $ are inside $ \mathcal{B} (\mW\step{1}, \tau)$ as well. The proof follows by induction. Clearly, we have $\mW\step{1} \in \mathcal{B} (\mW\step{1}, \tau)$. Suppose that $ \mW\step{1},\ldots, \mW\step{i} \in \mathcal{B} (\mW\step{1}, \tau) $, then with probability at least $1 - \delta'$, we have:
\begin{equation*}
\begin{split}
&
    \big\| \mW_l\step{i+1} - \mW_l\step{1} \big\|_\mathrm{F}
    \leq \sum_{j  = 1}^i\big\| \mW_l\step{j+1} - \mW_l\step{j} \big\|_\mathrm{F}
\\&=
    \sum_{j  = 1}^i
    \lr
    \norm
    {
   (1-\reg) \nabla_{\mW_l} 
    \elbig(\vx_j,\mW\step{j})
    +
   \reg \nabla_{\mW_l} 
    \elbig(\per(\vx_j,\mW\step{j}),\mW\step{j})
    }_\mathrm{F}
\\& \leq
     \lr (1-\reg) N \norm{    \nabla_{\mW_l} 
    \elbig(\vx_j,\mW\step{j}) }_\mathrm{F}
    +
    \lr \reg N
        \norm
    {
    \nabla_{\mW_l} 
\elbig(\per(\vx_j,\mW\step{j}),\mW\step{j})
    }_\mathrm{F} 
\\& \leq 
    \lr  N
   \lipmax  \norm{\mW_L}_\mathrm{F}
   \norm{\phi_l(\mF_{l-1})}_F
    \prod_{r=l+1}^{L-1} (\frac{\lipmax}{L}\sqrt{k}\norm{\mW_r}_2  +1) \norm{ \widetilde{\mW}_l -\mW_l}_\mathrm{F}
\\& \leq 
     \lr N
    \lipmax  (3\sqrt{p}+\tau) \sqrt{k} C_\mathrm{fmax}
    \left(\frac{\lipmax}{L}\sqrt{k}(3+\tau)+1 \right)^{L-l-1}
\\& \leq 
    6 \lr N
    \lipmax  \sqrt{p \kernelcnn} C_\mathrm{fmax}
    e^{6 \sqrt{\kernelcnn} \lipmax 
    }\,.
\end{split}
\end{equation*}
Plugging in our parameter choice for $\gamma$ and $N$ leads to:
\begin{align*}
    \big\| \mW_l\step{i+1} - \mW_l\step{1} \big\|_{\mathrm{F}} 
\leq
    3 
    \lipmax \sqrt{p \kernelcnn}   C_\mathrm{fmax}
    e^{6 \sqrt{\kernelcnn} \lipmax}  LR^2 \epsilon^{-1} m^{-1} 
\leq \tau\,,
\end{align*}
where the last inequality holds as long as $m \geq 
3 \lipmax \sqrt{p \kernelcnn} C_\mathrm{fmax} e^{12 \sqrt{\kernelcnn}\lipmax } LR^2 C_1^{-1} \epsilon^{-2}
$. Therefore by induction we see that $\mW\step{1},\ldots,\mW\step{N} \in \mathcal{B} (\mW\step{1}, \tau)$.
Now, we are ready to prove the first inequality in the lemma. We provide an upper bound for the cumulative loss as follows:
\begin{equation*}
\begin{split}
&\elbig(\vx_i,\mW\step{i})
    -\elbig(\vx_i,\mW^\star)
\\ &  \leq
  \left \langle  (1-\reg) \nabla_\mW 
    \elbig(\vx_i,\mW\step{i})+
    \reg \nabla_\mW 
\elbig(\per(\vx_i,\mW\step{i}),\mW\step{i}), \mW\step{i}-\mW^\star \right \rangle +\epsilon\,
        \quad \mbox{(By~\cref{lemma:lemma_4.2_in_GuQuanquan})}
\\ &=
    \frac{\left \langle  \mW\step{i} - \mW\step{i+1}, \mW\step{i} - \mW^\star  \right \rangle  }{ \gamma } + \epsilon
\\& =
    \frac{ \| \mW\step{i} - \mW\step{i+1} \|_{\mathrm{F}}^2 + \| \mW\step{i} - \mW_l^\star \|_{\mathrm{F}}^2 - \| \mW\step{i+1} - \mW^\star \|_{\mathrm{F}}^2 } {2\gamma} + \epsilon\,
\\& = 
      \frac{ \| \mW\step{i} -        \mW^\star \|_{\mathrm{F}}^2 - \|                 \mW\step{i+1} - \mW^\star  
        \|_{\mathrm{F}}^2 +   \lr^2 \norm
    { 
    (1-\reg)  \nabla_{\mW} 
    \elbig(\vx_i,\mW\step{i})
    +
    \reg \nabla_{\mW} 
    \elbig(\per(\vx_i, \mW\step{i}),\mW\step{i})
    }_\mathrm{F}^2 } {2\gamma} 
    + \epsilon 
\\& \leq
      \frac{ \| \mW\step{i} -        \mW^\star \|_{\mathrm{F}}^2 - \|                 \mW\step{i+1} - \mW^\star  
        \|_{\mathrm{F}}^2 } {2\gamma} 
        +
   \frac{
   6^2
   \lr^2  \lipmax^2  p \kernelcnn C_\mathrm{fmax}^2 e^{12
   \sqrt{\kernelcnn}
   \lipmax }}{2\lr}
        + \epsilon \,.
\end{split}
\end{equation*}

Telescoping over $i = 1,\ldots, N$, we obtain:
\begin{equation*}
\begin{split}
&\frac{1}{N}\sum_{i=1}^N \elbig(\vx_i,\mW\step{i})
\\&\leq
       \frac{1}{N}\sum_{i=1}^N \elbig(\vx_i,\mW^\star)
     + 
     \frac{ \| \mW^{(1)} - \mW^\star \|_{\mathrm{F}}^2 } {2N\gamma} +    
   18 \lr  \lipmax^2 p \kernelcnn  C_\mathrm{fmax}^2 e^{12\sqrt{ \kernelcnn} \lipmax }
     + \epsilon
\\&\leq
    \frac{1}{N}\sum_{i=1}^N \elbig(\vx_i,\mW^\star)
     + \frac{ L R^{2} } {2\gamma mN} + 
     18\lr  \lipmax^2  p \kernelcnn C_\mathrm{fmax}^2 e^{12 \sqrt{ \kernelcnn}\lipmax }
     + \epsilon
\\&\leq
        \frac{1}{N}\sum_{i=1}^N \elbig(\vx_i,\mW^\star)
     + 3\epsilon \,,
\end{split}
\end{equation*}
where in the first inequality we simply remove the term $-\| \mW\step{N+1} - \mW^\star \|_{\mathrm{F}}^2/(2\gamma) $ to obtain an upper bound, the second inequality follows by the assumption that $\mW^\star\in \mathcal{B} (\mW\step{1},Rm^{-1/2})$, the third inequality is by the parameter choice of $\lr$ and $N$. One can follow the same procedure to prove the second inequality in the lemma that is based on ~\cref{lemma:lemma_4.2_in_GuQuanquan_cnn}.
\end{proof}

\section{Additional experiments}
\label{sec:appendix_experiment}

 \subsection{Correlation between FGSM-NTK and PGD-NTK scores and accuracy}
We present a comprehensive analysis of the correlation between NTK scores and accuracy in \cref{fig:ntkscore_all}. The following 10 NTKs are selected: clean NTK, robust NTK with $\rho \in \{3/255,8/255\}$  subjected to FGSM/PGD attack, the corresponding robust twice NTK  subjected to  FGSM/PGD attack. The result demonstrates that robust NTK has a higher correlation compared with standard NTK under adversarial training. Moreover, our analysis indicates a general trend wherein NTKs with FGSM attacks display a higher correlation with accuracy than those subjected to PGD attacks. These results suggest that employing NTKs with FGSM attacks is preferable for guiding robust architecture searches.
\begin{figure}[!h]
\centering\includegraphics[width=0.9\textwidth]{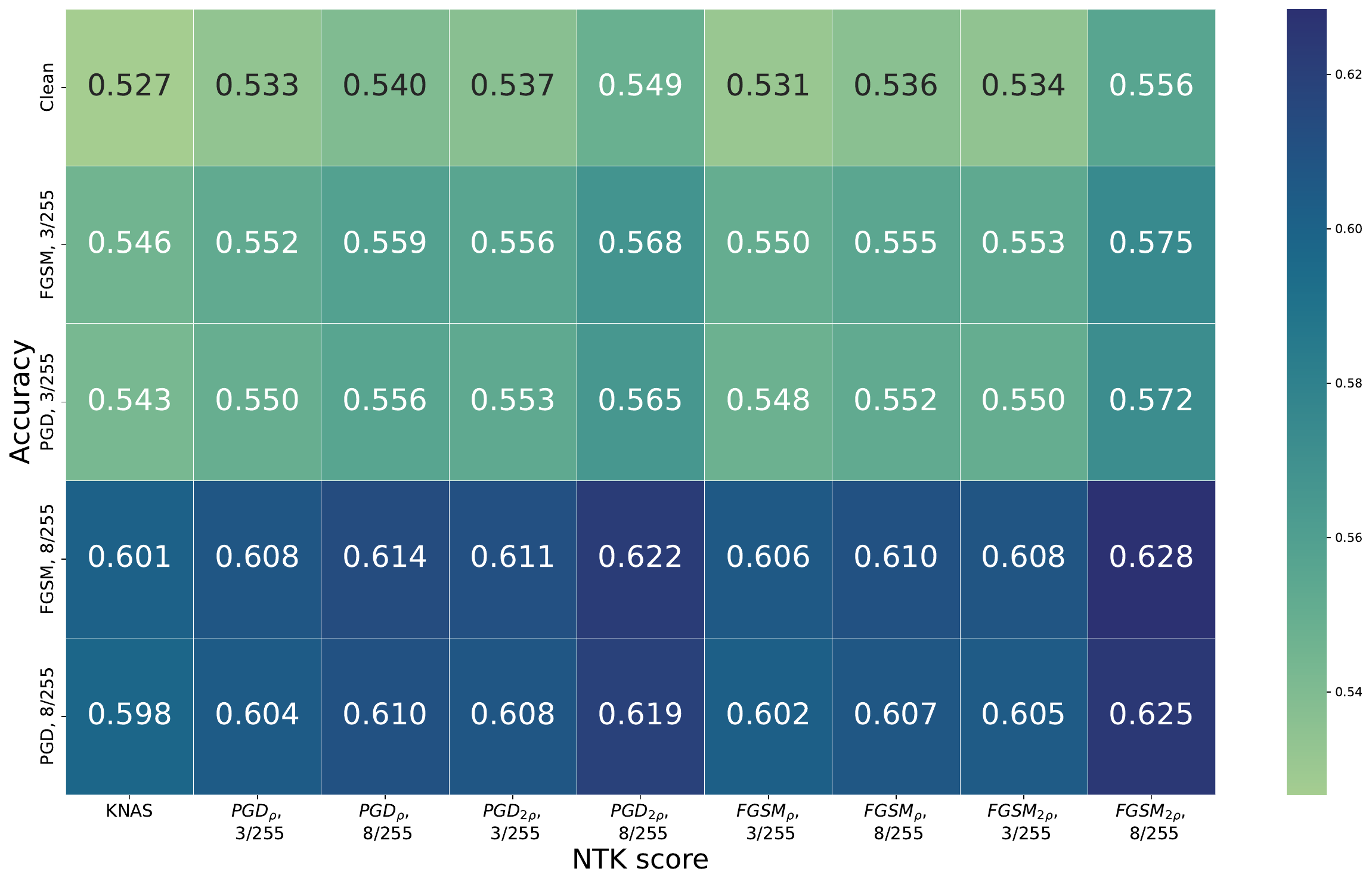}
\caption{Spearman coefficient between NTK-score, and various metrics. The label with subscript $\rho$ in the x-axis indicates the score w.r.t the robust NTK while the one with subscript $\rho$ indicates the score w.r.t the twice robust NTK.}
 \label{fig:ntkscore_all}
\end{figure}

\subsection{Evaluation results on NAS approaches}
In \cref{tab:main_label_appendix}, we provide additional evaluation results on NAS approaches, e.g., AdvRush~\citep{mok2021advrush}, KNAS~\citep{xu2021knas}, EigenNAS~\citep{zhu2022generalization}, NASI~\citep{shu2021nasi}, NASWOT~\citep{mellor2021neural}. For KNAS,
we use the data with PGD attack to construct the kernel.
We follow the same set-up in ~\citet{xu2021knas} to efficiently compute the NTK. Specifically, we randomly select 50 samples and estimate the minimum eigenvalue of NTK by its Frobenius norm. Similarly, in EigenNAS, we choose the trace of the NTK as proposed in~\citet{zhu2022generalization}.
Regarding NASI, NASWOT, and AdvRush, we use the official open-source implementation to produce the result. For DARTS, we use the first-order implementation, i.e., DARTS-V1, from the library of \citet{dong2020nasbench201}.


\subsection{Gradient obfuscation in sparse and dense architectures}
Originally pointed out by \citet{athalye2018obfuscated}, gradient obfuscation is a cause for the overestimated robustness of several adversarial defense mechanisms, e.g., parametric noise injection defense~\citep{he2019parametric} and ensemble defenses~\citep{gao2022mora}. Such gradient obfuscation might exist in the defense of several dense or sparse architectures~\citep{kundu2021dnr}. In this work, we employ vanilla adversarial training as a defense approach, which does not suffer from gradient obfuscation, as originally demonstrated in ~\citep{athalye2018obfuscated}.
Given the huge considered search space with 6466 architectures, we investigate whether the sparsity of the architecture design might have an impact on gradient obfuscation. Specifically, based on the search space in \cref{fig:nas201bench}, we group all architectures in terms of the number of operators ``Zeroize", which can represent the sparsity of the network. Next, we select the one with the highest robust accuracy in each group and check whether this optimal architecture satisfies the characteristics of non-gradient obfuscation in~\cref{tabel:checklist_gradient_obfuscation}.

\begin{table}[ht]
\centering
\caption{{Characteristics of non-gradient obfuscation~\cite{athalye2018obfuscated}}.}
\label{tabel:checklist_gradient_obfuscation}
\resizebox{0.9\textwidth}{!}{%
{
\begin{tabular}{@{}lcc@{}}
\toprule
a) One-step attack performs worse than iterative attacks. \\
b) Black-box attacks perform worse than white-box attacks.  \\
c) Unbounded attacks fail to obtain $100\%$ attack success rate.  \\
d) Increasing the perturbation radius $\rho$ does not increase the attack success rate. \\
e) Adversarial examples can not be found by random sampling if gradient-based attacks do not.\\ \bottomrule
\end{tabular}%
}}
\end{table}

\rebuttal{
\cref{tab:obf_result} presents the evaluation result of these architectures under FGSM, Square Attack, and PGD attack with varying radius. We can see that all of these architectures clearly satisfy a), c), d). Regarding e), all of the networks still satisfy it because gradient-based attacks can find adversarial examples. For b), only the sparse network with 4 ``Zeroize" does not satisfy. Therefore, we can see that these architectures can be considered without suffering gradient obfuscation, which is consistent with the original conclusion for adversarial training in \citep{athalye2018obfuscated}.
}
\begin{table*}[h]
    \caption{{Testing of gradient obfuscation for architectures with different sparsity.}}
    \label{tab:obf_result}
    \centering
     \footnotesize
\resizebox{1.1\textwidth}{!}{%
{
    \begin{tabular}{cc|c|c|c|ccccccc}
    \toprule
            &  
\makecell{Number of \\ ``Zeroize"} 
              &  
   \makecell{Clean}
              &  
 \makecell{FGSM\\(3/255)}
              &  
    \makecell{Square\\(3/255)}
              &   
 \makecell{PGD\\(3/255)}
               &  
 \makecell{PGD\\(8/255)}
              &  
 \makecell{PGD\\(16/255)}
               &  
 \makecell{PGD\\(32/255)}
               &  
 \makecell{PGD\\(64/255)}
               &  
 \makecell{PGD\\(128/255)}
                &  
 \makecell{PGD\\(255/255)}
       \\
\midrule
\multirow{5}{*}{}
&0  &0.7946&0.6975&0.6929&0.6924&0.4826&0.1774&0.0083&0.0000&0.0000&0.0000 \\ 
&1  & 0.7928&0.6933&0.6884&0.6874&0.4798&0.1744&0.0077&0.0000&0.0000&0.0000 \\ 
&2  & 0.7842&0.6832&0.6786&0.6779&0.4711&0.1739&0.0080&0.0000&0.0000&0.0000 \\ 
&3 & 0.7456&0.6515&0.6482&0.6472&0.4442&0.1599&0.0079&0.0000&0.0000&0.0000 \\ 
&4 & 0.5320&0.4560&0.4427&0.4528&0.3191&0.1393&0.0193&0.0010&0.0000&0.0000 \\ 
\bottomrule
    \end{tabular}
    }}
\end{table*}

\rebuttal{\subsection{Robustness towards distribution shift}
To make the benchmark more comprehensive in a realistic setting, we evaluate each architecture in the search space under distribution shift.
We choose CIFAR-10-C~\citep{hendrycks2018benchmarking}, which includes 15 visual corruptions with 5 different severity levels, resulting in 75 new metrics. In \cref{fig:corruption}, we show the boxplots of the accuracy. All architectures are robust towards corruption with lower severity levels. When increasing the severity levels to five, the models are no longer robust to fog and contrast architectures. Similar to robust accuracy under FGSM and PGD attacks, 
we can see a non-negligible gap between the performance of different architectures, which motivates robust architecture design. Moreover, in \cref{tab:corrupt}, we plot the Spearman coefficient between various accuracies under corruptions with severity level 5 on CIFAR-10-C and adversarial robust accuracy on CIFAR-10. Interestingly, we can see that the adversarial robust accuracy has a relatively high correlation with all corruptions except for contrast corruptions. As a result, performing NAS on the adversarially trained architectures in the benchmark can obtain a robust guarantee of distribution shift.
\looseness-1
}
\begin{figure}[!t]
    \centering
\includegraphics[width=0.85\textwidth]{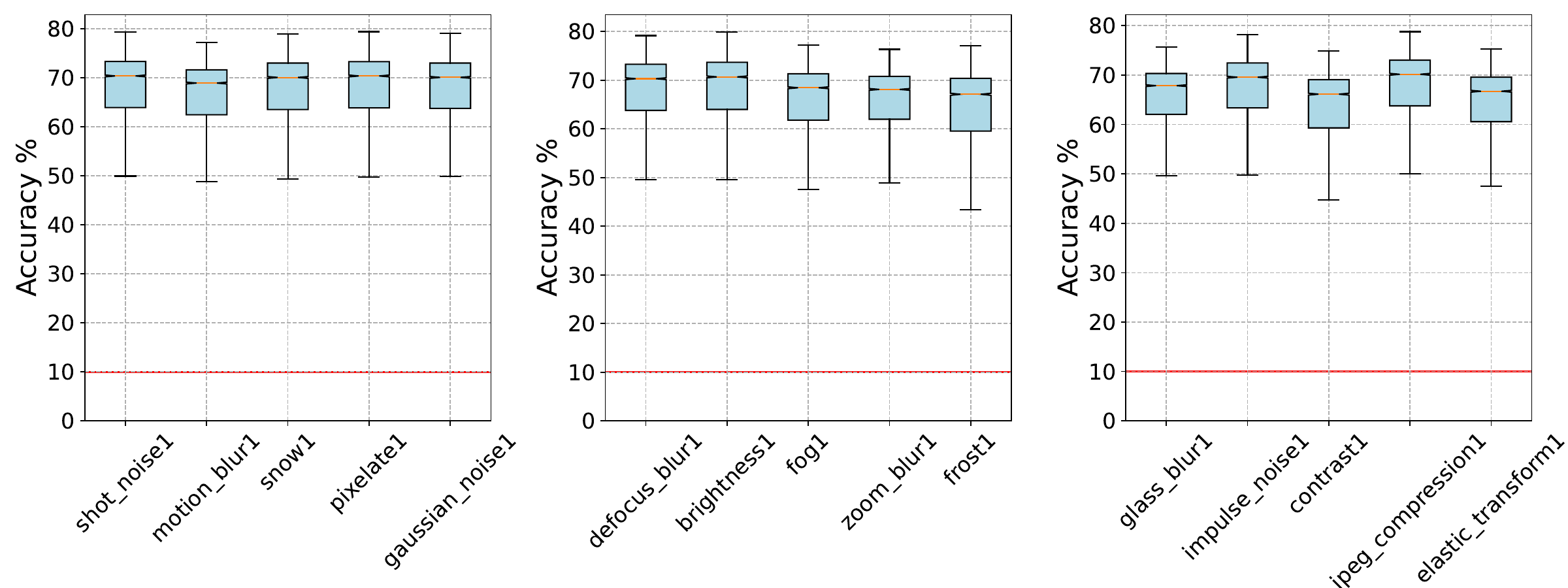}
\includegraphics[width=0.85\textwidth]{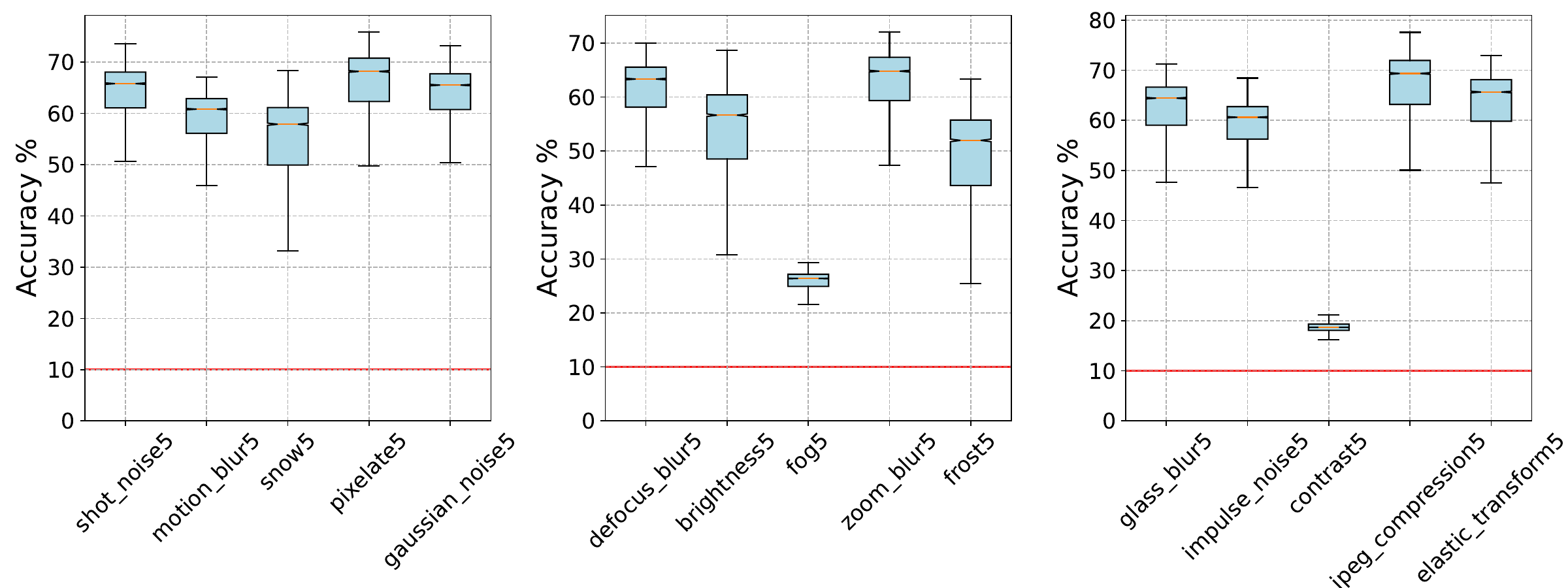}
\vspace{-3mm}
    \caption{
    \rebuttal{Boxplots for accuracy under various corruptions and severity levels (1 and 5) of all $6466$ architectures in the considered search space. Red line indicates the accuracy of a random guess.
    }}
    \label{fig:corruption}
\end{figure}
 \begin{table}[t]
\caption{\rebuttal{Spearman coefficient between various accuracies on CIFAR-10-C and adversarial robust accuracy on CIFAR-10. We can see that the adversarial robust accuracy has a relatively high correlation with all corruptions except for contrast corruptions.}}
\centering
{
{
\begin{tabular}{|c |c  c c c|} 
 \hline
 & PGD 3/255 & FGSM 3/255  &PGD 8/255 & FGSM 8/255 
  \\ [0.5ex] 
 \hline\hline
Shot noise&0.984 & 0.984 & 0.965 & 0.968\\
Motion blur&0.990 & 0.990 & 0.980 & 0.982\\
Snow&0.995 & 0.995 & 0.989 & 0.990\\
Pixelate&0.996 & 0.996 & 0.984 & 0.988\\
Gaussian noise&0.981 & 0.981 & 0.961 & 0.965\\
Defocus blur&0.993 & 0.993 & 0.983 & 0.986\\
Brightness&0.993 & 0.993 & 0.984 & 0.986\\
Fog&0.916 & 0.916 & 0.918 & 0.918\\
Zoom blur&0.992 & 0.992 & 0.978 & 0.982\\
Frost&0.991 & 0.991 & 0.985 & 0.987\\
Glass blur&0.993 & 0.993 & 0.981 & 0.984\\
Impulse noise&0.947 & 0.947 & 0.936 & 0.938\\
Contrast&-0.555 & -0.555 & -0.574 & -0.571\\
Jpeg compression&0.996 & 0.997 & 0.984 & 0.987\\
Elastic transform&0.995 & 0.995 & 0.984 & 0.987\\
 \hline
\end{tabular}
}
\label{tab:corrupt}
}
\vspace{-4mm}
\end{table}

\begin{table*}[h]
    \caption{
Result of different NAS algorithms on our \benchmark.}
\label{tab:nas}
    \centering
     \footnotesize
    \begin{tabular}{c|c|ccccc}
    \toprule
            &  
\makecell{Method} 
              &  
   \makecell{Clean}
              &  
 \makecell{FGSM\\(3/255)}
              &  
    \makecell{PGD\\(3/255)}
              &  
 \makecell{FGSM\\(8/255)}
              &  
 \makecell{PGD\\(8/255)}
       \\
\midrule 
 & Optimal&  0.794&0.698&0.692&0.537&0.482 \\ 
\midrule
\multirow{3}{*}{Clean}
&Regularized Evolution  &0.791&0.693&0.688&0.530&0.476
\\
&Random Search  &0.779&0.682&0.676&0.520&0.470
\\
& Local Search   &0.796&0.697&0.692&0.533&0.478
\\
\midrule
\multirow{3}{*}{ FGSM  ($8/255$)  }
&Regularized Evolution  &0.790&0.693&0.688&0.532&0.478
\\
&Random Search  &0.774&0.679&0.674&0.521&0.471
\\
& Local Search   &0.794&0.695&0.689&0.535&0.481
\\
\midrule
\multirow{3}{*}{ PGD ($8/255$)}
&Regularized Evolution  &0.789&0.692&0.686&0.531&0.478
\\
&Random Search  &0.771&0.676&0.671&0.520&0.471
\\
& Local Search   &0.794&0.695&0.689&0.535&0.481
\\
\midrule
\multirow{4}{*}{Train-free}
&KNAS  &0.767 &0.675 &0.67 & 0.521& 0.472\\
&EigenNAS  & 0.766 & 0.674 & 0.668&  0.52&   0.471\\
&NASI  & 0.666&0.571&0.567&0.410&0.379 \\
&NASWOT & 0.766 & 0.674 & 0.668&  0.52&   0.471\\
\midrule
\multirow{2}{*}{Differentiable search}
&AdvRush  & 0.587&0.492&0.489&0.352&0.330 
\\
& DARTS & 0.332&0.286&0.285&0.215&0.213
\\
\bottomrule
    \end{tabular}
    \label{tab:main_label_appendix}
\end{table*}

\begin{figure}[h]
\centering\includegraphics[width=1\textwidth]{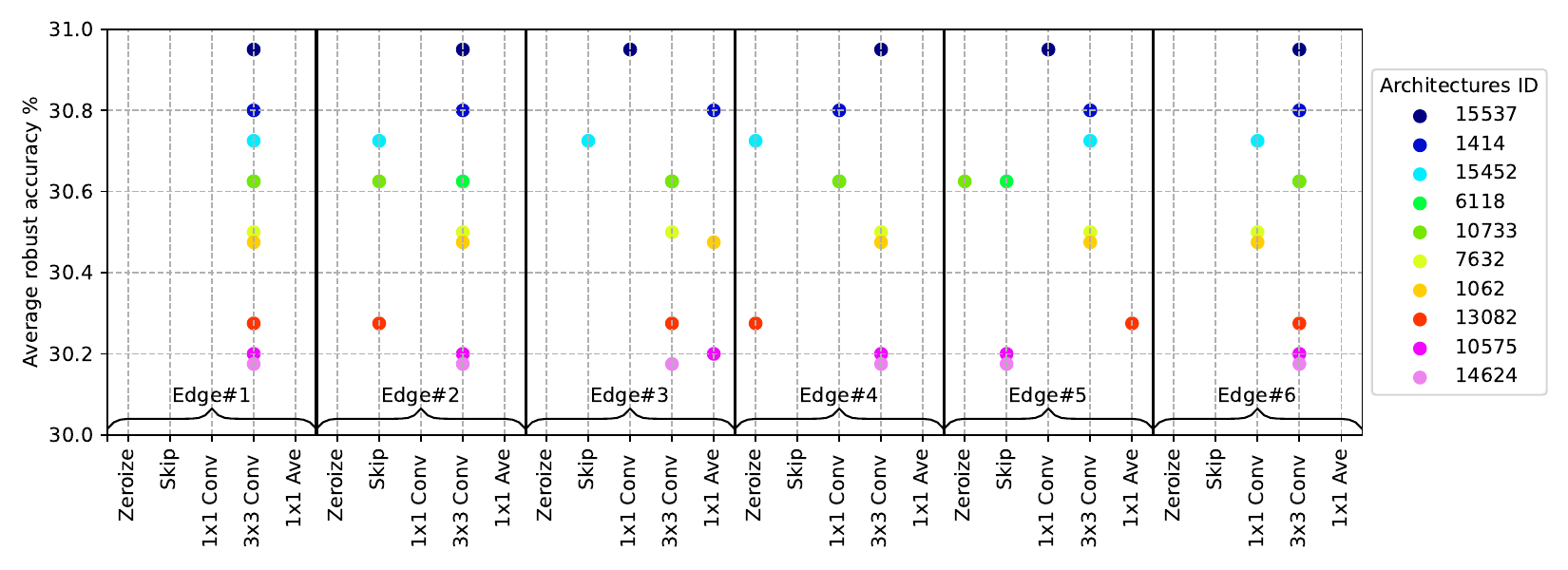}
\caption{
The operators of each edge in the top-10 (average robust accuracy of PGD-3/255, PGD-8/255 FGSM-3/255 FGSM-8/255) architecture in \citet{jung2023neural}. We can see a notable difference from the one present in \cref{fig:toparch} in terms of architectures id and selected nodes. Additionally, the best architecture in \benchmark{} has a higher accuracy ($\approx 60\%$) than that of the best architecture ($\approx 30\%$) in \citet{jung2023neural}.
 }
\label{fig:toparch_iclr}
\end{figure}
\section{Limitations}
\label{sec:appendix_limitation}
Our benchmark and theoretical results have a primary limitation, as they currently pertain exclusively to Fully Connected Neural Networks (FCNN) and Convolutional Neural Networks (CNN), with no exploration of their applicability to state-of-the-art vision Transformers. Furthermore, our study is constrained by a search space encompassing only 6466 architectures, leaving room for future research to consider a more expansive design space.

Our analysis framework could be adapted for a vision Transformer but the analysis for Transformer is still difficult because of different training procedures and more complicated architectures. The optimization guarantees of Transformer under a realistic setting (e.g., scaling, architecture framework) in theory is still an open question. Secondly, similar to the NTK-based analysis in the community, we study the neural network in the linear regime, which can not fully explain the success of practical neural networks~\citep{allen2019convergence,yang2020feature}. However, NAS can still benefit from the NTK result. For example, the empirical NTK-based NAS algorithm allows for feature learning by taking extra metrics~\citep{mok2022demystifying}. Additionally, though our theoretical result uses the NTK tool under the lazy training regime~\citep{chizat2019lazy}, NAS can still benefit from the NTK result.
For example, the empirical NTK-based NAS algorithm allows for feature learning by taking extra metrics, see \citet{mok2022demystifying} for details.

\section{Societal impact}
\label{sec:appendix_societalimpact}
Firstly, this work releases a NAS benchmark that includes the evaluation result of $6466$ architectures under adversarial training, which facilitates researchers to efficiently and fairly evaluate various NAS algorithms. Secondly, this work provides the generalization guarantee for searching robust architecture for the NAS community, which paves the way for practitioners to develop train-free NAS algorithms.
Hence, we do not expect any negative societal bias from this perspective. However, we encourage the community to explore further the general societal bias from machine learning models into real-world applications. 

\end{document}